\newtheorem{theorem}{Theorem}[section]
\newtheorem{lemma}{Lemma}[section]
\newtheorem{remark}{Remark}[section]
\newcommand{\onefunc}{\mathds{1}}
\newcommand{\stam}[1]{}
\newcommand{\bx}{\mathbf{x}}
\newcommand{\bw}{\mathbf{w}}
\newcommand{\bu}{\mathbf{u}}
\newcommand{\bv}{\mathbf{v}}
\newcommand{\bh}{\mathbf{h}}
\newcommand{\bbeta}{\boldsymbol{\beta}}
\newcommand{\btheta}{{\boldsymbol{\theta}}}
\newcommand{\cl}{{\cal L}}
\newcommand{\cn}{{\cal N}}
\DeclareMathOperator*{\argmax}{argmax}
\DeclareMathOperator*{\argmin}{argmin}
\newcommand{\reals}{{\mathbb R}}
\newcommand{\zero}{{\mathbf{0}}}
\newcommand{\diag}{\mathrm{diag}}
\newcommand{\inner}[1]{\langle #1 \rangle}
\newcommand{\norm}[1]{\left\|#1\right\|}
\newcommand{\snorm}[1]{\|#1\|} 
\title{On Margin Maximization in Linear and ReLU Networks}
\author{
	Gal Vardi\thanks{Toyota Technological Institute at Chicago and the Hebrew University of Jerusalem, \texttt{galvardi@ttic.edu}. Work done while the author was at the Weizmann Institute of Science}
	\and
	Ohad Shamir\thanks{Weizmann Institute of Science, Israel, \texttt{ohad.shamir@weizmann.ac.il}}
	\and
	Nathan Srebro\thanks{Toyota Technological Institute at Chicago, \texttt{nati@ttic.edu}}	
}
\date{}
\begin{document}

\maketitle

\begin{abstract}
The implicit bias of neural networks has been extensively studied in recent years. \cite{lyu2019gradient} showed that in homogeneous networks trained with the exponential or the logistic loss, gradient flow converges to a KKT point of the max margin problem in parameter space. However, that leaves open the question of whether this point will generally be an actual optimum of the max margin problem. In this paper, we study this question in detail, for several neural network architectures involving linear and ReLU activations. Perhaps surprisingly, we show that in many cases, the KKT point is not even a \emph{local} optimum of the max margin problem. On the flip side, we identify 
multiple 
settings where a local or global optimum can be guaranteed. 
\end{abstract}

\section{Introduction}

A central question in the theory of deep learning is how neural networks generalize even when trained without any explicit regularization, and when there are far more learnable parameters than training examples. In such optimization problems there are many solutions that label the training data correctly, and gradient descent seems to prefer solutions that generalize well \citep{zhang2016understanding}. Hence, it is believed that gradient descent induces an {\em implicit bias} \citep{neyshabur2014search,neyshabur2017exploring}, and characterizing this bias has been a subject of extensive research in recent years. 

A main focus in the theoretical study of implicit bias is on \emph{homogeneous} neural networks. These are networks where scaling the parameters by any factor $\alpha>0$ scales the predictions by $\alpha^L$ for some constant $L$. For example, fully-connected and convolutional ReLU networks without bias terms are homogeneous. \cite{lyu2019gradient} proved that in linear and ReLU homogeneous networks trained with the exponential or the logistic loss, if gradient flow converges to a sufficiently small loss\footnote{They also assumed directional convergence, but \citep{ji2020directional} later showed that this assumption is not required.}, then the direction to which the parameters of the network converge can be characterized as a first order stationary point (KKT point) of the maximum margin problem in parameter space. 
Namely,  the problem of minimizing the $\ell_2$ norm of the parameters under the constraints that each training example is classified correctly with margin at least $1$.
They also showed that this KKT point satisfies necessary conditions for optimality. However, the conditions are not known to be sufficient even for local optimality. It is analogous to showing that some unconstrained optimization problem converges to a point with gradient zero, without proving that it is either a global or a local minimum.
Thus, the question of when gradient flow maximizes the margin remains open. Understanding margin maximization may be crucial for explaining generalization in deep learning, and it might allow us to utilize margin-based generalization bounds for neural networks.

In this work we consider several architectures of homogeneous neural networks with linear and ReLU activations, and study whether the aforementioned KKT point is guaranteed to be a global optimum of the maximum margin problem, a local optimum, or neither. Perhaps surprisingly, our results imply that in many cases, such as depth-$2$ fully-connected ReLU networks and depth-$2$ diagonal linear networks, the KKT point may not even be a \emph{local} optimum of the maximum-margin problem. On the flip side, we identify multiple settings where a local or global optimum can be guaranteed.

We now describe our results in a bit more detail. We denote by $\cn$ the class of neural networks without bias terms, where the weights in each layer might have an arbitrary sparsity pattern, and weights might be shared\footnote{See Section~\ref{sec:preliminaries} for the formal definition.}. The class $\cn$ contains, for example, \emph{convolutional networks}. Moreover, we denote by $\cn_{\text{no-share}}$ the subclass of $\cn$ that contains only networks without shared weights, such as \emph{fully-connected networks} and \emph{diagonal networks} (cf. \cite{gunasekar2018bimplicit,yun2020unifying}). We describe our main results below, and also summarize them in  Tables~\ref{table:1} and~\ref{table:2}.

\vspace{0.3cm}
{\bf Fully-connected networks:}
\begin{itemize}
	\item In linear fully-connected networks of any depth the KKT point is a global optimum\footnote{We note that margin maximization for such networks in predictor space is already known \citep{ji2020directional}. However, margin maximization in predictor space does not necessarily imply margin maximization in parameter space.}.
	
	\item In fully-connected depth-$2$ ReLU networks the KKT point may not even be a local optimum. Moreover, this negative result holds with constant probability over the initialization, i.e., there is a training dataset such that gradient flow with random initialization converges with positive probability to the direction of a KKT point which is not a local optimum.
\end{itemize}

{\bf Depth-$2$ networks in $\cn$:}
\begin{itemize}
	\item 
	In linear 
	networks with sparse weights, and specifically in diagonal networks,
	we show that
	the KKT point may not be a local optimum. 

	\item In our proof of the above negative result, the KKT point contains a neuron whose weights vector is zero. However, in practice gradient descent often converges to networks that do not contain such zero neurons. We show that for linear networks in $\cn_{\text{no-share}}$, if the KKT point has only non-zero weights vectors, then it is a global optimum. Thus, despite the above negative result, a reasonable assumption on the KKT point allows us to obtain a strong positive result.
	We also show 
	some implications of our results on margin maximization in predictor space for depth-$2$ diagonal linear networks (see Remark~\ref{rem:implications on predictor}).
	
	\item For ReLU networks in $\cn_{\text{no-share}}$, in order to obtain a positive result we need a stronger assumption. We show that if the KKT point is such that for every input in the dataset the input to every hidden neuron in the network is non-zero, then it is guaranteed to be a local optimum (but not necessarily a global optimum).
	
	\item 
	We prove that assuming the network does not have shared weights
	is indeed required in the above positive results, since for networks with shared weights (such as convolutional networks) they no longer hold.
\end{itemize}

\newpage
{\bf Deep networks in $\cn$:}
\begin{itemize}
	
	\item 
	We discuss the difficulty in extending our positive results to deeper networks. Then,
	we study a weaker notion of margin maximization: maximizing the margin for each layer separately. For linear networks of depth $m \geq 2$ in $\cn$ (including networks with shared weights), we show that the KKT point is a global optimum of the per-layer maximum margin problem. For ReLU networks the KKT point may not even be a local optimum of this problem, but under the assumption on non-zero inputs to all neurons it is a local optimum.
\end{itemize}

As detailed above, we consider several different settings, and the results vary dramatically between the settings. Thus, our results draw a somewhat complicated picture. 
Overall, our negative results show that even in very simple settings gradient flow does not maximize the margin even locally, and we believe that these results should be used as a starting point for studying which assumptions are required for proving margin maximization. 
Our positive results indeed show that under certain reasonable assumptions gradient flow maximizes the margin (either locally or globally). Also, the notion of per-layer margin maximization which we consider suggests another path for obtaining positive results on the implicit bias.

In the paper, our focus is on understanding what can be guaranteed for the KKT convergence points specified in \cite{lyu2019gradient}. Accordingly, in most of our negative results, the construction assumes some specific initialization of gradient flow, and does not quantify how ``likely'' they are to be reached under some random initialization. An exception is our negative result for depth-$2$ fully-connected ReLU networks (Theorem~\ref{thm:depth 2 relu negative}), which holds with constant probability under reasonable random initializations. Understanding whether this can be extended to the other settings we consider is an interesting problem for future research.

\stam{
Finally, we consider \emph{non-homogeneous networks}, for example, networks with skip connections or bias terms. \cite{lyu2019gradient} showed that a smoothed version of the \emph{normalized margin} is monotonically increasing when training homogeneous networks. They observed empirically that the normalized margin is monotonically increasing also when training non-homogeneous networks, but did not provide a proof for this phenomenon and left it as an open problem. We give an example for a simple non-homogeneous network where the normalized margin (as well as the smoothed margin) is strictly \emph{decreasing} (see Theorem~\ref{thm:non-homogeneous}).
}

Our paper is structured as follows: In Section \ref{sec:preliminaries} we provide necessary notations and definitions, and discuss relevant prior results. Additional related works are discussed in Appendix~\ref{app:more related}. In Sections~\ref{sec:fully-connected}, \ref{sec:depth 2} and~\ref{sec:deep} we state our results on fully-connected networks, depth-$2$ networks in $\cn$ and deep networks in $\cn$ respectively, and provide some proof ideas. 
All formal proofs are deferred to Appendix~\ref{app:proofs}. 

\begin{table}[t]
\begin{center}
\begin{tabular}{|m{10em} || m{5em} | m{5em}|} 
 \hline
  & Linear & ReLU  \\ [0.5ex] 
 \hline\hline
 Fully-connected & Global (Thm.~\ref{thm:deep positive linear})& Not local~$\text{      }$ (Thm.~\ref{thm:depth 2 relu negative})  \\ 
  \hline
 $\cn_{\text{no-share}}$ & Not local~$\text{      }$ (Thm.~\ref{thm:depth 2 linear negative})& Not local~$\text{      }$ (Thm.~\ref{thm:depth 2 relu negative}) \\
 \hline
 $\cn_{\text{no-share}}$ assuming non-zero weights vectors & Global~$\text{      }$ (Thm.~\ref{thm:depth 2 linear}) & Not local~$\text{      }$ (Thm.~\ref{thm:depth 2 linear}) \\
 \hline
  $\cn_{\text{no-share}}$ assuming non-zero inputs to all neurons  & Global (Thm.~\ref{thm:depth 2 linear})& Local,\;\;\; Not global~$\text{      }$ (Thm.~\ref{thm:depth 2 relu positive}) \\
  \hline
 $\cn$ assuming non-zero inputs to all neurons & Not local~$\text{      }$ (Thm.~\ref{thm:depth 2 cnn}) & Not local~$\text{      }$ (Thm.~\ref{thm:depth 2 cnn})  \\
   \hline
\end{tabular}
\end{center}
\caption{Results on depth-$2$ networks.}
\label{table:1}
\end{table}

\begin{table}[t]
\begin{center}
\begin{tabular}{|m{12em} || m{5em} | m{5em}|} 
 \hline
  & Linear & ReLU  \\ [0.5ex] 
 \hline\hline
 Fully-connected & Global (Thm.~\ref{thm:deep positive linear}) & Not local~$\text{      }$  (Thm.~\ref{thm:depth 2 relu negative}) \\ 
 \hline 
 $\cn_{\text{no-share}}$ assuming non-zero inputs to all neurons & Not local~$\text{      }$ (Thm.~\ref{thm:deep negative}) & Not local~$\text{      }$  (Thm.~\ref{thm:deep negative}) \\
 \hline
 $\cn$ - max margin for each layer separately   & Global (Thm.~\ref{thm:positive each layer linear}) & Not local~$\text{      }$  (Thm.~\ref{thm:negative each layer relu}) \\
 \hline
 $\cn$ - max margin for each layer separately, assuming non-zero inputs to all neurons  & Global (Thm.~\ref{thm:positive each layer linear}) & Local,\;\;\; Not global~$\text{      }$  (Thm.~\ref{thm:positive each layer relu}) \\
  \hline
\end{tabular}
\end{center}
\caption{Results on deep networks.}
\label{table:2}
\end{table}


\section{Preliminaries}
\label{sec:preliminaries}

\paragraph{Notations.}

We use bold-faced letters to denote vectors, e.g., $\bx=(x_1,\ldots,x_d)$. For $\bx \in \reals^d$ we denote by $\norm{\bx}$ the Euclidean norm.
We denote by $\onefunc(\cdot)$ the indicator function, for example $\onefunc(t \geq 5)$ equals $1$ if $t \geq 5$ and $0$ otherwise.
For an integer $d \geq 1$ we denote $[d]=\{1,\ldots,d\}$.

\paragraph{Neural networks.}

A {\em fully-connected neural network} $\Phi$ of depth $m \geq 2$ is parameterized by a collection $\btheta = [W^{(l)}]_{l=1}^m$ of weight matrices, such that for every layer $l \in [m]$ we have $W^{(l)} \in \reals^{d_l \times d_{l-1}}$. Thus, $d_l$ denotes the number of neurons in the $l$-th layer (i.e., the \emph{width} of the layer). We assume that $d_m=1$ and denote by $d:=d_0$ the input dimension. 
The neurons in layers $[m-1]$ are called \emph{hidden neurons}.
A fully-connected network computes a function $\Phi(\btheta; \cdot): \reals^d \to \reals$ defined recursively as follows. For an input $\bx \in \reals^d$ we set $\bh'_0=\bx$, and define for every $j \in [m-1]$ the input to the $j$-th layer as $\bh_j =W^{(j)} \bh'_{j-1}$, and the output of the $j$-th layer as $\bh'_j = \sigma(\bh_j)$, where $\sigma:\reals \to \reals$ is an activation function that acts coordinate-wise on vectors. Then, we define $\Phi(\btheta; \bx) = W^{(m)} \bh'_{m-1}$. Thus, there is no activation function in the output neuron. When considering depth-$2$ fully-connected networks we often use a parameterization  $\btheta = [\bw_1,\ldots,\bw_k,\bv]$ where $\bw_1,\ldots,\bw_k$ are the weights vectors of the $k$ hidden neurons (i.e., correspond to the rows of the first layer's weight matrix) and $\bv$ are the weights of the second layer.

We also consider neural networks where some weights can be missing or shared.
We define a class $\cn$ of networks that may contain sparse and shared weights as follows.
A network $\Phi$ in $\cn$ is parameterized by $\btheta = [\bu^{(l)}]_{l=1}^m$ where $m$ is the depth of $\Phi$, and $\bu^{(l)} \in \reals^{p_l}$ are the parameters of the $l$-th layer. We denote by $W^{(l)} \in \reals^{d_l \times d_{l-1}}$ the weight matrix of the $l$-th layer.
The matrix $W^{(l)}$ is described by the vector $\bu^{(l)}$, 
and a function $g_l:[d_l] \times [d_{l-1}] \to [p_l] \cup \{0\}$ as follows: $W^{(l)}_{i j} = 0$ if $g_l(i,j)=0$, and $W^{(l)}_{i j} = u_k$ if $g_l(i,j)=k>0$. 
Thus, the function $g_l$ represents the sparsity and weight-sharing pattern of the $l$-th layer, and the dimension $p_l$ of $\bu^{(l)}$ is the number of free parameters in the layer.
We denote by $d:=d_0$ the input dimension of the network and assume that the output dimension $d_m$ is $1$.
The function $\Phi(\btheta; \cdot): \reals^d \to \reals$ computed by the neural network is defined recursively by the weight matrices as in the case of fully-connected networks.
For example, convolutional neural networks are in $\cn$.
Note that the networks in $\cn$ do not have bias terms and do not allow weight sharing between different layers.
Moreover, we define a subclass $\cn_{\text{no-share}}$ of $\cn$, that contains networks without shared weights.
Formally, 
a network $\Phi$ is in $\cn_{\text{no-share}}$ 
if for every layer $l$ and every $k \in  [p_l]$ there is at most one  $(i,j) \in [d_l] \times [d_{l-1}] $ such that $g_l(i,j)=k$. Thus, networks in $\cn_{\text{no-share}}$ might have sparse weights, but do not allow shared weights.
For example, diagonal networks
(defined below)
and fully-connected networks are 
in $\cn_{\text{no-share}}$.

A {\em diagonal neural network} is a network 
in $\cn_{\text{no-share}}$
such that the weight matrix of each layer is diagonal, except for the last layer. Thus, the network is parameterized by $\btheta =  [\bw_1,\ldots,\bw_m]$ where $\bw_j \in \reals^d$ for all $j \in [m]$, and it computes a function $\Phi(\btheta; \cdot): \reals^d \rightarrow \reals$ defined recursively as follows. For an input $\bx \in \reals^d$ set $\bh_0 = \bx$. For $j \in [m-1]$, the output of the $j$-th layer is $\bh_j = \sigma(\diag(\bw_j) \bh_{j-1})$. Then, we have $\Phi(\btheta; \bx) = \bw_m^\top \bh_{m-1}$.

In all the above definitions the parameters $\btheta$ of the neural networks are given by a collection of matrices or vectors. We often view $\btheta$ as the vector obtained by concatenating the matrices or vectors in the collection. Thus, $\norm{\btheta}$ denotes the $\ell_2$ norm of the vector $\btheta$.

The ReLU activation function is defined by $\sigma(z) = \max\{0,z\}$, and the linear activation is $\sigma(z)=z$. In this work we focus on ReLU networks (i.e., networks where all neurons have the ReLU activation) and on linear networks (where all neurons have the linear activation). 
We say that a network $\Phi$ is \emph{homogeneous} if there exists $L>0$ such that for every $\alpha>0$ and $\btheta,\bx$ we have $\Phi(\alpha \btheta; \bx) = \alpha^L \Phi(\btheta; \bx)$. 
Note that in our definition of 
the class $\cn$
we do not allow bias terms, and hence all linear and ReLU networks 
in $\cn$
are homogeneous, where $L$ is the depth of the network. 
All
networks considered in this work are homogeneous.

\paragraph{Optimization problem and gradient flow.}

Let $S = \{(\bx_i,y_i)\}_{i=1}^n \subseteq \reals^d \times \{-1,1\}$ be a binary classification training dataset. Let $\Phi$ be a neural network parameterized by $\btheta \in \reals^m$. 
For a loss function $\ell:\reals \to \reals$ the empirical loss of $\Phi(\btheta; \cdot)$ on the dataset $S$ is 
\begin{equation}
\label{eq:objective}
	\cl(\btheta) := \sum_{i=1}^n \ell(y_i \Phi(\btheta; \bx_i))~.
\end{equation} 
We focus on the exponential loss $\ell(q) = e^{-q}$ and the logistic loss $\ell(q) = \log(1+e^{-q})$.

We consider gradient flow on the objective given in Eq.~\ref{eq:objective}. This setting captures the behavior of gradient descent with an infinitesimally small step size. Let $\btheta(t)$ be the trajectory of gradient flow. Starting from an initial point $\btheta(0)$, the dynamics of $\btheta(t)$ is given by the differential equation $\frac{d \btheta(t)}{dt} = -\nabla \cl(\btheta(t))$.
Note that the ReLU function is not differentiable at $0$. Practical implementations of gradient methods define the derivative $\sigma'(0)$ to be some constant in $[0,1]$. We note that the exact value of $\sigma'(0)$ has no effect on our results.

\paragraph{Convergence to a KKT point of the maximum-margin problem.}

We say that a trajectory $\btheta(t)$ {\em converges in direction} to $\tilde{\btheta}$ if 
$\lim_{t \to \infty}\frac{\btheta(t)}{\norm{\btheta(t)}} = \frac{\tilde{\btheta}}{\norm{\tilde{\btheta}}}$.
Throughout this work we use the following theorem:

\begin{theorem}[Paraphrased from \cite{lyu2019gradient,ji2020directional}]
\label{thm:known KKT}
	Let $\Phi$ be a homogeneous linear or ReLU neural network. Consider minimizing either the exponential or the logistic loss over a binary classification dataset $ \{(\bx_i,y_i)\}_{i=1}^n$ using gradient flow. Assume that there exists time $t_0$ such that $\cl(\btheta(t_0))<1$, namely, $\Phi$ classifies every $\bx_i$ correctly. Then, gradient flow converges in direction to a first order stationary point (KKT point) of the following maximum margin problem in parameter space:
\begin{equation}
\label{eq:optimization problem}
	\min_\btheta \frac{1}{2} \norm{\btheta}^2 \;\;\;\; \text{s.t. } \;\;\; \forall i \in [n] \;\; y_i \Phi(\btheta; \bx_i) \geq 1~.
\end{equation}
Moreover, $\cl(\btheta(t)) \to 0$ and $\norm{\btheta(t)} \to \infty$ as $t \to \infty$.
\end{theorem}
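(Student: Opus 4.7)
The plan is to follow the Lyu--Li strategy built around a smoothed version of the normalized margin, and then invoke the Ji--Telgarsky refinement to remove the directional-convergence assumption. Concretely, define the smoothed margin $\tilde{\gamma}(\btheta) = -\log(\cl(\btheta))/\norm{\btheta}^L$, where $L$ is the order of homogeneity of $\Phi$. The first milestone is to show that once $\cl(\btheta(t_0))<1$, the quantity $\tilde{\gamma}(\btheta(t))$ is monotonically non-decreasing in $t$. This is the core energy-type inequality of the argument and I expect to be the main technical obstacle.

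To establish that monotonicity, I would compute $\frac{d}{dt}\tilde{\gamma}(\btheta(t))$ along gradient flow and use two structural facts. The first is Euler's homogeneity identity $\langle \btheta, \nabla_\btheta \Phi(\btheta;\bx)\rangle = L\,\Phi(\btheta;\bx)$, which holds for homogeneous linear/ReLU networks almost everywhere (one must handle Clarke subdifferentials at the non-smooth points of ReLU, which is where the most care is needed). The second is a Cauchy--Schwarz-type inequality relating $\norm{\nabla \cl(\btheta)}^2$ to $\langle \btheta, \nabla \cl(\btheta)\rangle \cdot \sum_i \ell'(y_i\Phi)\,y_i\Phi/\norm{\btheta}^{2L}$, which one gets by decomposing the gradient into radial and tangential components. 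Combining these, the derivative of $\tilde{\gamma}$ comes out non-negative, and is strictly positive unless the margin-normalized direction already satisfies a stationarity condition. From this I would derive, as corollaries, that $\cl(\btheta(t))\to 0$ (since a bounded norm with $\cl<1$ would force $\tilde{\gamma}$ to keep pushing loss down without room) and that $\norm{\btheta(t)}\to\infty$ (since the loss goes to $0$ only if some $y_i\Phi(\btheta;\bx_i)\to\infty$, and by homogeneity this requires $\norm{\btheta}\to\infty$).

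The second milestone is to extract the KKT conditions at a limit direction $\tilde\btheta$. Here I would parameterize the gradient flow by $\log\norm{\btheta(t)}$ and track the normalized direction $\bar\btheta(t) := \btheta(t)/\norm{\btheta(t)}$. Because $\nabla \cl(\btheta) = -\sum_i \ell'(y_i\Phi)\,y_i\nabla\Phi$ and the coefficients $-\ell'(y_i\Phi)\geq 0$ play the role of KKT multipliers, one shows that along the flow the direction $\bar\btheta$ asymptotically aligns with a conic combination of the per-example gradients $y_i\nabla_\btheta\Phi(\btheta;\bx_i)$ corresponding to the margin-attaining examples (complementary slackness emerges because $\ell'(y_i\Phi)$ decays exponentially in $y_i\Phi$, so examples with margin strictly above the minimum contribute vanishingly). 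The stationarity and primal-feasibility conditions then read off directly, using homogeneity to rescale $\tilde\btheta$ to be feasible for the margin-$1$ constraint.

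Finally, for directional convergence itself (rather than only convergence of $\cl$ and $\norm{\btheta}$), I would invoke the Ji--Telgarsky argument: they show that $\bar\btheta(t)$ has bounded trajectory length on the unit sphere by estimating $\int_{t_0}^\infty \snorm{\frac{d}{dt}\bar\btheta(t)}\,dt$ via the above monotonicity of $\tilde\gamma$ and a Kurdyka--Łojasiewicz-style desingularization for the (semi-algebraic) normalized objective. Bounded length on a compact sphere implies convergence. The hard part throughout is the nonsmoothness at the ReLU kinks, which forces all gradient manipulations to be carried out in the Clarke subdifferential calculus and to justify chain rules along the flow (for which one relies on the o-minimal/semi-algebraic structure of ReLU networks and the existence of an arc-wise smooth selection, as invoked in Lyu--Li).
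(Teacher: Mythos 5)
This theorem is not proved in the paper: it is stated as ``Paraphrased from \cite{lyu2019gradient,ji2020directional}'' and used as a black box, so there is no in-paper proof to compare against. That said, your sketch is a faithful high-level reconstruction of the cited works' strategy --- the smoothed normalized margin $\tilde{\gamma}$ and its monotonicity along gradient flow via Euler's homogeneity identity and a radial/tangential decomposition of the gradient (Lyu--Li), the identification of $-\ell'(y_i\Phi)$ as emergent KKT multipliers with complementary slackness from the exponential tail, the handling of ReLU nondifferentiability in the Clarke/o-minimal framework, and the bounded-length / Kurdyka--\L{}ojasiewicz argument for directional convergence (Ji--Telgarsky) --- so it matches the approach the paper is implicitly relying on.
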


In the case of ReLU networks, Problem~\ref{eq:optimization problem} is non-smooth. Hence, the KKT conditions are defined using the Clarke subdifferential, which is a generalization of the derivative for non-differentiable functions. See Appendix~\ref{app:KKT} for a formal definition. 
We note that  \cite{lyu2019gradient} proved the above theorem under the assumption that $\btheta$ converges in direction, and \cite{ji2020directional} showed that such a directional convergence occurs and hence this assumption is not required. 

\cite{lyu2019gradient} also showed that the KKT conditions of Problem~\ref{eq:optimization problem} are necessary for 
optimality. 
In convex optimization problems, necessary KKT conditions are also sufficient for global optimality. However, the constraints in Problem~\ref{eq:optimization problem} are highly non-convex. 
Moreover, the standard method for proving that necessary KKT conditions are sufficient for \emph{local} optimality, is by showing that the KKT point satisfies certain \emph{second order sufficient conditions (SOSC)} (cf. \cite{ruszczynski2011nonlinear}). However, even when $\Phi$ is a linear neural network it is not known when such conditions hold.
Thus, the KKT conditions of Problem~\ref{eq:optimization problem} are not known to be sufficient even for local optimality. 

A linear network with weight matrices  $W^{(1)}, \ldots, W^{(m)}$ computes a linear predictor $\bx \mapsto \inner{\bbeta,\bx}$ where $\bbeta = W^{(m)} \cdot \ldots \cdot W^{(1)}$. Some prior works studied the implicit bias of linear networks in the \emph{predictor space}. Namely, characterizing the vector $\bbeta$ from the aforementioned linear predictor. \cite{gunasekar2018bimplicit} studied the implications of margin maximization in the \emph{parameter space} on the implicit bias in predictor space. They showed that minimizing $\norm{\btheta}$ (under the constraints in Problem~\ref{eq:optimization problem}) implies: (1) Minimizing $\norm{\bbeta}_2$ for fully-connected linear networks; (2) Minimizing $\norm{\bbeta}_{2/L}$ for diagonal linear networks of depth $L$; (3) Minimizing $\snorm{\hat{\bbeta}}_{2/L}$ for linear convolutional networks of depth $L$ with full-dimensional convolutional filters, where $\hat{\bbeta}$ are the Fourier coefficients of $\bbeta$.
However, these implications may not hold if gradient flow converges to a KKT point which is not a global optimum of Problem~\ref{eq:optimization problem}.

For some classes of linear networks, positive results were obtained directly in predictor space, without assuming convergence to a global optimum of Problem~\ref{eq:optimization problem} in parameter space. Most notably, for fully-connected linear networks (of any depth), \cite{ji2020directional} showed that under the assumptions of Theorem~\ref{thm:known KKT},
gradient flow maximizes the $\ell_2$ margin in predictor space. Note that margin maximization in predictor space does not necessarily imply margin maximization in parameter space. 
Moreover, some results on the implicit bias in predictor space of linear convolutional networks with full-dimensional convolutional filters are given in  \cite{gunasekar2018bimplicit}. 
However, the architecture and set of assumptions are different than what we focus on.
See Appendix~\ref{app:more related} for a discussion on additional related work.

\stam{
\paragraph{Normalized margin}

We define the \emph{normalized margin} as follows:
\begin{equation}
\label{eq:normalized margin}
	\bar{\gamma}(\btheta) := \min_{i \in [n]} y_i \Phi \left(\frac{\btheta}{\norm{\btheta}}; \bx_i \right)~.
\end{equation}
If $\Phi$ is homogeneous then maximizing the normalized margin is equivalent to solving Problem~\ref{eq:optimization problem}, i.e., minimizing $\norm{\btheta}$ under the constraints (cf. \cite{lyu2019gradient}). 
}

\section{Fully-connected networks}
\label{sec:fully-connected}

First, we show that fully-connected linear networks of any depth converge in direction to a global optimum of Problem~\ref{eq:optimization problem}. 

\begin{theorem}
\label{thm:deep positive linear}
	Let $m \geq 2$ and let $\Phi$ be a depth-$m$ fully-connected linear network parameterized by $\btheta$. 
	Consider minimizing either the exponential or the logistic loss over a dataset $ \{(\bx_i,y_i)\}_{i=1}^n$ using gradient flow. Assume that there exists time $t_0$ such that $\cl(\btheta(t_0))<1$. Then, gradient flow converges in direction to a global optimum of Problem~\ref{eq:optimization problem}.
\end{theorem}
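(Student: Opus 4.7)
The plan is to combine three ingredients: the directional convergence in \emph{predictor} space established by \cite{ji2020directional}, the layer-balance conservation law for gradient flow on linear networks, and the closed-form minimum-norm factorization of a linear predictor as a product of $m$ matrices (implicit in \cite{gunasekar2018bimplicit}). Write $\bbeta(t)^\top := W^{(m)}(t)\cdots W^{(1)}(t)$ for the end-to-end predictor along the trajectory, and let $\bbeta^*$ denote the $\ell_2$ max-margin predictor, i.e.\ the minimizer of $\|\bbeta\|$ subject to $y_i\inner{\bbeta,\bx_i}\geq 1$ for all $i$.

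I would first reformulate Problem~\ref{eq:optimization problem} in predictor space. A standard calculation (writing each layer in SVD form and applying AM-GM on the singular values) shows that for any target $\bbeta$, the minimum of $\tfrac{1}{2}\|\btheta\|^2$ over factorizations $W^{(m)}\cdots W^{(1)} = \bbeta^\top$ equals $\tfrac{m}{2}\|\bbeta\|^{2/m}$, and is attained when every $W^{(l)}$ is rank $1$ with a common Frobenius norm $\|\bbeta\|^{1/m}$ and with matching adjacent singular vectors. Since $\|\bbeta\|^{2/m}$ is monotone in $\|\bbeta\|$, the global optimum of Problem~\ref{eq:optimization problem} equals $\tfrac{m}{2}\|\bbeta^*\|^{2/m}$.

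I then identify the end-to-end predictor $\tilde\bbeta$ of the KKT limit $\tilde\btheta$ furnished by Theorem~\ref{thm:known KKT}. Since each layer satisfies $W^{(l)}(t)/\|\btheta(t)\| \to \tilde W^{(l)}/\|\tilde\btheta\|$, the direction of $\bbeta(t)$ converges to that of $\tilde\bbeta$; by \cite{ji2020directional} it equals $\bbeta^*/\|\bbeta^*\|$, so $\tilde\bbeta = c\,\bbeta^*$ for some $c>0$. KKT feasibility of $\tilde\btheta$ gives $c\geq 1$, while stationarity combined with $\tilde\btheta\neq 0$ yields at least one active constraint $y_{i^*}\inner{\tilde\bbeta,\bx_{i^*}}=1$, which together with feasibility of $\bbeta^*$ forces $c=1$. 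Thus $\tilde\bbeta = \bbeta^*$.

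The main (and hardest) step is to show that $\tilde\btheta$ achieves the \emph{minimum-norm} factorization of $\tilde\bbeta$. I invoke the layer-balance conservation law: along gradient flow on a fully-connected linear network, $W^{(l+1)}(t)^\top W^{(l+1)}(t) - W^{(l)}(t) W^{(l)}(t)^\top$ is independent of $t$. Dividing by $\|\btheta(t)\|^2$ and letting $t\to\infty$ (using $\|\btheta(t)\|\to\infty$ from Theorem~\ref{thm:known KKT} together with the directional convergence of each layer) gives the exact equality $\tilde W^{(l+1)\top}\tilde W^{(l+1)} = \tilde W^{(l)}\tilde W^{(l)\top}$ for every $l\in[m-1]$. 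Since $\tilde W^{(m)}$ is a row vector it has rank at most $1$, and propagating this equation downward via SVD forces every $\tilde W^{(l)}$ to be rank $1$ with a common singular value $\sigma$ and with the left singular vector of layer $l$ matching the right singular vector of layer $l+1$; hence $\|\tilde\btheta\|^2 = m\sigma^2$ and $\|\tilde\bbeta\| = \sigma^m$, so $\tfrac{1}{2}\|\tilde\btheta\|^2 = \tfrac{m}{2}\|\bbeta^*\|^{2/m}$, the global optimum. The main obstacle in this step is upgrading the time-conserved (hence merely bounded) difference into an \emph{exact} equality in the limit, for which the $\|\btheta(t)\|^{-2}\to 0$ scaling is essential, and then tracking the SVD cascade from the last layer downward so that adjacent singular vectors align and the end-to-end product does not collapse.
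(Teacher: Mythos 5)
Your proof is correct and reaches the same conclusion, but the route differs from the paper's in two material places, so a brief comparison is warranted. The paper obtains the key structural facts about the limit $\tilde\btheta$ -- that each $\tilde W^{(l)}$ is rank one, that adjacent singular vectors align, and that the end-to-end direction is the $\ell_2$ max-margin predictor -- in one stroke by invoking Proposition~4.4 of \cite{ji2020directional}, and combines this with the scalar Frobenius-norm balance of Lemma~\ref{lem:from du fully connected} to get equal layer norms; it then finishes via a contradiction argument using submultiplicativity of the Frobenius norm plus an AM--GM lemma. You instead use only the predictor-space max-margin result from \cite{ji2020directional} (to pin the direction of $\tilde\bbeta$), and derive the rank-one aligned structure yourself from the full \emph{matrix} conservation law $W^{(l+1)\top}W^{(l+1)} - W^{(l)}W^{(l)\top}\equiv\text{const}$, dividing by $\|\btheta(t)\|^2\to\infty$ to upgrade the conserved offset to exact equality in the limit, then propagating an SVD cascade downward from the rank-$\le 1$ last layer (the product $\tilde\bbeta\neq 0$ is what rules out the all-zero degenerate case). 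You also front-load the AM--GM computation as a closed-form $\tfrac{m}{2}\|\bbeta\|^{2/m}$ for the minimum-norm factorization, so the final step is a direct value comparison rather than a contradiction. Both are sound. What you gain is that you need only the weaker predictor-space corollary from \cite{ji2020directional} and essentially re-derive the alignment portion of their Proposition~4.4 from a conservation law; the cost is that you must invoke the full matrix balance identity (due to Arora--Cohen--Hazan / Du et al.), which is strictly stronger than the scalar balance stated in the paper's Lemma~\ref{lem:from du fully connected}, so you should cite it explicitly. One small point to make tight: before you can speak of ``the direction of $\bbeta(t)$ converging to that of $\tilde\bbeta$'' you should note that $\tilde\bbeta\neq 0$, which follows from KKT feasibility ($y_i\inner{\tilde\bbeta,\bx_i}\geq 1$); you use this later for the SVD cascade but it is also needed earlier to identify $\tilde\bbeta$ with a positive multiple of $\bbeta^*$.
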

\begin{proof}[Proof idea (for the complete proof see Appendix~\ref{app:proof of deep positive linear})]
 	Building on results from \cite{ji2020directional} and \cite{du2018algorithmic}, we show that gradient flow converges in direction to a KKT point  $\tilde{\btheta} = [\tilde{W}^{(1)},\ldots,\tilde{W}^{(m)}]$ such that for every $l \in [m]$ we have $\tilde{W}^{(l)} = C \cdot \bv_l \bv_{l-1}^\top$, where $C>0$ and $\bv_0,\ldots,\bv_m$ are unit vectors (with $\bv_m = 1$). Also, we have $\snorm{\tilde{W}^{(m)} \cdot \ldots \cdot \tilde{W}^{(1)}} = C^m =  \min \norm{\bu} \text{ s.t. } y_i \bu^\top \bx_i \geq 1 \text{ for all } i \in [n]$.
	Then, we show that every $\btheta$ that satisfies these properties, and satisfies the constraints of Problem~\ref{eq:optimization problem}, is a global optimum.
	Intuitively, the most ``efficient" way (in terms of minimizing the parameters) to achieve margin $1$ with a linear fully-connected network, is by using a network such that the direction of its corresponding linear predictor maximizes the margin, the layers are balanced (i.e., have equal norms), and the weight matrices of the layers are aligned.
\end{proof}

We now prove that the positive result in Theorem~\ref{thm:deep positive linear} does not apply to ReLU networks.
We show that in depth-$2$ fully-connected ReLU networks gradient flow might converge in direction to a KKT point of Problem~\ref{eq:optimization problem} which is not even a local optimum. Moreover, it occurs under conditions holding with constant probability over reasonable random initializations.

\stam{
\begin{theorem}
\label{thm:depth 2 relu negative}
	Let $\Phi$ be a depth-$2$ fully-connected ReLU network with input dimension $2$ and two hidden neurons. Namely, for $\btheta = [\bw_1,\bw_2,\bv]$ and $\bx \in \reals^2$ we have $\Phi(\btheta; \bx) = \sum_{l=1}^2 v_l \sigma(\bw_l^\top \bx)$. Consider minimizing either the exponential or the logistic loss using gradient flow, where $\btheta(0)$ is drawn according to the Xavier initialization scheme, i.e., the weights vectors $\bw_1(0),\bw_2(0),\bv(0)$ are drawn i.i.d. from the Gaussian distribution $\cn\left(\zero,\frac{1}{2} I \right)$.
Then, there exists a dataset $\{(\bx_1,y_1),(\bx_2,y_2)\}$, 
such that 
with constant probability over the initialization $\btheta(0)$, 
gradient flow converges to zero loss, and converges in direction to a KKT point of Problem~\ref{eq:optimization problem} which is not a local optimum.
\end{theorem}
\begin{proof}[Proof idea (for the complete proof see Appendix~\ref{app:proof of depth 2 relu negative})]
	Consider the dataset  $\{(\bx_1,y_1),(\bx_2,y_2)\}$ where $\bx_1 = \left(1,\frac{1}{4}\right)^\top$, $\bx_2 = \left(-1,\frac{1}{4}\right)^\top$, and $y_1=y_2=1$.
	Suppose that the initialization $\btheta(0)$ is such that  for every $i \in \{1,2\}$ we have $\inner{\bw_1(0),\bx_i}>0$ and $\inner{\bw_2(0),\bx_i}<0$. Thus, the first hidden neuron is active for both inputs, and the second hidden neuron is not active. 
	By analyzing the dynamics of gradient flow on the given dataset, we show that it converges to zero loss, and converges in direction to a KKT point $\tilde{\btheta}$ such that $\tilde{\bw}_1 = (0,2)^\top$, $\tilde{v}_1 = 2$, $\tilde{\bw}_2 = \zero$, and $\tilde{v}_2=0$. Note that $\tilde{\bw}_2 = \zero$ and $\tilde{v}_2=0$ since $\bw_2(t), v_2(t)$ remain constant during the training and $\lim_{t \to \infty}\norm{\btheta(t)}=\infty$. See Figure~\ref{fig:negative relu} for an illustration.
	Then, we show that for every $0<\epsilon<1$ there exists some $\btheta'$ such that $\snorm{\btheta' - \tilde{\btheta}} \leq \epsilon$, $\btheta'$ satisfies $y_i \Phi(\btheta'; \bx_i) \geq 1$ for every $i \in \{1,2\}$, and $\snorm{\btheta'} < \snorm{\tilde{\btheta}}$. Such $\btheta'$ is obtained from $\tilde{\btheta}$ by slightly changing $\tilde{\bw}_1$, $\tilde{\bw}_2$, and $\tilde{v}_2$. Thus, by using the second hidden neuron, which is not active in $\tilde{\btheta}$, we can obtain a solution $\btheta'$ with smaller norm.
\end{proof}
}

\begin{theorem}
\label{thm:depth 2 relu negative}
	Let $\Phi$ be a depth-$2$ fully-connected ReLU network with input dimension $2$ and two hidden neurons. Namely, for $\btheta = [\bw_1,\bw_2,\bv]$ and $\bx \in \reals^2$ we have $\Phi(\btheta; \bx) = \sum_{l=1}^2 v_l \sigma(\bw_l^\top \bx)$. Consider minimizing either the exponential or the logistic loss using gradient flow. 
	Consider the dataset  $\{(\bx_1,y_1),(\bx_2,y_2)\}$ where $\bx_1 = \left(1,\frac{1}{4}\right)^\top$, $\bx_2 = \left(-1,\frac{1}{4}\right)^\top$, and $y_1=y_2=1$.
	Assume that the initialization $\btheta(0)$ is such that  for every $i \in \{1,2\}$ we have $\inner{\bw_1(0),\bx_i}>0$ and $\inner{\bw_2(0),\bx_i}<0$. Thus, the first hidden neuron is active for both inputs, and the second hidden neuron is not active. Also, assume that $v_1(0) > 0$.
	Then, gradient flow converges to zero loss, and converges in direction to a KKT point of Problem~\ref{eq:optimization problem} which is not a local optimum.
\end{theorem}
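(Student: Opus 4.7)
My plan proceeds in two stages: first, identify the KKT limiting direction by reducing the dynamics to a depth-$2$ linear scalar network; then, exhibit an explicit perturbation that is feasible, arbitrarily close to this limit, and has strictly smaller norm.

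For the first stage, observe that since $\inner{\bw_2(0),\bx_i} < 0$ strictly for $i \in \{1,2\}$, both terms $\sigma(\bw_2^\top \bx_i)$ vanish on an open neighborhood of $(\bw_2(0),v_2(0))$, so $\nabla_{\bw_2}\cl = 0$ and $\partial \cl / \partial v_2 = 0$ there. A standard self-preservation argument then yields $(\bw_2(t),v_2(t)) \equiv (\bw_2(0),v_2(0))$ for all $t \geq 0$: neuron $2$ is frozen and never activates. As long as $\inner{\bw_1(t),\bx_i} > 0$ for both $i$, we have $\Phi(\btheta(t);\bx_i) = v_1(t)\inner{\bw_1(t),\bx_i}$, and the dynamics on $(\bw_1,v_1)$ coincides with gradient flow on a depth-$2$ linear scalar network. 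Using the depth-$2$ balance invariant $\|\bw_1(t)\|^2 - v_1(t)^2 \equiv \text{const}$, together with the monotonicities $\dot v_1 > 0$ and $\dot w_{1,2} > 0$ (the gradient drives $\bw_1$ along a positive combination of $\bx_1,\bx_2$, which share positive second coordinate), one verifies that $\bw_1(t)$ stays in the doubly-active cone $\{\bw : w_2 > 4|w_1|\}$ for all $t$. Applying Theorem~\ref{thm:deep positive linear} to the reduced linear net then gives $\cl(\btheta(t)) \to 0$ and directional convergence of $(\bw_1,v_1)$ to the unique balanced KKT representative of the reduced margin problem. Since the max-margin linear predictor for $\{(\bx_1,+1),(\bx_2,+1)\}$ is $\bbeta^\star \propto (0,1)$, this representative is $(\tilde\bw_1,\tilde v_1) = ((0,2),2)$. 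Combined with boundedness of $(\bw_2(t),v_2(t))$ and $\|\btheta(t)\| \to \infty$ from Theorem~\ref{thm:known KKT}, the directional limit in parameter space is $\tilde\btheta = [(0,2),\zero,2,0]$, with $\|\tilde\btheta\|^2 = 8$; KKT is verified directly with $\lambda_1 = \lambda_2 = 2$.

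For the second stage, for each small $\delta > 0$ I would set
\[
\btheta'_\delta := \bigl[\bw'_1 = (-\delta,2),\; v'_1 = \tfrac{2}{1+2\delta},\; \bw'_2 = (r_\delta,0),\; v'_2 = r_\delta\bigr],\qquad r_\delta := 2\sqrt{\tfrac{\delta}{1+2\delta}}~.
\]
The rotated $\bw'_2$ activates on $\bx_1$ but not on $\bx_2$, and the choices make both margin constraints tight: $v'_1(\tfrac{1}{2}+\delta) = 1$ on $\bx_2$, and $v'_1(\tfrac{1}{2}-\delta) + r_\delta^2 = 1$ on $\bx_1$. A direct computation yields
\[
\|\btheta'_\delta\|^2 = \delta^2 + 4 + \tfrac{4}{(1+2\delta)^2} + \tfrac{8\delta}{1+2\delta} = 8 - 8\delta + O(\delta^2)~,
\]
whereas $\|\btheta'_\delta - \tilde\btheta\|^2 = O(\delta) \to 0$. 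Hence for every $\epsilon > 0$ we can pick $\delta$ small enough that $\btheta'_\delta$ is feasible, lies in the $\epsilon$-ball around $\tilde\btheta$, and satisfies $\|\btheta'_\delta\| < \|\tilde\btheta\|$; so $\tilde\btheta$ is not a local optimum. Intuitively, the wasted second neuron, once slightly rotated to activate on $\bx_1$ only, takes over a sliver of $\bx_1$'s margin burden, letting neuron $1$ shrink.

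The main technical obstacle is the first stage --- specifically, the claim that neuron $1$'s activation pattern is preserved throughout the entire trajectory. ReLU non-smoothness forces this to be argued globally in time rather than only locally, though the monotonicity-plus-balance-invariant strategy above makes the argument essentially routine. The second stage is then a closed-form verification once $\tilde\btheta$ has been pinned down.
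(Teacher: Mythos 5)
Your proposal is correct and follows the same overall strategy as the paper's proof (neuron~$2$ is frozen, neuron~$1$ stays active, identify the directional limit $\tilde\btheta=[(0,2),\zero,2,0]$, exhibit a cheaper feasible perturbation), but with two genuine variants. For identifying the limit, you invoke Theorem~\ref{thm:deep positive linear} on the reduced depth-$2$ scalar linear network traced out by $(\bw_1,v_1)$; the paper instead does the calculation from scratch, using Lemma~\ref{lem:balanced tilde} to get $|\tilde v_1|=\|\tilde\bw_1\|$ and the monotonicity bounds to deduce the direction $(0,1)$. Your route is cleaner once the reduction is justified, but note that Theorem~\ref{thm:deep positive linear} (like Theorem~\ref{thm:known KKT}) is stated under the hypothesis $\cl(\btheta(t_0))<1$ for some $t_0$, whereas Theorem~\ref{thm:depth 2 relu negative} makes no such assumption and instead asserts convergence to zero loss as part of its conclusion. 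You therefore still need the paper's separate argument that $w_1[2](t)$ grows at a bounded-below rate while $w_1[1](t)$ stays in $[0,w_1[1](0)]$ and $v_1(t)\geq v_1(0)>0$, which forces the margin above $1$ in finite time; without this the reduction has nothing to plug into. Relatedly, the cone-preservation step you flag as ``essentially routine'' does require a continuous-induction argument, and the balance invariant is actually not what does the work there --- what matters (as in the paper) is that $\dot w_1[1]\leq 0$ whenever $w_1[1]\geq 0$, because $\ell'$ is increasing and $\inner{\bw_1,\bx_1}-\inner{\bw_1,\bx_2}=2w_1[1]$. Your second-stage perturbation $\btheta'_\delta$ is a different explicit construction from the paper's ($\bw_2'$ activates on $\bx_1$ rather than $\bx_2$, and you shrink $v_1'$ rather than $\bw_1'$) but I checked it: both margins equal exactly $1$, $\|\btheta'_\delta\|^2=8-8\delta+O(\delta^2)<8$, and $\|\btheta'_\delta-\tilde\btheta\|=O(\sqrt{\delta})$, so it serves the same purpose.
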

\begin{proof}[Proof idea (for the complete proof see Appendix~\ref{app:proof of depth 2 relu negative})]
	By analyzing the dynamics of gradient flow on the given dataset, we show that it converges to zero loss, and converges in direction to a KKT point $\tilde{\btheta}$ such that $\tilde{\bw}_1 = (0,2)^\top$, $\tilde{v}_1 = 2$, $\tilde{\bw}_2 = \zero$, and $\tilde{v}_2=0$. Note that $\tilde{\bw}_2 = \zero$ and $\tilde{v}_2=0$ since $\bw_2(t), v_2(t)$ remain constant during the training and $\lim_{t \to \infty}\norm{\btheta(t)}=\infty$. See Figure~\ref{fig:negative relu} for an illustration.
	Then, we show that for every $0<\epsilon<1$ there exists some $\btheta'$ such that $\snorm{\btheta' - \tilde{\btheta}} \leq \epsilon$, $\btheta'$ satisfies $y_i \Phi(\btheta'; \bx_i) \geq 1$ for every $i \in \{1,2\}$, and $\snorm{\btheta'} < \snorm{\tilde{\btheta}}$. Such $\btheta'$ is obtained from $\tilde{\btheta}$ by slightly changing $\tilde{\bw}_1$, $\tilde{\bw}_2$, and $\tilde{v}_2$. Thus, by using the second hidden neuron, which is not active in $\tilde{\btheta}$, we can obtain a solution $\btheta'$ with smaller norm.
\end{proof}

\begin{figure}[t]
	\centering
	\includegraphics[scale=0.6]{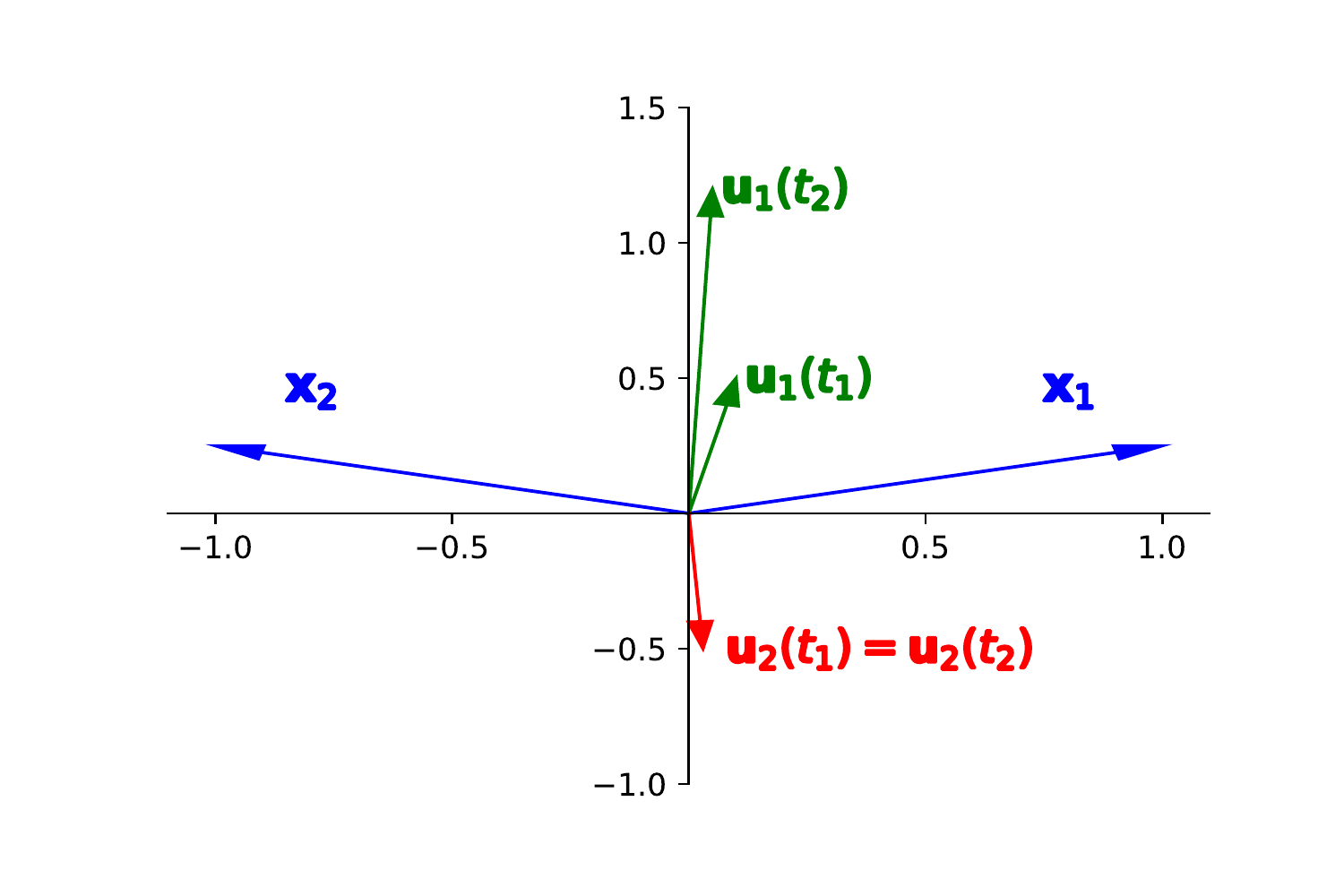}
	\caption{For time $t$ we denote $\bu_i(t) = v_i(t) \bw_i(t)$. Above we illustrate $\bu_1(t),\bu_2(t)$ for times $t_1 < t_2$.
	As $t \to \infty$ we have $\norm{\bu_1(t)} \to \infty$ and $\bu_1$ converges in direction to $(0,1)$. The vector $\bu_2$ remains constant during the training. 
	Hence $\frac{\bu_2(t)}{\norm{\btheta(t)}} \to \zero$.} 
	\label{fig:negative relu}
\end{figure}

We note that the assumption on the initialization in the above theorem holds with constant probability for standard initialization schemes (e.g., Xavier initialization).


\begin{remark}[Unbounded sub-optimality]
	By choosing appropriate inputs $\bx_1,\bx_2$ in the setting of Theorem~\ref{thm:depth 2 relu negative}, it is not hard to show that the sub-optimality of the KKT point w.r.t. the global optimum can be arbitrarily large. Namely, for every large $M>0$ we can choose a dataset where the angle between $\bx_1$ and $\bx_2$ is sufficiently close to $\pi$, such that $\frac{\snorm{\tilde{\btheta}}}{\snorm{\btheta^*}} \geq M$, where $\tilde{\btheta}$ is a KKT point to which gradient flow converges, and $\btheta^*$ is a global optimum of Problem~\ref{eq:optimization problem}.
	Indeed, as illustrated in Figure~\ref{fig:negative relu}, if one neuron is active on both inputs and the other neuron is not active on any input, then the active neuron needs to be very large in order to achieve margin $1$, while if each neuron is active on a single input then we can achieve margin $1$ with much smaller parameters.
	We note that such unbounded sub-optimality can be obtained also in other negative results in this work (in Theorems~\ref{thm:depth 2 linear negative}, \ref{thm:depth 2 relu positive}, \ref{thm:depth 2 cnn} and~\ref{thm:positive each layer relu}).
\end{remark}

\begin{remark}[Robustness to small perturbations]
	  Theorem~\ref{thm:depth 2 relu negative} holds even if we slightly perturb the inputs $\bx_1,\bx_2$. Thus, it is not sensitive to small changes in the dataset. We note that such robustness to small perturbations can be shown also for the negative results in Theorems~\ref{thm:depth 2 linear negative}, \ref{thm:depth 2 relu positive}, \ref{thm:deep negative} and~\ref{thm:positive each layer relu}.
\end{remark}

\section{Depth-$2$ networks in $\cn$}
\label{sec:depth 2}

In this section we study depth-$2$ linear and ReLU networks in $\cn$. We first show that already for linear networks in $\cn_{\text{no-share}}$ (more specifically, for diagonal networks) gradient flow may not converge even to a local optimum.

\begin{theorem}
\label{thm:depth 2 linear negative}
	Let $\Phi$ be a depth-$2$ linear or ReLU diagonal neural network parameterized by $\btheta=[\bw_1,\bw_2]$. 
	Consider minimizing either the exponential or the logistic loss using gradient flow. There exists a dataset $\{(\bx,y)\} \subseteq \reals^2 \times \{-1,1\}$ of size $1$ and an initialization $\btheta(0)$, such that gradient flow converges to zero loss, and converges in direction to a KKT point $\tilde{\btheta}$ of Problem~\ref{eq:optimization problem} which is not a local optimum. 
\end{theorem}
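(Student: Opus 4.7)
The plan is to construct a single-example dataset on which gradient flow is trapped in an invariant subspace and converges to a KKT point that leaves unused a coordinate of the input with larger magnitude, where that coordinate would in fact give a strictly cheaper feasible solution. Concretely, I take $\bx = (1, 2)^\top$ with $y = 1$ and initialize $\bw_1(0) = \bw_2(0) = (1, 0)^\top$, so the first coordinate is balanced and active while the second coordinate is identically zero.

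First I would verify that the subspace $\{w_{1,2} = w_{2,2} = 0\}$ is invariant under gradient flow. For the linear case, $\partial \cl/\partial w_{1,2}$ is proportional to $w_{2,2}$ and $\partial \cl/\partial w_{2,2}$ to $w_{1,2}$, so both vanish in the subspace. For the ReLU case, $\partial \Phi/\partial w_{1,2} = w_{2,2}\,\sigma'(w_{1,2} x_2)\, x_2$ still carries a factor of $w_{2,2} = 0$, and $\partial \Phi/\partial w_{2,2} = \sigma(w_{1,2} x_2) = \sigma(0) = 0$, so invariance holds regardless of the value chosen for $\sigma'(0)$. On the surviving first coordinate, $w_{1,1}(t)\, x_1 > 0$ throughout, so ReLU is smooth there and the balancedness identity $w_{1,1}(t)^2 = w_{2,1}(t)^2$ is preserved by the flow. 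Hence $w_{1,1}(t) = w_{2,1}(t) > 0$ for all $t$, and both diverge as $\cl(\btheta(t)) \to 0$; since $\cl(\btheta(0)) < 1$, Theorem~\ref{thm:known KKT} applies and yields directional convergence to a KKT point of Problem~\ref{eq:optimization problem}. The direction is $(1, 0, 1, 0)/\sqrt{2}$, and the corresponding rescaled KKT point of unit margin is $\tilde{\btheta} = (1, 0, 1, 0)$, with $\norm{\tilde{\btheta}}^2 = 2$.

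For the main step, I would exhibit $\btheta' := (1 - \epsilon^2,\, \epsilon,\, 1 - \epsilon^2,\, \epsilon)$ for small $\epsilon > 0$. All entries are positive, so in the ReLU case $\sigma$ acts as the identity on $\diag(\bw_1')\bx$ and the prediction is identical to the linear one. A direct computation gives $\Phi(\btheta'; \bx) = (1 - \epsilon^2)^2 + 2\epsilon^2 = 1 + \epsilon^4 \geq 1$, so $\btheta'$ is feasible, while $\norm{\btheta'}^2 = 2(1 - \epsilon^2)^2 + 2\epsilon^2 = 2 - 2\epsilon^2 + 2\epsilon^4 < 2 = \norm{\tilde{\btheta}}^2$, with $\norm{\btheta' - \tilde{\btheta}} = O(\epsilon) \to 0$. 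Hence $\tilde{\btheta}$ admits strictly better feasible perturbations arbitrarily nearby, proving it is not even a local minimum. The delicate point is choosing the perturbation correctly: moving $w_{1,1}$ by a first-order amount would cost $\Theta(\epsilon)$ margin, which the dormant coordinate cannot compensate using only $O(\epsilon)$ mass; the sweet spot is to move $w_{1,1}$ by $O(\epsilon^2)$ and activate the dormant coordinate at scale $\epsilon$, which is profitable exactly because the input weights across a coordinate combine multiplicatively while contributing additively to the $\ell_2$ norm.
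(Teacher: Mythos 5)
Your proof is correct and follows essentially the same path as the paper: identical dataset, identical initialization, the same invariant-subspace argument, and a perturbation that differs from the paper's $\bw'_1 = \bw'_2 = (\sqrt{1-\epsilon},\sqrt{\epsilon/2})^\top$ only by a reparameterization of $\epsilon$. One small caveat: your closing heuristic (``moving $w_{1,1}$ by a first-order amount would cost $\Theta(\epsilon)$ margin, which the dormant coordinate cannot compensate using only $O(\epsilon)$ mass'') is a red herring, since a first-order move of $w_{1,1}$ combined with a $\Theta(\sqrt{\epsilon})$ activation of the dormant coordinate works perfectly well and is exactly what the paper does; the real point is simply that the dormant coordinate sees a larger input ($x_2 > x_1$), so trading margin onto it is strictly profitable.
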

\begin{proof}[Proof idea (for the complete proof see Appendix~\ref{app:proof of depth 2 linear negative})]
	Let $\bx = (1,2)^\top$ and $y=1$. Let $\btheta(0)$ such that $\bw_1(0) = \bw_2(0) = (1,0)^\top$. 
	Recalling that the diagonal network computes the function $\bx \mapsto (\bw_1 \circ \bw_2)^\top \bx$ (where $\circ$ is the entry-wise product), we see that the second coordinate remains inactive during training.
	It is not hard to show that gradient flow converges in direction to the KKT point $\tilde{\btheta}$ with $\tilde{\bw}_1 = \tilde{\bw}_2 = (1,0)^\top$. However, it is not a local optimum, since for every small $\epsilon>0$ the parameters $\btheta' = [\bw'_1,\bw'_2]$ with $\bw'_1 = \bw'_2 = \left(\sqrt{1-\epsilon}, \sqrt{\frac{\epsilon}{2}}\right)^\top$ satisfy the constraints of Problem~\ref{eq:optimization problem}, and we have $\snorm{\btheta'} < \snorm{\tilde{\btheta}}$.
\end{proof}

By Theorem~\ref{thm:depth 2 relu negative} fully-connected ReLU networks may not converge to a local optimum, and by Theorem~\ref{thm:depth 2 linear negative} linear (and ReLU) networks with sparse weights may not converge to a local optimum. In the proofs of both of these negative results, gradient flow converges in direction to a KKT point such that one of the weights vectors of the hidden neurons is zero. However, in practice gradient descent often converges to a network that does not contain such disconnected neurons. Hence, a natural question is whether the negative results hold also in networks that do not contain neurons whose weights vector is zero. 
In the following theorem we show that in linear networks such an assumption allows us to obtain a positive result. Namely, in depth-$2$ linear networks in $\cn_{\text{no-share}}$, if gradient flow converges in direction to a KKT point of Problem~\ref{eq:optimization problem} that satisfies this condition, then it is guaranteed to be a global optimum. However, we also show that in ReLU networks assuming that all neurons have non-zero weights is not sufficient.


\begin{theorem}
\label{thm:depth 2 linear}
We have:
\begin{enumerate}
	\item 
	Let $\Phi$ be a depth-$2$ linear neural network in $\cn_{\text{no-share}}$ parameterized by $\btheta$. 	
	Consider minimizing either the exponential or the logistic loss over a dataset $ \{(\bx_i,y_i)\}_{i=1}^n$ using gradient flow. Assume that there exists time $t_0$ such that $\cl(\btheta(t_0))<1$,
and let $\tilde{\btheta}$ be the KKT point of Problem~\ref{eq:optimization problem} such that $\btheta(t)$ converges to $\tilde{\btheta}$ in direction (such $\tilde{\btheta}$ exists by Theorem~\ref{thm:known KKT}).
	Assume that in the network parameterized by $\tilde{\btheta}$ all hidden neurons have non-zero incoming weights vectors.
	Then, $\tilde{\btheta}$ is a global optimum of Problem~\ref{eq:optimization problem}.
	
	\item 
	Let $\Phi$ be a fully-connected depth-$2$ ReLU network with input dimension $2$ and $4$ hidden neurons parameterized by $\btheta$. 
	Consider minimizing either the exponential or the logistic loss using gradient flow. 
	There exists a dataset and an initialization $\btheta(0)$, such that gradient flow converges to zero loss, and converges in direction to a KKT point $\tilde{\btheta}$ of Problem~\ref{eq:optimization problem}, which is not a local optimum, and in the network parameterized by $\tilde{\btheta}$ all hidden neurons have non-zero incoming weights.
\end{enumerate}
\end{theorem}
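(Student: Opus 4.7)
My plan is to extract structural information from the KKT conditions at $\tilde\btheta$ and then match it against a convex relaxation of the max-margin problem. Write $\tilde\btheta=[\tilde\bw_1,\ldots,\tilde\bw_k,\tilde\bv]$, where each hidden row $\tilde\bw_j\in\reals^d$ is supported on a fixed subset $S_j\subseteq[d]$ determined by the first-layer sparsity pattern. Stationarity of the max-margin KKT conditions yields multipliers $\lambda_1,\ldots,\lambda_n\geq 0$ (with complementary slackness for the margin constraints) such that, writing $\bq:=\sum_i\lambda_iy_i\bx_i$,
\[
\tilde\bw_j=\tilde v_j\,(\bq)_{S_j},\qquad \tilde v_j=\tilde\bw_j^\top\bq\qquad\text{for every }j.
\]
Since $\tilde\bw_j\neq 0$ forces $\tilde v_j\neq 0$, substituting the first identity into the second gives $\|(\bq)_{S_j}\|=1$ and the balancedness relation $\|\tilde\bw_j\|=|\tilde v_j|$. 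The realized linear predictor $\bbeta:=\sum_j \tilde v_j\tilde\bw_j=\sum_j \tilde v_j^2 (\bq)_{S_j}$ then admits a support-respecting decomposition $\bbeta=\sum_j\bbeta^{(j)}$ with $\bbeta^{(j)}:=\tilde v_j^2(\bq)_{S_j}$, $\mathrm{supp}(\bbeta^{(j)})\subseteq S_j$, $\|\bbeta^{(j)}\|=\tilde v_j^2$, and $\sum_j\|\bbeta^{(j)}\|=\tfrac12\|\tilde\btheta\|^2$.

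I would then show that $(\{\bbeta^{(j)}\},\{\lambda_i\})$ is a primal--dual optimal pair for the convex relaxation
\[
\min_{\{\bbeta^{(j)}\}}\ \sum_j\|\bbeta^{(j)}\|\qquad\text{s.t.}\qquad \mathrm{supp}(\bbeta^{(j)})\subseteq S_j,\ \ y_i\Bigl(\sum_j\bbeta^{(j)}\Bigr)^\top\bx_i\geq 1.
\]
Its Lagrangian dual imposes $\lambda_i\geq 0$ and $\|(\bq)_{S_j}\|\leq 1$, and a direct computation gives $\bbeta^\top\bq=\sum_j\tilde v_j^2=\sum_i\lambda_i$, so weak and strong duality both hold with common value $\tfrac12\|\tilde\btheta\|^2$. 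Finally, for any $\btheta'=[\bw'_1,\ldots,\bw'_k,\bv']$ feasible for Problem~\ref{eq:optimization problem}, setting $\bbeta'^{(j)}:=v'_j\bw'_j$ gives a feasible point of the convex problem, and
\[
\tfrac12\|\btheta'\|^2=\tfrac12\sum_j\bigl(\|\bw'_j\|^2+(v'_j)^2\bigr)\ \geq\ \sum_j|v'_j|\cdot\|\bw'_j\|\ \geq\ \sum_j\|\bbeta'^{(j)}\|\ \geq\ \tfrac12\|\tilde\btheta\|^2,
\]
where the first inequality is AM--GM. This yields $\|\btheta'\|\geq\|\tilde\btheta\|$, proving that $\tilde\btheta$ is a global optimum.

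\textbf{Part 2.} Here I would produce an explicit two-dimensional construction in the spirit of Theorem~\ref{thm:depth 2 relu negative}. The dataset and initialization are chosen so that (i) every one of the four hidden neurons is strictly active on a non-empty subset of the training inputs at initialization, (ii) each activation cell is preserved throughout gradient flow (so that the dynamics is linear inside each cell), and (iii) all four balanced rays $(\bw_j,v_j)$ grow unboundedly in norm at comparable rates, so that the $\ell_2$-normalized limit $\tilde\btheta$ has every weight vector non-zero. To refute local optimality, I would then exhibit a small perturbation of $\tilde\btheta$ that redistributes the contributions of the neurons to the margin constraints --- adjusting magnitudes along the neurons' own balanced rays so that activation cells are preserved and margins remain satisfied --- and whose first-order effect on $\|\btheta\|^2$ is strictly negative, following the same blueprint as the perturbation constructed in the proof of Theorem~\ref{thm:depth 2 relu negative}.

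\textbf{Main obstacle.} For Part 1 the delicate step is closing the duality gap between the original non-convex problem and the convex relaxation; the crucial input is the \emph{equality} $\|(\bq)_{S_j}\|=1$ (rather than the mere dual-feasibility inequality $\|(\bq)_{S_j}\|\leq 1$), which comes from the stationarity condition in $\tilde v_j$ together with the assumption $\tilde v_j\neq 0$, and which provides exactly the complementary slackness needed on the dual side. For Part 2 the main difficulty is engineering a dataset and initialization under which gradient flow genuinely converges in direction to a KKT point with no neuron normalized away, \emph{and} the resulting configuration is strictly sub-optimal in a direction that stays inside its activation cell --- which requires simultaneous control of activation patterns, growth rates across neurons, and the precise structure of the KKT point's sub-optimality.
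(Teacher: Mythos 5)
Your argument is correct and essentially equivalent to the paper's: both reduce to the convex group-lasso-type problem over the per-neuron products $\bu_j = v_j\bw_j$ restricted to their supports, establish optimality there, and then pass back to $\btheta$ via the AM--GM inequality $\tfrac12(\|\bw_j\|^2 + v_j^2) \geq |v_j|\|\bw_j\|$. There is, however, one genuinely nicer feature in your version worth flagging. The paper obtains the balancedness relation $\|\tilde\bw_j\| = |\tilde v_j|$ from the \emph{gradient flow dynamics} (a conservation law along trajectories, Lemma~\ref{lem:balanced tilde}), which ties the conclusion to the specific way $\tilde\btheta$ was reached. You instead derive it purely from the KKT stationarity conditions: from $\tilde\bw_j = \tilde v_j(\bq)_{S_j}$ and $\tilde v_j = \tilde\bw_j^\top\bq$ together with $\tilde\bw_j\neq 0$ you get $\tilde v_j\neq 0$, hence $\|(\bq)_{S_j}\|=1$, hence $\|\tilde\bw_j\| = |\tilde v_j|$. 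This is a clean simplification and it actually proves a slightly stronger statement: \emph{any} KKT point of Problem~\ref{eq:optimization problem} whose hidden weight vectors are all non-zero is a global optimum, with no appeal to the history of gradient flow. The paper verifies the convex problem's KKT conditions directly while you phrase it via weak/strong duality; for a convex program with affine constraints these are the same thing, so no material difference there.

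\textbf{Part 2.} Here your submission is a plan, not a proof. You describe the properties the dataset and initialization should satisfy --- every neuron strictly active on a subset of inputs, activation cells preserved under gradient flow, all neurons' parameters growing at comparable rates, and a descent direction inside the common activation cell --- but you never exhibit any of them. The actual content of Part~2 \emph{is} this construction: the paper takes the four-point ``cross'' dataset $\bx_1=(0,1),\ \bx_2=(1,0),\ \bx_3=(0,-1),\ \bx_4=(-1,0)$ with $y_i=1$ and initialization $\bw_i(0)=2\bx_i$, $v_i(0)=2$, verifies that gradient flow preserves the symmetry $\bw_j(t)=\alpha(t)\bx_j$, $v_j(t)=\alpha(t)$, identifies the limit $\tilde\bw_j=\bx_j$, $\tilde v_j=1$, and then constructs the explicit perturbation $\bw'_1=(\epsilon',1-\epsilon')$, $\bw'_2=(1-\epsilon',-\epsilon')$, etc., which stays feasible and strictly lowers the norm. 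You explicitly name this as ``the main difficulty'' but do not resolve it, so Part~2 as written is incomplete and would need the concrete dataset, initialization, dynamics analysis, and perturbation filled in.
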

\begin{proof}[Proof idea (for the complete proof see Appendix~\ref{app:proof of depth 2 linear})]
We give here the proof idea for part (1).
Let $k$ be the width of the network. For every $j \in [k]$ we denote by $\bw_j$ the incoming weights vector to the $j$-th hidden neuron, and by $v_j$ the outgoing weight. Let $\bu_j = v_j \bw_j$. We consider an optimization problem over the variables $\bu_1,\ldots,\bu_k$ where the objective is to minimize $\sum_{j \in [k]}\norm{\bu_j}$ and the constrains correspond to the constraints of Problem~\ref{eq:optimization problem}. Let $\tilde{\btheta} = [\tilde{\bw}_1,\ldots,\tilde{\bw}_k,\tilde{\bv}]$ be the KKT point of Problem~\ref{eq:optimization problem} to which gradient flow converges in direction. For every $j \in [k]$ we denote $\tilde{\bu}_j = \tilde{v}_j \tilde{\bw}_j$. We show that $\tilde{\bu}_1,\ldots,\tilde{\bu}_k$ satisfy the KKT conditions of the aforementioned problem. Since the objective there is convex and the constrains are affine, then it is a global optimum. Finally, we show that it implies global optimality of $\tilde{\btheta}$.
\end{proof}

\begin{remark}[Implications on margin maximization in predictor space for diagonal linear networks]
\label{rem:implications on predictor}
Theorems~\ref{thm:depth 2 linear negative} and~\ref{thm:depth 2 linear} imply analogous results on diagonal linear networks also in predictor space. 
As
we discussed in Section~\ref{sec:preliminaries}, 
\cite{gunasekar2018bimplicit} showed that in depth-$2$ diagonal linear networks, minimizing $\norm{\btheta}_2$ under the constraints in Problem~\ref{eq:optimization problem} implies minimizing $\norm{\bbeta}_1$, where $\bbeta$ is the corresponding linear predictor.
Theorem~\ref{thm:depth 2 linear negative} can be easily extended to predictor space, namely, gradient flow on depth-$2$ linear diagonal networks might converge to a KKT point $\tilde{\btheta}$ of Problem~\ref{eq:optimization problem}, such that the corresponding linear predictor $\tilde{\bbeta}$ is not an optimum of the following problem:
	\begin{equation}
	\label{eq:diagonal predictor problem}
		\argmin_{\bbeta} \norm{\bbeta}_{1}  \;\;\;\; \text{s.t. } \;\;\; \forall i \in [n] \;\; y_i \inner{\bbeta,\bx_i} \geq 1~.
	\end{equation}
Moreover, by combining part (1) of Theorem~\ref{thm:depth 2 linear} with the result from \cite{gunasekar2018bimplicit}, we deduce that if gradient flow on a depth-$2$ diagonal linear network converges in direction to a KKT point $\tilde{\btheta}$ of Problem~\ref{eq:optimization problem} with non-zero weights vectors, then the corresponding linear predictor is a global optimum of Problem~\ref{eq:diagonal predictor problem}. 
\end{remark}

We argue that since in practice gradient descent often converges to networks without zero-weight neurons, then part (1) of Theorem~\ref{thm:depth 2 linear} gives a useful positive result for depth-$2$ linear networks. However,
by part (2) of Theorem~\ref{thm:depth 2 linear}, 
this assumption
is not sufficient for obtaining a positive result in the case of ReLU networks. Hence, we now consider a stronger assumption, namely, that the KKT point $\tilde{\btheta}$ is such that for every $\bx_i$ in the dataset the inputs to all hidden neurons in the computation $\Phi(\tilde{\btheta}; \bx_i)$ are non-zero. 
In the following theorem we show that in depth-$2$ ReLU networks, if 
the KKT point 
satisfies this condition
then it is guaranteed to be a local optimum of Problem~\ref{eq:optimization problem}. However, even under this condition it is not necessarily a global optimum. The proof is given in Appendix~\ref{app:proof of depth 2 relu positive} and uses ideas from the previous proofs, with some required modifications. 

\begin{theorem}
\label{thm:depth 2 relu positive}
	Let $\Phi$ be a depth-$2$ ReLU network in $\cn_{\text{no-share}}$
	parameterized by $\btheta$. 
	Consider minimizing either the exponential or the logistic loss over a dataset $\{(\bx_i,y_i)\}_{i=1}^n$ using gradient flow. Assume that there exists time $t_0$ such that $\cl(\btheta(t_0))<1$, and let $\tilde{\btheta}$ be the KKT point of Problem~\ref{eq:optimization problem} such that $\btheta(t)$ converges to $\tilde{\btheta}$ in direction (such $\tilde{\btheta}$ exists by Theorem~\ref{thm:known KKT}). 
	Assume that for every $i \in [n]$ the inputs to all hidden neurons in the computation $\Phi(\tilde{\btheta}; \bx_i)$ are non-zero. 
	Then, $\tilde{\btheta}$ is a local optimum of Problem~\ref{eq:optimization problem}. However, it may not be a global optimum, even if the network $\Phi$ is fully connected. 
\end{theorem}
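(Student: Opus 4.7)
The plan is to adapt the convex-reformulation argument from the proof of Theorem~\ref{thm:depth 2 linear} to the ReLU setting, using the non-zero-input assumption to freeze the activation pattern in a neighborhood of $\tilde{\btheta}$. Since each $\tilde{\bw}_j^\top \bx_i \neq 0$, continuity yields an open neighborhood $U$ of $\tilde{\btheta}$ on which the activation pattern $A_i := \{j : \bw_j^\top \bx_i > 0\}$ is constant and equals the pattern at $\tilde{\btheta}$. On $U$, $\Phi(\btheta;\bx_i) = \sum_{j \in A_i} v_j \bw_j^\top \bx_i$, so each margin constraint is a smooth bilinear constraint of the same form as in the linear-network case, and the Clarke subdifferential at $\tilde{\btheta}$ collapses to a single gradient.

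Following the proof of Theorem~\ref{thm:depth 2 linear}, I introduce the change of variables $\bu_j := v_j \bw_j$ and consider the convex program $\min \sum_j \|\bu_j\|$ subject to the affine constraints $y_i \sum_{j \in A_i} \bu_j^\top \bx_i \geq 1$. To show that $\tilde{\bu}_j := \tilde{v}_j \tilde{\bw}_j$ solves this program, I rewrite the KKT relations for $\tilde{\btheta}$: with multipliers $\lambda_i \geq 0$, one has $\tilde{\bw}_j = \tilde{v}_j \sum_{i: j \in A_i} \lambda_i y_i \bx_i$ and $\tilde{v}_j = \sum_{i: j \in A_i} \lambda_i y_i \tilde{\bw}_j^\top \bx_i$. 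Multiplying, and invoking the per-neuron balancedness $\|\tilde{\bw}_j\| = |\tilde{v}_j|$ preserved by gradient flow on depth-$2$ homogeneous networks (which follows from $\frac{d}{dt}(\|\bw_j\|^2 - v_j^2) = 0$), yields $\tilde{\bu}_j/\|\tilde{\bu}_j\| = \sum_{i: j \in A_i} \lambda_i y_i \bx_i$. This is exactly the KKT condition of the convex program with the same multipliers $\lambda_i$. Moreover, the non-zero-input assumption forces $\tilde{\bw}_j \neq 0$, hence $\tilde{\bu}_j \neq 0$ as well, so the norm is differentiable at each $\tilde{\bu}_j$ and no subgradient boundary issue arises.

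Convexity then implies $\tilde{\bu}$ is a global minimum of the convex program, so $\sum_j \|\bu'_j\| \geq \sum_j \|\tilde{\bu}_j\|$ for every $\btheta' \in U$ satisfying the margin constraints. Combining with the AM-GM bound $\|\btheta'\|^2 = \sum_j (\|\bw'_j\|^2 + (v'_j)^2) \geq 2\sum_j \|\bu'_j\|$, together with the identity $\|\tilde{\btheta}\|^2 = 2\sum_j \|\tilde{\bu}_j\|$ (AM-GM equality at $\tilde{\btheta}$ by balancedness), gives $\|\btheta'\|^2 \geq \|\tilde{\btheta}\|^2$ throughout $U$, which is exactly local optimality of $\tilde{\btheta}$.

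For part~(2), I would reuse the dataset $\bx_1 = (1, 1/4)^\top$, $\bx_2 = (-1, 1/4)^\top$ with $y_1 = y_2 = 1$ from Theorem~\ref{thm:depth 2 relu negative}, but choose an initialization (symmetric about the $y$-axis, with second coordinates strictly dominating) so that every hidden neuron stays active on both examples throughout training. In that regime the ReLU network acts as a depth-$2$ fully-connected linear network, and by Theorem~\ref{thm:deep positive linear} gradient flow converges in direction to a KKT point realizing the minimum-norm linear predictor $\bbeta = (0, 4)$, giving $\|\tilde{\btheta}\|^2 = 2\|\bbeta\|_2 = 8$. However, the globally optimal ReLU solution assigns one neuron to each input ($\bw_j$ proportional to $\bx_j$ for $j=1,2$), and a direct computation with the constraints $17 a b/16 = 1$ yields squared norm $16/\sqrt{17} < 8$. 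Since the inputs to all hidden neurons at $\tilde{\btheta}$ equal $\pm\sqrt{2}/4 \neq 0$, the non-zero-input assumption holds and $\tilde{\btheta}$ is not globally optimal. The main obstacle will be verifying that the chosen initialization really keeps every neuron active on both examples along the entire trajectory; I expect to handle this via a symmetry/invariance argument that confines the weight updates to the region where both coordinates of every $\bw_j$ remain of the desired sign.
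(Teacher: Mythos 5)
Your Part (1) argument is essentially identical to the paper's proof: freeze the activation pattern $A_{ij}$ using the non-zero-input assumption, pass to the variables $\bu_j = v_j\bw_j$, show via the balancedness identity $\|\tilde{\bw}_j\| = |\tilde{v}_j|$ (which the paper establishes as Lemma~\ref{lem:balanced tilde} using $\frac{d}{dt}(\|\bw_j\|^2 - v_j^2) = 0$ and $\|\btheta(t)\|\to\infty$) that $\tilde{\bu}$ is a KKT point, hence a global optimum, of the affine-constrained convex problem, and transfer back via AM-GM. This is correct and matches the paper step for step, including the observation that the non-zero-input assumption both gives a constant activation pattern on a neighborhood and guarantees $\tilde{\bu}_j\neq\zero$ so the norm is smooth there.

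For Part (2) you take a genuinely different route. The paper (Lemma~\ref{lem:ReLU not global}) uses a three-point dataset $\{(1,\tfrac14),(-1,\tfrac14),(0,-1)\}$ with two hidden neurons driven to the directions $(0,1)$ and $(0,-1)$, and then explicitly exhibits a feasible $\btheta'$ of strictly smaller norm. You instead reuse the two-point dataset of Theorem~\ref{thm:depth 2 relu negative} but start all hidden neurons on the positive $y$-axis with $v_j>0$, so every ReLU stays active and the dynamics coincide with those of a depth-$2$ linear network, for which Theorem~\ref{thm:deep positive linear} gives $\|\tilde{\btheta}\|^2 = 2\|\bbeta^*\| = 8$; you then compare against a two-neuron ReLU solution aligned with $\bx_1,\bx_2$ of squared norm $16/\sqrt{17} < 8$. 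Your numbers check out ($ab = 16/17$ and AM-GM give each neuron squared-norm $8/\sqrt{17}$, and the cross-terms $\bw_j^\top\bx_{3-j} = -\tfrac{15}{16}a<0$ so the two ReLU solutions don't interfere). Your approach gives a slightly cleaner unity with Theorem~\ref{thm:depth 2 relu negative}'s dataset; the paper's buys an explicit description of $\tilde{\btheta}$ without appealing to the linear-network convergence theorem.

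The gap you flag — verifying the all-active region is invariant — is real but closes easily: take $\bw_j(0)=(0,b_j)$ with $b_j>0$ and $v_j(0)=b_j$. Then, with all $\bw_l$ on the positive $y$-axis and all $v_l>0$, $\Phi(\btheta;\bx_1)=\Phi(\btheta;\bx_2)$ by symmetry, so the residuals $\ell'(\cdot)$ agree and $\nabla_{\bw_j}\cl$ is proportional to $\bx_1+\bx_2 = (0,\tfrac12)$; hence $\bw_j$ never leaves the $y$-axis, $v_j$ only increases, and every $\sigma'(\bw_j^\top\bx_i)=1$ for all time. This also makes your "$\pm\sqrt2/4$" a slip — with this initialization all hidden-neuron inputs at $\tilde{\btheta}$ are strictly positive, not of both signs — but the only thing needed is that they are non-zero, which holds. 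You should also note explicitly that the limit from the linear dynamics is in fact a KKT point of Problem~\ref{eq:optimization problem} for the ReLU network (this is guaranteed by Theorem~\ref{thm:known KKT} since you are running gradient flow on the ReLU loss and the loss goes to zero), so the non-global example really is of the kind the theorem addresses.
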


Note that in all the above theorems we do not allow shared weights. We now consider the case of depth-$2$ linear or ReLU networks in $\cn$, where the first layer is convolutional with disjoint patches (and hence has shared weights), and show that gradient flow does not always converge in direction to a local optimum, even when the inputs to all hidden neurons are non-zero (and hence there are no zero weights vectors). 

\begin{theorem}
\label{thm:depth 2 cnn}
	Let $\Phi$ be a depth-$2$ linear or ReLU network in $\cn$, parameterized by $\btheta = [\bw,\bv]$ for $\bw,\bv \in \reals^2$, such that for $\bx \in \reals^4$ we have $\Phi(\btheta; \bx) = \sum_{j=1}^2 v_j \sigma(\bw^\top \bx^{(j)})$ where $\bx^{(1)} = (x_1,x_2)$ and $\bx^{(2)} = (x_3,x_4)$. 
	Thus, $\Phi$ is a convolutional network with two disjoint patches.
	Consider minimizing either the exponential or the logistic loss using gradient flow. 
	Then, there exists a dataset $\{(\bx,y)\}$ of size $1$, and an initialization $\btheta(0)$, such that gradient flow converges to zero loss, and converges in direction to a KKT point $\tilde{\btheta} = [\tilde{\bw},\tilde{\bv}]$ of Problem~\ref{eq:optimization problem} which is not a local optimum. Moreover, we have $\inner{\tilde{\bw},\bx^{(j)}} \neq 0$ for $j \in \{1,2\}$. 
\end{theorem}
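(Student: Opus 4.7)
The plan is to build a single-example dataset and initialization that exhibits a $\mathbb{Z}/2$ reflection symmetry, constraining the gradient-flow trajectory to a $2$-dimensional invariant subspace on which $\Phi$ has a ``saddle'' structure, while ensuring both convolutional patches are active throughout so the linear and ReLU cases can be handled uniformly. Concretely, take $\bx = (2, 1, -2, 1)$ so that $\bx^{(1)} = (2, 1)$ and $\bx^{(2)} = (-2, 1)$, and set $y = 1$. Initialize $\bw(0) = (0, 1)$ and $\bv(0) = (1, 1)$. The orthogonal involution $T:(w_1, w_2, v_1, v_2) \mapsto (-w_1, w_2, v_2, v_1)$ preserves the loss, because it swaps $\bw^\top \bx^{(1)} \leftrightarrow \bw^\top \bx^{(2)}$ while simultaneously swapping $v_1 \leftrightarrow v_2$, for both linear and ReLU activations. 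Since $\btheta(0)$ is a fixed point of $T$, gradient flow stays on the fixed set $\mathcal{F} = \{(0, w_2, v, v) : w_2, v \in \reals\}$. On $\mathcal{F}$, $\bw^\top \bx^{(j)} = w_2$ for both $j$; since $w_2(0) = 1 > 0$ and $\dot w_2 > 0$ along the flow, both hidden neurons remain strictly active, so the ReLU and linear cases coincide throughout training.

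On $\mathcal{F}$ the problem reduces to $\Phi = 2 v w_2$. The standard conservation law $\norm{\bw}^2 - \norm{\bv}^2 \equiv \text{const}$ (from \cite{du2018algorithmic}) gives $w_2 = \sqrt{2}\,v$; hence $w_2, v \to \infty$ and the normalized trajectory converges to $\hat{\tilde{\btheta}} = (0,\,1/\sqrt 2,\,1/2,\,1/2)$. By Theorem~\ref{thm:known KKT}, the rescaling $\tilde{\btheta} = 2^{1/4}\hat{\tilde{\btheta}}$ (so that $\Phi(\tilde{\btheta}) = 1$) is a KKT point of Problem~\ref{eq:optimization problem}; one verifies this directly by checking $\tilde{\btheta} = (1/\sqrt 2)\,\nabla \Phi(\tilde{\btheta})$, using that both patches are active so $\nabla \Phi$ is given by the linear formula. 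Moreover $\tilde{\bw} \propto (0, 1)$ gives $\inner{\tilde{\bw}, \bx^{(j)}} = 2^{-1/4} \neq 0$ for $j \in \{1,2\}$, as required.

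For non-local optimality, I exploit the bilinear structure: $\Phi(\btheta) = \tfrac{1}{2}\btheta^\top M \btheta$ where $M = \begin{pmatrix} 0 & X \\ X^\top & 0 \end{pmatrix}$ and $X=(\bx^{(1)} \mid \bx^{(2)})$. Then $M$ has eigenvalues $\pm\sigma_i(X) = \pm 2\sqrt 2,\, \pm\sqrt 2$; the direction $\hat{\tilde{\btheta}}$ is the eigenvector for $+\sqrt 2$ (the ``saddle''), while $\bd = (1/\sqrt 2,\,0,\,1/2,\,-1/2)$ is the orthogonal eigenvector for the largest eigenvalue $+2\sqrt 2$. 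Define the curve $\btheta'(t) = r(t)(\cos t\,\hat{\tilde{\btheta}} + \sin t\,\bd)$ with $r(t)$ chosen so that $\Phi(\btheta'(t)) = 1$. Using $M$-orthogonality of the eigenvectors, $\Phi(\btheta'(t)) = \tfrac{r(t)^2}{2}\bigl(\sqrt 2 \cos^2 t + 2\sqrt 2 \sin^2 t\bigr)$, so $r(t)^2 = \sqrt 2/(1+\sin^2 t)$, strictly less than $\sqrt 2 = \norm{\tilde{\btheta}}^2$ for all $t > 0$. Finally, a short check that $\bw'(t)^\top \bx^{(j)} = r(t)(\cos t \pm 2\sin t)/\sqrt 2 > 0$ for both $j$ and small $t > 0$ confirms the perturbation stays inside the both-active cone, so the ReLU and linear computations agree. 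Hence arbitrarily close to $\tilde{\btheta}$ there are feasible points of strictly smaller norm, proving $\tilde{\btheta}$ is not a local optimum.

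The main obstacle is engineering $X$ so that simultaneously (i) the saddle direction of $\Phi$ on the sphere lies in the \emph{interior} of the both-patches-active cone (so the ReLU analysis mirrors the linear one, unlike the saddles of generic $X$ which lie on activity boundaries), and (ii) a symmetry of the loss pins the invariant subspace of gradient flow to this very saddle direction. The reflection $\bx^{(j)} = (\pm 2, 1)$ is designed so that $T$ swaps the two patches, making the fixed set of $T$ the $e_2$-axis for $\bw$, which is precisely the smaller left singular direction of $X$ while remaining strictly positive on both $\bx^{(1)}$ and $\bx^{(2)}$.
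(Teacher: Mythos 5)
Your proposal is correct and follows the same high-level strategy as the paper: choose a single data point whose two patches are mirror images, pick a mirror-symmetric initialization so that a reflection symmetry of the loss confines the gradient-flow trajectory to an invariant subspace, keep both hidden neurons strictly active throughout so the linear and ReLU cases coincide, identify the limit KKT direction explicitly, and then exhibit a feasible perturbation of strictly smaller norm arbitrarily close to $\tilde{\btheta}$. The paper does this with $\bx = \bigl(4,\tfrac{1}{\sqrt 2},-4,\tfrac{1}{\sqrt 2}\bigr)$, $\bw(0)=(0,1)$, $\bv(0)=\bigl(\tfrac{1}{\sqrt 2},\tfrac{1}{\sqrt 2}\bigr)$, landing on $\tilde\btheta=\bigl[(0,1),(\tfrac{1}{\sqrt2},\tfrac{1}{\sqrt2})\bigr]$ and then exhibiting an ad hoc family $\bw'=(\sqrt\epsilon,1-\epsilon)$, $\bv'=\bigl(\tfrac1{\sqrt2}+\tfrac{\sqrt\epsilon}{2},\tfrac1{\sqrt2}-\tfrac{\sqrt\epsilon}{2}\bigr)$, verified by direct computation.

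What you do differently, and what it buys: rather than guessing a perturbation and checking feasibility and norm by hand, you observe that for a depth-$2$ patch network $\Phi(\btheta)=\tfrac12\btheta^\top M\btheta$ with $M=\begin{pmatrix}0&X\\X^\top&0\end{pmatrix}$, and that the trajectory is pinned (by the symmetry) to the eigendirection of $M$ associated with the \emph{smaller} singular value $\sqrt2$ of $X$, while the orthogonal eigendirection $\bd$ for the larger singular value $2\sqrt2$ lies in the interior of the both-patches-active cone. Rotating toward $\bd$ on the margin-$1$ level set then \emph{must} decrease the norm, by $M$-orthogonality of the eigenvectors, giving a clean one-line norm comparison. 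This is more conceptual: it makes explicit why the KKT point is a saddle (it sits on the sub-dominant singular direction) and how to engineer the data so that this saddle is both the symmetry-pinned limit and interior to the activity cone, which is exactly the design requirement you articulate at the end. The arithmetic all checks out: with $\bx=(2,1,-2,1)$, $X$ has singular values $2\sqrt2,\sqrt2$; the limit direction $\hat{\tilde\btheta}=(0,\tfrac1{\sqrt2},\tfrac12,\tfrac12)$ is the $\sqrt2$-eigenvector; the rescaling $\tilde\btheta=2^{1/4}\hat{\tilde\btheta}$ gives $\Phi(\tilde\btheta)=1$, $\snorm{\tilde\btheta}^2=\sqrt2$, and $\tilde\btheta=\tfrac1{\sqrt2}\nabla\Phi(\tilde\btheta)$; and the perturbed $\bw$ keeps both $\bw'^\top\bx^{(j)}>0$ for small perturbation angle, so ReLU and linear agree. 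The only cosmetic caveat is that you reuse $t$ both for gradient-flow time and for the perturbation parameter; renaming the latter (say to $s$) would avoid confusion.
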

\begin{proof}[Proof idea (for the complete proof see Appendix~\ref{app:proof of depth 2 cnn})]
	Let $\bx = \left(4,\frac{1}{\sqrt{2}},-4, \frac{1}{\sqrt{2}} \right)^\top$ and $y = 1$. Let $\btheta(0)$ such that $\bw(0) = (0,1)^\top$ and $\bv(0) = 
\left( \frac{1}{\sqrt{2}}, \frac{1}{\sqrt{2}} \right)^\top$. Since $\bx^{(1)}$ and $\bx^{(2)}$ are symmetric w.r.t. $\bw(0)$, and $\bv(0)$ does not break this symmetry, then $\bw$ keeps its direction throughout the training. Thus, we show that gradient flow converges in direction to a KKT point $\tilde{\btheta}$ where $\tilde{\bw} = (0,1)^\top$ and $\tilde{\bv} = \left( \frac{1}{\sqrt{2}}, \frac{1}{\sqrt{2}} \right)^\top$. Then, we show that it is not a local optimum, since for every small $\epsilon>0$ the parameters $\btheta' = [\bw',\bv']$ with $\bw' = (\sqrt{\epsilon},1-\epsilon)^\top$ and $\bv' = \left( \frac{1}{\sqrt{2}} + \frac{\sqrt{\epsilon}}{2}, \frac{1}{\sqrt{2}} - \frac{\sqrt{\epsilon}}{2} \right)^\top$ satisfy the constraints of Problem~\ref{eq:optimization problem}, and we have $\snorm{\btheta'}<\snorm{\tilde{\btheta}}$.
\end{proof}

\section{Deep networks in $\cn$}
\label{sec:deep}

In this section we study the more general case of depth-$m$ neural networks in $\cn$, where $m \geq 2$.
First,
we show that for networks of depth at least $3$ in $\cn_{\text{no-share}}$, gradient flow may not converge to a local optimum of Problem~\ref{eq:optimization problem}, for both linear and ReLU networks, and even where there are no zero weights vectors and the inputs to all hidden neurons are non-zero. 
More precisely, we prove this claim for diagonal networks.

\begin{theorem}
\label{thm:deep negative}
	Let $m \geq 3$. Let $\Phi$ be a depth-$m$ linear or ReLU diagonal neural network parameterized by $\btheta$. 
	Consider minimizing either the exponential or the logistic loss using gradient flow. There exists a dataset $\{(\bx,y)\} \subseteq \reals^2 \times \{-1,1\}$ of size $1$ and an initialization $\btheta(0)$, such that gradient flow converges to zero loss, and converges in direction to a KKT point $\tilde{\btheta}$ of Problem~\ref{eq:optimization problem} which is not a local optimum. 
	Moreover, 
	all inputs to neurons in the computation $\Phi(\tilde{\btheta};\bx)$ are non-zero.
\end{theorem}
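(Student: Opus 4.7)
The plan is to mimic Theorem~\ref{thm:depth 2 linear negative} but with a carefully tuned, layer-symmetric initialization that keeps every coordinate strictly active throughout the flow. I take the single example $\bx = (1,2)^\top$, $y=1$, and initialize every layer identically to $\bw_j(0) = (1,\,2^{-1/(m-2)})^\top$ for $j \in [m]$. Because all layers start equal and the standard diagonal balancedness $w_{l,i}^2(t) - w_{l',i}^2(t)$ is conserved, I may write $\bw_j(t) = (c_1(t),c_2(t))^\top$ throughout; all entries remain strictly positive (each $c_i$ is increasing), so on $\bx$ the ReLU network coincides with the linear one for the entire trajectory. With $\Phi = c_1^m + 2c_2^m$ and $g(t) := -\ell'(\Phi(t))>0$, gradient flow reduces to $\dot c_i = g(t)\,x_i\,c_i^{m-1}$, which integrates to
\[
c_i^{-(m-2)}(t) \;=\; c_i^{-(m-2)}(0) - x_i\,G(t), \qquad G(t) := (m-2)\int_0^t g(s)\,ds.
\]
Since $c_1^{-(m-2)}(0)/x_1 = c_2^{-(m-2)}(0)/x_2 = 1$ by construction, both coordinates blow up simultaneously as $G(t)\nearrow 1$, so $\cl(\btheta(t)) \to 0$, and by Theorem~\ref{thm:known KKT} the direction of $\btheta(t)$ converges to that of $\tilde\btheta = [\tilde\bw,\dots,\tilde\bw]$ with $\tilde\bw = k(1,2^{-1/(m-2)})^\top$ and $k$ chosen so $\Phi(\tilde\btheta)=1$. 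A direct check shows $\tilde\btheta$ is an interior KKT point of Problem~\ref{eq:optimization problem} (multiplier $\lambda = k^{2-m}$) with every component strictly positive, so all inputs to all neurons in $\Phi(\tilde\btheta;\bx)$ are non-zero.

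The heart of the proof is showing $\tilde\btheta$ is not a local minimum. I compute the Lagrangian Hessian at $\tilde\btheta$: it is block-diagonal across the two input coordinates, and the balanced KKT identity $\lambda x_i \tilde c_i^{m-2} = 1$ reduces each $m\times m$ block to $2I_m - J_m$, which has eigenvalue $2-m \leq -1$ on the all-ones eigenvector. I then choose the tangent vector $v$ with $v_{l,i} = s_i/m$ constant across layers, and $(s_1,s_2) = (x_2\tilde c_2^{m-1},\,-x_1\tilde c_1^{m-1})$; this satisfies the linearized constraint $\nabla\Phi(\tilde\btheta)\cdot v = 0$ and gives $v^\top\nabla^2 L\,v = \tfrac{2-m}{m}(s_1^2+s_2^2) < 0$. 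Combining Euler's identity $\snorm{\tilde\btheta}^2 = m\lambda$ (which follows from $\tilde\btheta = \lambda\nabla\Phi(\tilde\btheta)$ and $m$-homogeneity) with $m$-homogeneity of $\Phi$, I pass from this second-order data to a concrete competitor: setting $\btheta' := \alpha(\tilde\btheta + \epsilon v)$ with $\alpha := \Phi(\tilde\btheta+\epsilon v)^{-1/m}$ gives $\Phi(\btheta')=1$ and
\[
\snorm{\btheta'}^2 \;=\; \snorm{\tilde\btheta}^2 + \epsilon^2\bigl(\snorm{v}^2 - \lambda\, v^\top\nabla^2\Phi\,v\bigr) + O(\epsilon^3) \;<\; \snorm{\tilde\btheta}^2
\]
for all sufficiently small $\epsilon>0$, witnessing the failure of local optimality. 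For ReLU nothing changes: $\tilde\btheta$ and all nearby parameters keep every preactivation on $\bx$ strictly positive, so ReLU acts as the identity locally and the Clarke subdifferential coincides with the ordinary gradient.

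The main obstacle I anticipate is guaranteeing convergence to the interior KKT point rather than to the globally optimal boundary point $p = (0,\tfrac12)$ (whose parameter realization would have a dead coordinate); this is exactly what the tuned initialization buys, via the simultaneous blow-up in the conservation law above, which also keeps every $\tilde w_{l,i}$ non-zero. A secondary subtlety is that $v$ is only a first-order tangent to the nonconvex constraint, so one cannot simply add it to $\tilde\btheta$; the homogeneous rescaling step is essential and works because $v^\top\nabla^2\Phi\,v>0$ gives just enough margin slack to scale the parameters down and decrease the norm.
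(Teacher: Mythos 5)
Your proof is correct, and it reaches the same conclusion by a genuinely different route. The paper picks $\bx=(1,1)$ and $\bw_j(0)=(1,1)$, so the dynamics are trivially symmetric in both coordinates and layers, the KKT limit is simply $\tilde\bw_j=(2^{-1/m},2^{-1/m})$, and non-local-optimality is certified by the explicit competitor $\bw'_j=\bigl((\tfrac{1+\epsilon}{2})^{1/m},(\tfrac{1-\epsilon}{2})^{1/m}\bigr)$, whose norm comparison reduces to the elementary inequality $(1+\epsilon)^{2/m}+(1-\epsilon)^{2/m}<2$ for $m\geq 3$ (strict concavity of $t\mapsto t^{2/m}$). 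You instead take $\bx=(1,2)$, which breaks the coordinate symmetry, so you must actually integrate the ODE $\dot c_i = g(t)\,x_i\,c_i^{m-1}$ to pin down the limiting direction; your tuned initialization $c_i^{-(m-2)}(0)=x_i$ arranges the simultaneous blow-up and the limit $(1,2^{-1/(m-2)})$ comes out cleanly. Then, rather than exhibit an explicit competitor, you compute the Lagrangian Hessian, exploit the block structure and the KKT identity $\lambda x_i\tilde c_i^{m-2}=1$ to identify the $2I_m-J_m$ blocks with negative eigenvalue $2-m$, pick a layer-uniform tangent vector in the negative cone, and convert the second-order deficiency into a genuine norm decrease via $m$-homogeneous rescaling $\btheta'=\alpha(\tilde\btheta+\epsilon v)$ together with Euler's identity $\snorm{\tilde\btheta}^2=m\lambda$. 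All the computations check out: the block-diagonality of $\nabla^2\Phi$, the eigenstructure of $2I_m-J_m$, the cancellation $\tilde\btheta^\top v = \lambda\nabla\Phi^\top v = 0$, and the coefficient $\snorm{v}^2-\lambda v^\top\nabla^2\Phi\,v = \tfrac{2-m}{m}(s_1^2+s_2^2)<0$. Notably, your perturbation $v$ (layer-uniform, antisymmetric across coordinates) is to leading order the same direction the paper moves in; the difference is whether one verifies the norm decrease by a clean closed form or by a Hessian computation plus rescaling. The paper's approach is more elementary and shorter here; yours is more systematic and would extend more readily to settings without a nice closed-form competitor, at the cost of needing the full second-order machinery and the explicit ODE solution for the dynamics.
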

\begin{proof}[Proof idea (for the complete proof see Appendix~\ref{app:proof of deep negative})]
Let $\bx=(1,1)^\top$ and $y=1$. Consider the initialization $\btheta(0)$ where $\bw_j(0) = (1,1)^\top$ for every $j \in [m]$. We show that gradient flow converges in direction to a KKT point $\tilde{\btheta} = [\tilde{\bw}_1,\ldots,\tilde{\bw}_m]$ such that $\tilde{\bw}_j = \left(2^{-1/m},2^{-1/m}\right)^\top$ for all $j \in [m]$. Then, we consider the parameters $\btheta' = [\bw'_1,\ldots,\bw'_m]$ such that for every $j \in [m]$ we have $\bw'_j = \left(\left(\frac{1+\epsilon}{2}\right)^{1/m}, \left(\frac{1-\epsilon}{2}\right)^{1/m} \right)^\top$, and show that if $\epsilon>0$ is sufficiently small, then $\btheta'$ satisfies the constraints in Problem~\ref{eq:optimization problem} and we have $\snorm{\btheta'} < \snorm{\tilde{\btheta}}$.
\end{proof}

Note that in the case of linear networks, the above result 
is in contrast to 
networks with sparse weights of depth $2$ that converge to a global optimum by Theorem~\ref{thm:depth 2 linear}, and to fully-connected 
networks of any depth that converge to a global optimum by Theorem~\ref{thm:deep positive linear}.
In the case of ReLU networks,
the above result 
is in contrast to the case of depth-$2$ 
networks studied in Theorem~\ref{thm:depth 2 relu positive}, where it is guaranteed to converge to a local optimum.

In light of our negative results, we now consider a weaker notion of margin maximization, namely, maximizing the margin for each layer separately. 
Let $\Phi$ be a neural network of depth $m$ in $\cn$, parameterized by  $\btheta =[\bu^{(l)}]_{l=1}^m$. The maximum margin problem for a layer $l_0 \in [m]$ w.r.t. $\btheta_0 =[\bu^{(l)}_0]_{l=1}^m$ is the following:
\begin{equation}
\label{eq:optimization problem one layer}
	\min_{\bu^{(l_0)}} \frac{1}{2} \norm{\bu^{(l_0)}}^2 \;\;\;\; \text{s.t. } \;\;\; \forall i \in [n] \;\; y_i \Phi(\btheta'; \bx_i) \geq 1~,
\end{equation}
where $\btheta' =[\bu^{(1)}_0,\ldots,\bu^{(l_0-1)}_0,\bu^{(l_0)},\bu^{(l_0+1)}_0,\ldots,\bu^{(m)}_0]$. We show a positive result for linear networks:

\begin{theorem}
\label{thm:positive each layer linear}
	Let $m \geq 2$. Let $\Phi$ be any depth-$m$ linear neural network in $\cn$, parameterized by $\btheta =[\bu^{(l)}]_{l=1}^m$.
	Consider minimizing either the exponential or the logistic loss over a dataset $ \{(\bx_i,y_i)\}_{i=1}^n$ using gradient flow. Assume that there exists time $t_0$ such that $\cl(\btheta(t_0))<1$.  Then, gradient flow converges in direction to a KKT point $\tilde{\btheta}=[\tilde{\bu}^{(l)}]_{l=1}^m$ of Problem~\ref{eq:optimization problem}, such that for every layer $l \in [m]$ the parameters vector $\tilde{\bu}^{(l)}$ is a global optimum of Problem~\ref{eq:optimization problem one layer} w.r.t. $\tilde{\btheta}$.
\end{theorem}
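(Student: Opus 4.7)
The plan is to exploit a key structural observation: although Problem~\ref{eq:optimization problem} is highly non-convex in $\btheta$, Problem~\ref{eq:optimization problem one layer} is in fact a convex quadratic program. Indeed, by the definition of the class $\cn$, each entry of the matrix $W^{(l_0)}$ is either $0$ or a coordinate of $\bu^{(l_0)}$, so $W^{(l_0)}$ depends linearly on $\bu^{(l_0)}$. When the other layers are held fixed, the map $\bu^{(l_0)} \mapsto \Phi(\btheta';\bx_i) = W^{(m)} \cdots W^{(l_0+1)} W^{(l_0)} W^{(l_0-1)} \cdots W^{(1)} \bx_i$ is therefore affine in $\bu^{(l_0)}$. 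Thus the constraints of Problem~\ref{eq:optimization problem one layer} are affine and the objective is a convex quadratic, so the KKT conditions of Problem~\ref{eq:optimization problem one layer} are both necessary and sufficient for global optimality.

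The rest of the proof is essentially a bookkeeping argument. By Theorem~\ref{thm:known KKT}, gradient flow converges in direction to some KKT point $\tilde{\btheta}$ of Problem~\ref{eq:optimization problem}, so there exist multipliers $\lambda_1,\ldots,\lambda_n \geq 0$ satisfying $y_i\Phi(\tilde{\btheta};\bx_i)\geq 1$, $\lambda_i (y_i \Phi(\tilde{\btheta};\bx_i)-1)=0$ for every $i$, and the stationarity equation $\tilde{\btheta} = \sum_{i=1}^n \lambda_i y_i \nabla_{\btheta} \Phi(\tilde{\btheta}; \bx_i)$. Fix a layer $l_0$ and take $\btheta_0 = \tilde{\btheta}$ in the definition of Problem~\ref{eq:optimization problem one layer}. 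Primal feasibility, dual feasibility, and complementary slackness transfer verbatim since they only involve the scalars $y_i \Phi(\tilde{\btheta};\bx_i)$ and the signs of the $\lambda_i$, all of which are unchanged. The stationarity condition of the full problem, restricted to the block of coordinates corresponding to layer $l_0$, reads $\tilde{\bu}^{(l_0)} = \sum_{i=1}^n \lambda_i y_i \nabla_{\bu^{(l_0)}} \Phi(\tilde{\btheta}; \bx_i)$, which is precisely the stationarity condition for Problem~\ref{eq:optimization problem one layer} with the same multipliers $\lambda_i$. Hence the KKT conditions of the convex per-layer problem are satisfied at $\tilde{\bu}^{(l_0)}$, and by the observation of the first paragraph, $\tilde{\bu}^{(l_0)}$ is a global optimum of Problem~\ref{eq:optimization problem one layer}.

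No step in this argument presents a real obstacle; the proof reduces to (a) the convexity observation for a single layer of a linear network and (b) the fact that the KKT multipliers produced by gradient flow for the joint problem serve, without modification, as KKT multipliers for each per-layer slice. The only mild point to verify is that $\nabla_{\bu^{(l_0)}} \Phi(\tilde{\btheta};\bx_i)$ is the same object whether viewed inside the full problem or inside Problem~\ref{eq:optimization problem one layer}, but this is immediate because in a linear network this gradient is determined by the frozen matrices $W^{(l)}$ with $l\neq l_0$ and by $\bx_i$, all of which are identical in both formulations.
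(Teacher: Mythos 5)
Your proof is correct and follows essentially the same route as the paper's: read off the per-layer stationarity and complementary slackness conditions from the KKT conditions of the joint problem using the same multipliers, observe that Problem~\ref{eq:optimization problem one layer} has affine constraints and a convex objective (so KKT is sufficient for global optimality), and conclude. Your write-up is if anything a bit more explicit than the paper about why the per-layer constraint map is affine in $\bu^{(l_0)}$ and why the relevant gradients coincide in both problems, but there is no substantive difference in the argument.
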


The theorem follows by noticing that if $\Phi$ is a linear network, then the constraints in Problem~\ref{eq:optimization problem one layer} are affine, and its KKT conditions are implied by the KKT conditions of Problem~\ref{eq:optimization problem}.
See Appendix~\ref{app:proof of positive each layer linear} for the formal proof.
Note that by Theorems~\ref{thm:depth 2 linear negative}, \ref{thm:depth 2 cnn} and~\ref{thm:deep negative}, linear networks in $\cn$ might converge in direction to a KKT point $\tilde{\btheta}$, which is not a local optimum of Problem~\ref{eq:optimization problem}. However, Theorem~\ref{thm:positive each layer linear} implies that each layer in $\tilde{\btheta}$ is a global optimum of Problem~\ref{eq:optimization problem one layer}. Hence, any improvement to $\tilde{\btheta}$ requires changing at least two layers simultaneously. 

While in linear networks gradient flow maximize the margin for each layer separately,
in the following theorem (which we prove in Appendix~\ref{app:proof of negative each layer relu}) we show that this claim does not hold for ReLU networks: Already for fully-connected networks of depth-$2$ gradient flow may not converge in direction to a local optimum of Problem~\ref{eq:optimization problem one layer}.

\begin{theorem}
\label{thm:negative each layer relu}
	Let $\Phi$ be a fully-connected depth-$2$ ReLU network with input dimension $2$ and $4$ hidden neurons parameterized by $\btheta$. 
	Consider minimizing either the exponential or the logistic loss using gradient flow. 
	There exists a dataset 
	and an initialization $\btheta(0)$ such that gradient flow converges to zero loss, and converges in direction to a KKT point $\tilde{\btheta}$ of Problem~\ref{eq:optimization problem}, such that the weights of the first layer are not a local optimum of Problem~\ref{eq:optimization problem one layer} w.r.t. $\tilde{\btheta}$.
\end{theorem}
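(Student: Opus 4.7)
The plan is to prove the theorem by an explicit construction that mirrors, and strengthens, the negative example behind part (2) of Theorem~\ref{thm:depth 2 linear}. There, a four-neuron depth-$2$ ReLU network converges in direction to a KKT point of Problem~\ref{eq:optimization problem} which is not a local optimum of the \emph{joint} problem. Here I need to arrange things so that the improving perturbation can be realized by moving only the first-layer weights $\bw_1,\ldots,\bw_4$ while the second layer $\bv$ is held fixed at $\tilde{\bv}$.

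First I would choose a dataset of two points $\bx_1,\bx_2\in\reals^2$ (both with label $+1$) and an initialization $\btheta(0)$ exhibiting a symmetry preserved by gradient flow: for instance, pair the four hidden neurons into two pairs related by the reflection that swaps $\bx_1$ and $\bx_2$, with second-layer coefficients compatible with that reflection. This symmetry reduces the gradient-flow ODE to a low-dimensional system. Combined with the standard facts that $\cl(\btheta(t))\to 0$ and $\norm{\btheta(t)}\to\infty$ (Theorem~\ref{thm:known KKT}) and the alignment between layers in a homogeneous depth-$2$ network, it lets me identify the KKT point $\tilde{\btheta}=[\tilde\bw_1,\ldots,\tilde\bw_4,\tilde\bv]$ in closed form: each $\tilde\bw_j$ will lie in a specific direction dictated by the symmetry, and $\tilde\bv$ will be determined accordingly, with all pre-activations $\tilde\bw_j^\top\bx_i$ non-zero.

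Second, with $\tilde\bv$ and the activation pattern at $\tilde\btheta$ in hand, I would exhibit an explicit perturbation $\bw'_1,\ldots,\bw'_4$ with $\snorm{\bw'_j-\tilde\bw_j}$ arbitrarily small, such that the margin constraints $y_i\,\Phi([\bw'_1,\ldots,\bw'_4,\tilde\bv];\bx_i)\ge 1$ are preserved for $i=1,2$ and the per-layer objective $\tfrac12\sum_j\norm{\bw'_j}^2$ is strictly smaller than $\tfrac12\sum_j\norm{\tilde\bw_j}^2$. The design of the perturbation must make use of the fact that at $\tilde\btheta$ at least one of the four neurons is inactive on some input $\bx_i$: by tilting its weights slightly so that it becomes active on that input (while staying on the correct side for the other), its positive contribution via $\tilde v_j$ can relieve part of the margin burden currently carried by the remaining three neurons, allowing them to be shrunk in a way that beats the cost of the tilt.

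The main obstacle is precisely this step. Within any region of fixed activation patterns, the per-layer max-margin problem is a convex QP in $[\bw_1,\ldots,\bw_4]$ with $\tilde\btheta$ a KKT point, hence locally optimal. So the improving perturbation must actually cross an activation boundary; I must verify (i) that with the fixed $\tilde\bv$ determined in the previous step, some neuron can profitably flip its activation on some $\bx_i$, and (ii) that the quadratic cost of the flip is dominated to first order by the linear gain in the margin constraints, which is what produces $\snorm{\btheta'}<\snorm{\tilde\btheta}$. Choosing the two inputs and the initialization so as to force exactly one inactive neuron at $\tilde\btheta$, while keeping all four $\tilde\bw_j$ non-zero, is the combinatorial heart of the construction, and having four hidden neurons gives the minimal flexibility needed to make it work.
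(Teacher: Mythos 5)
Your plan contains a self-defeating requirement. You stipulate that at the KKT point $\tilde\btheta$ all pre-activations $\tilde\bw_j^\top\bx_i$ are non-zero, yet you then observe (correctly) that within any region of fixed activation pattern the per-layer problem is a convex QP with $\tilde\btheta$ a KKT point, hence locally optimal, so the improving perturbation ``must cross an activation boundary.'' These two claims cannot coexist: if every pre-activation is strictly non-zero, then in a small enough ball around $\tilde\btheta$ the activation pattern is frozen, no small perturbation crosses a boundary, and the first layer \emph{is} a local optimum. This is precisely the content of the paper's Theorem~\ref{thm:positive each layer relu}, whose hypothesis (non-zero inputs to all neurons in layers $\geq l$) is the very condition you impose, and whose conclusion is the negation of the one you want. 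Your own second step --- tilting a neuron ``slightly so that it becomes active'' on an input where it was inactive --- requires the pre-activation there to sit at (or infinitesimally near) zero, which your stated non-zero condition forbids.

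What makes the paper's proof work is that some pre-activations are \emph{exactly zero} at $\tilde\btheta$. It reuses the four-point, four-neuron construction from part~(2) of Theorem~\ref{thm:depth 2 linear}: the inputs are the unit vectors $(0,1)^\top,(1,0)^\top,(0,-1)^\top,(-1,0)^\top$, all labeled $+1$, and the limiting KKT point has $\tilde\bw_j = \bx_j$, $\tilde v_j = 1$. For each input $\bx_i$ the pre-activations $\tilde\bw_j^\top\bx_i$ over $j$ take values $1$, $0$, $-1$, $0$; the improving first-layer perturbation rotates each $\tilde\bw_j$ by a small angle so that a neuron whose pre-activation was exactly $0$ on $\bx_i$ contributes a small positive amount, compensating for shrinking the active neuron, keeping $\Phi(\btheta';\bx_i) = 1$ while strictly decreasing $\sum_j \norm{\bw'_j}^2$, with $\tilde\bv$ untouched. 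So if you drop the ``all pre-activations non-zero'' stipulation and instead engineer pre-activations exactly on the boundary, your reflection-symmetric idea becomes a cousin of the paper's; but note the paper uses four data points, not two, to realize a symmetric KKT point with all four $\tilde\bw_j$ non-zero and the required boundary pre-activations, and it is not obvious your two-point variant produces a KKT point with that structure.
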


Finally, we show that in ReLU networks in $\cn$ of any depth, if the KKT point to which gradient flow converges in direction is such that the inputs to hidden neurons are non-zero, then it must be a local optimum of Problem~\ref{eq:optimization problem one layer} (but not necessarily a global optimum).
The proof follows the ideas from the proof of Theorem~\ref{thm:positive each layer linear}, with some required modifications, and is given in Appendix~\ref{app:proof of positive each layer relu}. 
 
\begin{theorem}
\label{thm:positive each layer relu}
	Let $m \geq 2$. Let $\Phi$ be any depth-$m$ ReLU network in $\cn$ parameterized by $\btheta =[\bu^{(l)}]_{l=1}^m$.
	Consider minimizing either the exponential or the logistic loss over a dataset $\{(\bx_i,y_i)\}_{i=1}^n$ using gradient flow, and assume that there exists time $t_0$ such that $\cl(\btheta(t_0))<1$. 
	Let $\tilde{\btheta}=[\tilde{\bu}^{(l)}]_{l=1}^m$ be the KKT point of Problem~\ref{eq:optimization problem} such that $\btheta(t)$ converges to $\tilde{\btheta}$ in direction (such $\tilde{\btheta}$ exists by Theorem~\ref{thm:known KKT}).
	Let $l \in [m]$ and assume that for every $i \in [n]$ the inputs to all neurons in layers $\geq l$ in the computation $\Phi(\tilde{\btheta};
	\bx_i)$ are non-zero. Then, the parameters vector $\tilde{\bu}^{(l)}$ is a local optimum of Problem~\ref{eq:optimization problem one layer} w.r.t. $\tilde{\btheta}$. However, it may not be a global optimum.
\end{theorem}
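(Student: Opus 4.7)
The plan is to reduce Problem~\ref{eq:optimization problem one layer} at the converged parameters $\tilde\btheta$ to a convex quadratic program in a neighborhood of $\tilde{\bu}^{(l)}$, and then to derive the layer-$l$ KKT conditions by projecting the KKT conditions of the full Problem~\ref{eq:optimization problem} onto the $\bu^{(l)}$-block. Local optimality of $\tilde{\bu}^{(l)}$ will then follow from the convexity of this restricted problem.

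Fix $\bu^{(l')} = \tilde{\bu}^{(l')}$ for every $l' \neq l$, and let $\btheta'(\bu^{(l)})$ denote the parameter vector obtained by varying only the $l$-th layer near $\tilde{\bu}^{(l)}$. First I would argue by induction on $l' \geq l$ that for each training input $\bx_i$, every pre-activation and every output in layer $l'$ is an \emph{affine} function of $\bu^{(l)}$ on some neighborhood of $\tilde{\bu}^{(l)}$. The outputs of layers $<l$ do not depend on $\bu^{(l)}$ at all, and the pre-activation at any neuron in layer $l$ is linear in $\bu^{(l)}$; since these pre-activations are non-zero at $\tilde{\bu}^{(l)}$ by hypothesis, the ReLU sign patterns in layer $l$ are locally constant, so the outputs of layer $l$ are affine in $\bu^{(l)}$. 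The induction step is identical: the pre-activations of layer $l'+1$ are linear combinations (with fixed coefficients $\tilde W^{(l'+1)}$) of the outputs of layer $l'$, hence affine in $\bu^{(l)}$, and by the non-zero-input hypothesis their signs are locally constant. Consequently, each constraint $y_i \Phi(\btheta'(\bu^{(l)});\bx_i) \geq 1$ is locally affine in $\bu^{(l)}$, so Problem~\ref{eq:optimization problem one layer} restricted to this neighborhood is a strictly convex quadratic program with affine constraints.

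To verify the KKT conditions at $\tilde{\bu}^{(l)}$, I would invoke Theorem~\ref{thm:known KKT}: there exist multipliers $\lambda_1,\ldots,\lambda_n \geq 0$ with $\lambda_i(y_i\Phi(\tilde\btheta;\bx_i)-1)=0$ and Clarke-subgradients $\bh_i \in \partial \Phi(\tilde\btheta;\bx_i)$ satisfying $\tilde\btheta = \sum_i \lambda_i y_i \bh_i$. The local affineness established above implies that the projection of $\partial\Phi(\tilde\btheta;\bx_i)$ onto the $\bu^{(l)}$-block is the singleton $\{\nabla_{\bu^{(l)}} \Phi(\tilde\btheta;\bx_i)\}$ (any limit of gradients at nearby smooth points has this projection by construction of the Clarke subdifferential). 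Projecting the full KKT identity onto the $l$-th layer therefore yields $\tilde{\bu}^{(l)} = \sum_i \lambda_i y_i \nabla_{\bu^{(l)}}\Phi(\tilde\btheta;\bx_i)$, which together with the unchanged complementary slackness and primal feasibility conditions is precisely the KKT system of the convex QP above. Convexity then upgrades KKT to optimality on this neighborhood, i.e., local optimality for Problem~\ref{eq:optimization problem one layer}.

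For the negative part (that $\tilde{\bu}^{(l)}$ may fail to be a global optimum), the plan is to exhibit a small construction in the spirit of the counterexample for Theorem~\ref{thm:depth 2 relu positive}: a depth-$2$ ReLU network where gradient flow yields a KKT point satisfying the non-zero-input assumption, but where a distant choice of $\bu^{(l)}$ that flips activation patterns on several training inputs simultaneously achieves margin $1$ with strictly smaller $\snorm{\bu^{(l)}}$. The main obstacle I expect is the Clarke-subdifferential bookkeeping in the KKT projection step: one must carefully justify that local affineness in the $\bu^{(l)}$ block collapses the projected subdifferential to a singleton gradient, despite possible non-smoothness of $\Phi$ in other coordinates of $\btheta$, and that the multipliers $\lambda_i$ inherited from the full problem indeed form a valid KKT multiplier vector for the restricted problem.
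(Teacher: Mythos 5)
Your proposal is essentially identical to the paper's proof. You fix all layers other than $l$, observe that the non-zero-input hypothesis on layers $\geq l$ makes the ReLU sign pattern locally constant so that $\bu^{(l)} \mapsto \Phi(\btheta';\bx_i)$ is locally affine, argue that the $\bu^{(l)}$-block of the Clarke-subdifferential KKT stationarity condition collapses to the ordinary gradient of this restricted function, and conclude that $\tilde{\bu}^{(l)}$ satisfies the KKT conditions of a locally-equivalent convex QP and is hence locally optimal. The paper packages the first two steps by introducing the map $h_i(\bu^{(l)}) = f_l \circ \sigma(W^{(l)} \bx_i^{(l-1)})$ and noting it is differentiable and locally linear at $\tilde{\bu}^{(l)}$, which is your induction in compressed form; the ``projection collapses to a singleton'' point you flag as a potential obstacle is indeed the one delicate spot, and the paper handles it exactly as you propose (any nearby smooth gradient has the forced $\bu^{(l)}$-component because the forward signal into layer $l$ is continuous and the downstream activation pattern is eventually frozen). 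For the ``not necessarily global'' part, the paper does not build a new construction but simply reuses Lemma~\ref{lem:ReLU not global} (the one underlying Theorem~\ref{thm:depth 2 relu positive}) and observes that the witness $\btheta'$ there differs from $\tilde{\btheta}$ only in the first layer, so it already refutes per-layer global optimality; your plan to ``exhibit a small construction in the spirit of'' that example is the same idea, and it would be worth noticing that no new construction is needed because the existing one already has this single-layer structure.
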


\stam{
\section{Non-homogeneous networks}
\label{sec:non homogeneous}

We define the \emph{normalized margin} as follows:
\begin{equation*}
\label{eq:normalized margin}
	\bar{\gamma}(\btheta) := \min_{i \in [n]} y_i \Phi \left(\frac{\btheta}{\norm{\btheta}}; \bx_i \right)~.
\end{equation*}
If $\Phi$ is homogeneous then maximizing the normalized margin is equivalent to solving Problem~\ref{eq:optimization problem}, i.e., minimizing $\norm{\btheta}$ under the constraints (cf. \cite{lyu2019gradient}). 
In this section we study the normalized margin in non-homogeneous networks.

\cite{lyu2019gradient} showed under the assumptions from Theorem~\ref{thm:known KKT}, that a smoothed version of the normalized margin is monotonically increasing when training homogeneous networks. More precisely, there is a function $\tilde{\gamma}(\btheta)$ which is an $O\left(\norm{\btheta}^{-L}\right)$-additive approximation of $\bar{\gamma}(\btheta)$, such that $\tilde{\gamma}(\btheta)$ is monotonically non-decreasing.
This result holds only for homogeneous networks. Hence, it does not apply to networks with bias terms or skip connections.
\cite{lyu2019gradient} observed empirically that the normalized margin is monotonically increasing also when training non-homogeneous networks. However, they did not provide a rigorous proof for this phenomenon and left it as an open problem. 
The experiments in \cite{lyu2019gradient} are on training convolutional neural networks (CNN) with bias on MNIST.

In the following theorem we show an example for a simple non-homogeneous linear network where the normalized margin is monotonically \emph{decreasing} during the training. This example implies that in order to obtain a rigorous proof for the empirical phenomenon that was observed in non-homogeneous networks some additional assumptions must be made.

\begin{theorem}
\label{thm:non-homogeneous}
	Let $\Phi$ be a depth-$2$ linear network with input dimension $1$, width $1$ and a skip connection. Namely, $\Phi$ is parameterized by $\btheta=[w,v,u]$  where $w,v,u \in \reals$, and we have $\Phi(\btheta; x) = v \cdot w \cdot x + u \cdot x$. Consider the size-$1$ dataset $\{(1,1)\}$, and assume that $\btheta(0) = [2,2,2]$. Then, gradient flow w.r.t. either the exponential loss or the logistic loss converges to zero loss, converges in direction (i.e., $\lim_{t \to \infty} \frac{\btheta(t)}{\norm{\btheta(t)}}$ exists), and the normalized margin is monotonically decreasing during the training, i.e., $\frac{d \bar{\gamma}(\btheta(t))}{dt} < 0$ for all $t \geq 0$. Moreover, we have $\bar{\gamma}(\btheta(0)) > 0.9$ and $\lim_{t \to \infty}\bar{\gamma}(\btheta(t)) = \frac{1}{2}$.
\end{theorem}

We note that the proof readily extends in a few directions: It applies also for a depth-$2$ network without a skip connection but with a bias term in the output neuron. In addition, it also holds for ReLU networks. Finally, the theorem applies also for the smoothed version of the normalized margin considered in \cite{lyu2019gradient}.
The proof of the theorem is given in  Appendix~\ref{app:proof of non-homogeneous}.
Intuitively, note that if $\frac{v}{\norm{\btheta}},\frac{w}{\norm{\btheta}} = 0$ and $\frac{u}{\norm{\btheta}}=1$ then $\bar{\gamma}(\btheta) = 1$, and if $\frac{u}{\norm{\btheta}} = 0$ and $\frac{v}{\norm{\btheta}} = \frac{w}{\norm{\btheta}} = \frac{1}{\sqrt{2}}$ then $\bar{\gamma}(\btheta) = \frac{1}{2}$. Also, since the partial derivative of the loss w.r.t. $v,w$ depends on $w,v$ (respectively) and on $x$, and the partial derivative w.r.t. $u$ depends only on $x$, then $v,w$ grow faster than $u$ during the training. Hence, as $t$ increases, $\frac{u}{\norm{\btheta}}$ decreases and $\frac{v}{\norm{\btheta}},\frac{w}{\norm{\btheta}}$ increase.
}

\stam{
\section{Discussion} \label{sec:discussion}

In this paper, we studied margin maximization in homogeneous linear and ReLU networks.
We believe that understanding margin maximization is crucial for explaining generalization in deep learning, and it might allow us to utilize margin-based generalization bounds for neural networks. 
The paper considers several different settings and characterizes the guarantees on margin maximization in each setting. Accordingly, it draws a somewhat complicated picture. 
Nevertheless draw the following conclusions. 

Our negative results show that even in very simple settings gradient flow does not maximize the margin even locally, and we believe that these results should be used as a starting point for studying which assumptions are required for proving margin maximization. 
Our positive results indeed show that under certain reasonable assumptions gradient flow maximizes the margin (either locally or globally). Also, the notion of per-layer margin maximization which we consider suggests another path for obtaining positive results on the implicit bias.

While in this paper we make progress towards understanding margin maximization, there is still much work left for future research. The main question is which assumptions on the network architecture, the dataset and the initialization of gradient flow, may allow us to circumvent our negative results and obtain positive guarantees. 
}

\subsection*{Acknowledgements}
This research is supported in part by European Research Council (ERC) grant 754705.

\bibliographystyle{abbrvnat}
\bibliography{bib}

\appendix

\section{Additional related work}
\label{app:more related}

 \cite{soudry2018implicit} showed that gradient descent on linearly-separable binary classification problems with exponentially-tailed losses (e.g., the exponential loss and the logistic loss), converges to the maximum $\ell_2$-margin direction. This analysis was extended to other loss functions, tighter convergence rates, non-separable data, and variants of gradient-based optimization algorithms \citep{nacson2019convergence,ji2018risk,ji2020gradient,gunasekar2018characterizing,shamir2020gradient,ji2021characterizing}.

As detailed in Section~\ref{sec:preliminaries}, \cite{lyu2019gradient} and \cite{ji2020directional} showed that gradient flow on homogeneous neural networks with exponential-type losses converge in direction to a KKT point of the maximum-margin problem in parameter space. 
The implications of margin maximization in parameter space on the implicit bias in predictor space for linear neural networks were studied in \cite{gunasekar2018bimplicit} (as detailed in Section~\ref{sec:preliminaries}) and also in \cite{jagadeesan2021inductive,ergen2021convex,ergen2021revealing}.
Moreover, several recent works considered implications of convergence to a KKT point of the maximum-margin problem, without assuming that the KKT point is optimal: \cite{safran2022effective} proved a generalization bound in univariate depth-$2$ ReLU networks, \cite{vardi2022gradient} proved bias towards non-robust solutions in depth-$2$ ReLU networks, and \cite{haim2022reconstructing} showed that training data can be reconstructed from trained networks.
Margin maximization in predictor space for fully-connected linear networks was shown by \cite{ji2020directional} (as detailed in Section~\ref{sec:preliminaries}), and similar results under stronger assumptions were previously established in \cite{gunasekar2018bimplicit} and in \cite{ji2018gradient}.
The implicit bias in predictor space of diagonal and convolutional linear networks was studied in \cite{gunasekar2018bimplicit,moroshko2020implicit,yun2020unifying}. 
\citet{chizat2020implicit} studied the dynamics of gradient flow on infinite-width homogeneous two-layer networks with exponentially-tailed losses, and showed bias towards margin maximization w.r.t. a certain function norm known as the variation norm.
\cite{sarussi2021towards} studied gradient flow on two-layer leaky-ReLU networks, where the training data is linearly separable, and showed convergence to a linear classifier based on a certain assumption called \emph{Neural Agreement Regime (NAR)}.
\cite{phuong2020inductive} studied the implicit bias in depth-$2$ ReLU networks trained on \emph{orthogonally-separable} data.

\cite{lyu2021gradient} 
studied the implicit bias in two-layer leaky-ReLU networks trained on linearly separable and symmetric data, and showed that gradient flow converges to a linear classifier which maximizes the $\ell_2$ margin.
They also gave constructions where a KKT point is not a global max-margin solution.
We note that their constructions do not imply any of our results. In particular, for the ReLU activation they showed a construction where there exists a KKT point which is not a global optimum of the max-margin problem, however this KKT point is not reachable with gradient flow. Thus, there does not exist an initialization such that gradient flow actually converges to this point. In our construction (Theorem~\ref{thm:depth 2 relu negative}) gradient flow converges to a suboptimal KKT point with constant probability over the initialization. Moreover, even their construction for the leaky-ReLU activation (which is their main focus) considers only global suboptimality, while we show local suboptimality for the ReLU activation.

Finally, the implicit bias of neural networks in regression tasks w.r.t. the square loss was also extensively studied in recent years (e.g., \cite{gunasekar2018implicit,razin2020implicit,arora2019implicit,belabbas2020implicit,eftekhari2020implicit,li2018algorithmic,ma2018implicit,woodworth2020kernel,gidel2019implicit,li2020towards,yun2020unifying,vardi2021implicit,azulay2021implicit,timor2022implicit}). This setting, however, is less relevant to our work.

For a broader discussion on the implicit bias in neural networks, in both classification and regression tasks, see a survey in \cite{vardi2022implicit}.

\section{Preliminaries on the KKT conditions}
\label{app:KKT}

Below we review the definition of the KKT condition for non-smooth optimization problems (cf. \cite{lyu2019gradient,dutta2013approximate}).

Let $f: \reals^d \to \reals$ be a locally Lipschitz function. The Clarke subdifferential \citep{clarke2008nonsmooth} at $\bx \in \reals^d$ is the convex set
\[
	\partial^\circ f(\bx) := \text{conv} \left\{ \lim_{i \to \infty} \nabla f(\bx_i) \; \middle| \; \lim_{i \to \infty} \bx_i = \bx,\; f \text{ is differentiable at } \bx_i  \right\}~.
\]
If $f$ is continuously differentiable at $\bx$ then $\partial^\circ f(\bx) = \{\nabla f(\bx) \}$.

Consider the following optimization problem
\begin{equation}
\label{eq:KKT nonsmooth def}
	\min f(\bx) \;\;\;\; \text{s.t. } \;\;\; \forall n \in [N] \;\; g_n(\bx) \leq 0~,
\end{equation}
where $f,g_1,\ldots,g_n : \reals^d \to \reals$ are locally Lipschitz functions. We say that $\bx \in \reals^d$ is a feasible point of Problem~\ref{eq:KKT nonsmooth def} if $\bx$ satisfies $g_n(\bx) \leq 0$ for all $n \in [N]$. We say that a feasible point $\bx$ is a KKT point if there exists $\lambda_1,\ldots,\lambda_N \geq 0$ such that 
\begin{enumerate}
	\item $\zero \in \partial^\circ f(\bx) + \sum_{n \in [N]} \lambda_n \partial^\circ g_n(\bx)$;
	\item For all $n \in [N]$ we have $\lambda_n g_n(\bx) = 0$.
\end{enumerate}

\section{Proofs}
\label{app:proofs}

\subsection{Auxiliary lemmas}
\label{sec:auxiliary}

Throughout our proofs we use the following two lemmas from \cite{du2018algorithmic}:

\begin{lemma}[\cite{du2018algorithmic}]
\label{lem:from du fully connected}
	Let $m \geq 2$, and let $\Phi$ be a depth-$m$  fully-connected linear or ReLU network parameterized by $\btheta = [W_1,\ldots,W_m]$. Suppose that for every $j \in [m]$ we have $W_j \in \reals^{d_j \times d_{j-1}}$. 
	Consider minimizing any differentiable loss function (e.g., the exponential or the logistic loss) over a dataset using gradient flow.
	Then, for every $j \in [m-1]$ at all time $t$ we have 
\[
	\frac{d}{dt} \left(\norm{W_j}_F^2 - \norm{W_{j+1}}_F^2 \right) = 0~.
\]
Moreover, for every $j \in [m-1]$ and $i \in [d_j]$ we have
\[
	\frac{d}{dt} \left(\norm{W_j[i,:]}^2 - \norm{W_{j+1}[:,i]}^2 \right) = 0~,
\]
where $W_j[i,:]$ is the vector of incoming weights to the $i$-th neuron in the $j$-th hidden layer (i.e., the $i$-th row of $W_j$), and $W_{j+1}[:,i]$ is the vector of outgoing weights from this neuron (i.e., the $i$-th column of $W_{j+1}$).
\end{lemma}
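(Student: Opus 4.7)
The plan is to prove the per-neuron identity first (the second bullet), and then derive the layer-wise Frobenius identity as an immediate corollary by summation. Since gradient flow gives $\dot{W}_j = -\nabla_{W_j}\cl$, we have $\frac{d}{dt}\snorm{W_j}_F^2 = -2\inner{W_j,\nabla_{W_j}\cl}$ and analogously for each row/column; so by linearity of the loss and of the inner product it suffices to prove that for any fixed example $(\bx,y)$ and any $j\in[m-1]$, $i\in[d_j]$,
\[
\inner{W_j[i,:],\, \nabla_{W_j[i,:]}\Phi(\btheta;\bx)} \;=\; \inner{W_{j+1}[:,i],\, \nabla_{W_{j+1}[:,i]}\Phi(\btheta;\bx)}.
\]
Once this is established, multiplying by $\ell'(y\Phi(\btheta;\bx))\cdot y$ and summing over the dataset yields the equality of the corresponding components of $\nabla\cl$, which is exactly what we need.

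To prove the single-example identity, introduce the standard back-propagation quantities: let $\bh'_{j-1}$ denote the input to layer $j$, let $z_i = W_j[i,:]\cdot \bh'_{j-1}$ be the pre-activation of neuron $i$ in layer $j$, let $\bh'_j[i]=\sigma(z_i)$, and let $g_i := \partial\Phi/\partial\bh'_j[i]$ be the back-propagated error signal at the output of that neuron. Writing $\bw := W_j[i,:]$ and $\bv := W_{j+1}[:,i]$, a direct chain-rule computation gives
\[
\nabla_{\bw}\Phi \;=\; g_i\,\sigma'(z_i)\,\bh'_{j-1}, \qquad \nabla_{\bv}\Phi \;=\; \bh'_j[i]\,\frac{\partial\Phi}{\partial \bh_{j+1}}.
\]
Taking inner products with $\bw$ and $\bv$ respectively yields
\[
\inner{\bw,\nabla_{\bw}\Phi} \;=\; g_i\,\sigma'(z_i)\,z_i, \qquad \inner{\bv,\nabla_{\bv}\Phi} \;=\; \bh'_j[i]\cdot \bv^\top \frac{\partial\Phi}{\partial\bh_{j+1}}.
\]
The right-hand side of the second equation equals $\bh'_j[i]\cdot g_i$ by one more application of the chain rule (the error $g_i$ at the output of neuron $i$ is precisely $\bv^\top(\partial\Phi/\partial \bh_{j+1})$). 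The right-hand side of the first equation equals $g_i\cdot \sigma(z_i) = g_i\cdot \bh'_j[i]$ because both the linear activation ($\sigma'(z)z = z = \sigma(z)$) and the ReLU ($\sigma'(z)z = z\cdot\onefunc(z>0)=\max\{0,z\}=\sigma(z)$) satisfy the identity $\sigma'(z)\,z=\sigma(z)$. Hence both inner products equal $g_i\cdot \bh'_j[i]$, proving the claim.

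Finally, the layer-wise statement follows because $\snorm{W_j}_F^2 = \sum_{i=1}^{d_j}\snorm{W_j[i,:]}^2$ and $\snorm{W_{j+1}}_F^2 = \sum_{i=1}^{d_j}\snorm{W_{j+1}[:,i]}^2$, and we just showed that the per-$i$ derivatives agree. The only subtlety will be ReLU's non-differentiability at $0$: the derivation uses $\sigma'(z)z=\sigma(z)$, and at $z=0$ both sides vanish for \emph{any} choice of $\sigma'(0)\in[0,1]$, which covers exactly the conventions allowed by the preliminaries; so the argument carries through without having to invoke Clarke subgradients at a measure-zero set of pre-activations. This is the main point where care is needed, but it resolves cleanly and is not a substantive obstacle.
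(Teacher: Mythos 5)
The paper does not reprove this lemma; it cites it directly from \cite{du2018algorithmic} and only supplies a proof for the analogous depth-$2$ sparse-network version (Lemma~\ref{lem:extending du}). Your argument is correct and rests on exactly the same mechanism the paper uses there: reduce to a single-example, per-neuron identity; express both inner products via back-propagation; and apply the homogeneity identity $\sigma'(z)\,z=\sigma(z)$, valid for both the linear and ReLU activations (and valid at $z=0$ for any choice of $\sigma'(0)\in[0,1]$, so the non-differentiability of ReLU poses no issue). The chain-rule computations $\nabla_{\bw}\Phi=g_i\sigma'(z_i)\bh'_{j-1}$, $\nabla_{\bv}\Phi=\bh'_j[i]\,\partial\Phi/\partial\bh_{j+1}$, and $g_i=\bv^\top\partial\Phi/\partial\bh_{j+1}$ are all right, and they give both inner products equal to $g_i\,\bh'_j[i]$, from which the per-neuron conservation law and its summed (Frobenius) form follow immediately. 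A minor wording nit: in the reduction paragraph you speak of ``equality of the corresponding components of $\nabla\cl$,'' but what the per-example identity buys you after scaling by $\ell'(y_i\Phi)\,y_i$ and summing is the equality of the two inner products with $\nabla\cl$, not of gradient components themselves; the intent is clear but the phrasing is imprecise.
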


\begin{lemma}[\cite{du2018algorithmic}]
\label{lem:from du sparse and cnn}
	Let $m \geq 2$, and let $\Phi$ be a depth-$m$ linear or ReLU network in $\cn$, parameterized by $\btheta = [\bu^{(1)},\ldots,\bu^{(m)}]$.
	Consider minimizing any differentiable loss function (e.g., the exponential or the logistic loss) over a dataset using gradient flow.
	Then, for every $j \in [m-1]$ at all time $t$ we have
	\[
		\frac{d}{dt} \left(\norm{\bu^{(j)}}^2 - \norm{\bu^{(j+1)}}^2 \right) = 0~.
	\]
\end{lemma}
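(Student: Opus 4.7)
\textbf{Proof proposal for Lemma~\ref{lem:from du sparse and cnn}.}
The plan is to deduce the conservation law from the fact that, for every fixed layer $j$, the network $\Phi(\btheta;\bx)$ is positively $1$-homogeneous in the layer-parameter vector $\bu^{(j)}$. Concretely, for each input $\bx$ the output of $\Phi$ depends multiplicatively on each layer, meaning $\Phi$ can be written as $\bu^{(j)} \mapsto (W^{(m)}\cdots W^{(j+1)}) \cdot D^{(j)}(\bu^{(j)},\bh'_{j-1}) \cdot W^{(j)} \bh'_{j-1}$ where, for both the linear activation and the ReLU activation, $D^{(j)}$ is $0$-homogeneous in $\bu^{(j)}$ (being the identity for linear, and a diagonal $\{0,1\}$ matrix depending only on the signs of the pre-activations for ReLU), while $W^{(j)}$ is linear in $\bu^{(j)}$ via the pattern $g_j$. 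Consequently $\Phi(\alpha\bu^{(j)};\bx) = \alpha\,\Phi(\bu^{(j)};\bx)$ for every $\alpha>0$, and the map $\bu^{(j)} \mapsto \Phi(\btheta;\bx)$ is Lipschitz and piecewise linear.

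Given this homogeneity, I will apply Euler's identity $\langle \bu^{(j)},\,\nabla_{\bu^{(j)}}\Phi(\btheta;\bx)\rangle = \Phi(\btheta;\bx)$, which holds at every point of differentiability and, by piecewise linearity of $\Phi$ in $\bu^{(j)}$, extends to the chosen (Clarke) subgradient used by gradient flow. Summing over the data with the loss derivative yields
\[
\bigl\langle \bu^{(j)},\,\nabla_{\bu^{(j)}}\cl(\btheta)\bigr\rangle \;=\; \sum_{i=1}^n \ell'\bigl(y_i\Phi(\btheta;\bx_i)\bigr)\,y_i\,\Phi(\btheta;\bx_i),
\]
and the right-hand side is manifestly independent of $j$. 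Combined with the gradient flow ODE $\dot\bu^{(j)} = -\nabla_{\bu^{(j)}}\cl(\btheta)$, this gives
\[
\tfrac{d}{dt}\tfrac{1}{2}\snorm{\bu^{(j)}}^2 \;=\; \bigl\langle \bu^{(j)},\dot\bu^{(j)}\bigr\rangle \;=\; -\bigl\langle \bu^{(j)},\nabla_{\bu^{(j)}}\cl\bigr\rangle,
\]
so this quantity is the same for every layer $j$, and in particular the difference $\snorm{\bu^{(j)}}^2 - \snorm{\bu^{(j+1)}}^2$ has zero time derivative, which is exactly the claim.

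\textbf{Main obstacle.} The only real subtlety is the ReLU case, where $\Phi$ is non-smooth as a function of $\btheta$: the Euler identity has to be justified for the specific derivative used by gradient flow rather than only at differentiable points. I will handle this by noting that at any $\btheta$ in the interior of an activation region the standard gradient satisfies Euler's identity by pure $1$-homogeneity, and on the measure-zero set where some pre-activation vanishes the convention $\sigma'(0)\in[0,1]$ selects an element of the Clarke subdifferential that is still consistent with $1$-homogeneity, because plugging $\alpha\bu^{(j)}$ into the pre-activation does not change its sign pattern. Hence the equality $\langle \bu^{(j)},\nabla_{\bu^{(j)}}\Phi\rangle = \Phi$ is valid for the chosen derivative everywhere, and the rest of the argument goes through as written.
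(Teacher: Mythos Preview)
The paper does not give its own proof of this lemma; it is quoted verbatim from \cite{du2018algorithmic}, so there is nothing in the paper to compare against. Your argument is correct and is essentially the one in that reference: layer-wise positive $1$-homogeneity of $\Phi$ in $\bu^{(j)}$ together with Euler's identity yields $\langle \bu^{(j)},\nabla_{\bu^{(j)}}\cl(\btheta)\rangle = \sum_i \ell'(y_i\Phi(\btheta;\bx_i))\,y_i\Phi(\btheta;\bx_i)$, which is independent of $j$, and the conservation law follows immediately from the gradient-flow ODE. Two minor remarks. First, your displayed expression $(W^{(m)}\cdots W^{(j+1)})\,D^{(j)}\,W^{(j)}\bh'_{j-1}$ is only literally correct for depth $2$ or for the linear activation; for deeper ReLU networks there is a diagonal $\{0,1\}$ matrix at every hidden layer above $j$ as well. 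This does not affect your argument, since all of those matrices are also $0$-homogeneous in $\bu^{(j)}$ (positive scaling of $\bu^{(j)}$ preserves every subsequent sign pattern), so $1$-homogeneity in $\bu^{(j)}$ still holds. Second, the cleanest way to phrase your non-smoothness discussion is simply that $z\,\sigma'(z)=\sigma(z)$ holds for \emph{every} $z\in\reals$ regardless of which value in $[0,1]$ is assigned to $\sigma'(0)$; this is exactly the identity that makes the Euler relation valid for the practical gradient used by gradient flow, and it is also the device the paper uses in its proof of the closely related Lemma~\ref{lem:extending du}.
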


Note that Lemma~\ref{lem:from du sparse and cnn} considers a larger family of neural networks since it allows sparse and shared weights, but Lemma~\ref{lem:from du fully connected} gives a stronger guarantee, since it implies balancedness between the incoming and outgoing weights of each hidden neuron separately.
In our proofs we will also need to use a balancedness property for each hidden neuron separately in depth-$2$ networks with sparse weights. Since this property is not implied by the above lemmas from \cite{du2018algorithmic}, we now prove it.

Before stating the lemma, let us introduce some required notations. Let $\Phi$ be a depth-$2$ network in $\cn_{\text{no-share}}$. We can always assume w.l.o.g. that the second layer is fully connected, namely, all hidden neurons are connected to the output neuron. Indeed, otherwise we can ignore the neurons that are not connected to the output neuron. For the network $\Phi$ we use the parameterization $\btheta = [\bw_1,\ldots,\bw_k,\bv]$, where $k$ is the number of hidden neurons. For every $j \in [k]$ the vector $\bw_j \in \reals^{p_j}$ is the weights vector of the $j$-th hidden neuron, and we have $1 \leq p_j \leq d$ where $d$ is the input dimention. For an input $\bx \in \reals^d$ we denote by $\bx^j \in \reals^{p_j}$ a sub-vector of $\bx$, such that $\bx^j$ includes the coordinates of $\bx$ that are connected to the $j$-th hidden neuron. Thus, given $\bx$, the input to the $j$-th hidden neuron is $\inner{\bw_j,\bx^j}$. The vector $\bv \in \reals^k$ is the weights vector of the second layer. Overall, we have $\Phi(\btheta; \bx) = \sum_{j \in [k]} v_j \sigma(\bw_j^\top \bx^j)$.

\begin{lemma}
\label{lem:extending du}
	Let $\Phi$ be a depth-$2$ linear or ReLU network in $\cn_{\text{no-share}}$, parameterized by $\btheta = [\bw_1,\ldots,\bw_k,\bv]$. 
	Consider minimizing any differentiable loss function (e.g., the exponential or the logistic loss) over a dataset using gradient flow.
	Then, for every $j \in [k]$ at all time $t$ we have
	\[
		\frac{d}{dt} \left(\norm{\bw_j}^2 - v_j^2 \right) = 0~.
	\]
\end{lemma}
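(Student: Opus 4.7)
The plan is to follow the standard balancedness argument, adapted per-neuron, and exploit the fact that in $\cn_{\text{no-share}}$ the scalar $v_j$ and the vector $\bw_j$ are disjoint from the parameters of all other hidden neurons, so the contribution of the $j$-th neuron to $\cl$ factors cleanly. Concretely, I would first write $\cl(\btheta) = \sum_i \ell(y_i \Phi(\btheta;\bx_i))$ with $\Phi(\btheta;\bx) = \sum_{j'} v_{j'}\sigma(\bw_{j'}^\top \bx^{j'})$, and then compute, via the chain rule,
\[
\nabla_{\bw_j}\cl = \sum_i \ell'(y_i\Phi(\btheta;\bx_i))\, y_i\, v_j\, \sigma'(\bw_j^\top \bx_i^j)\, \bx_i^j, \qquad \frac{\partial \cl}{\partial v_j} = \sum_i \ell'(y_i\Phi(\btheta;\bx_i))\, y_i\, \sigma(\bw_j^\top \bx_i^j).
\]
Because there are no shared weights, the $j$-th neuron is the only place where $\bw_j$ and $v_j$ appear, so these expressions contain no cross-terms from other neurons.

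Next I would differentiate the two squared norms along the gradient flow:
\[
\tfrac{d}{dt}\|\bw_j\|^2 = -2\,\bw_j^\top \nabla_{\bw_j}\cl, \qquad \tfrac{d}{dt} v_j^2 = -2\, v_j\, \tfrac{\partial \cl}{\partial v_j}.
\]
Substituting in the formulas above gives
\[
\bw_j^\top \nabla_{\bw_j}\cl = \sum_i \ell'(y_i\Phi)\, y_i\, v_j\, \sigma'(\bw_j^\top \bx_i^j)\, (\bw_j^\top \bx_i^j),\quad v_j \tfrac{\partial \cl}{\partial v_j} = \sum_i \ell'(y_i\Phi)\, y_i\, v_j\, \sigma(\bw_j^\top \bx_i^j).
\]
The key identity is $\sigma'(z)\, z = \sigma(z)$, which holds for the linear activation trivially, and for ReLU at every $z$ including $z=0$ (since $\sigma'(0)\cdot 0 = 0 = \sigma(0)$ for any choice of $\sigma'(0)\in[0,1]$). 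Applying this termwise shows $\bw_j^\top \nabla_{\bw_j}\cl = v_j\, \tfrac{\partial \cl}{\partial v_j}$, hence $\tfrac{d}{dt}(\|\bw_j\|^2 - v_j^2) = 0$.

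I do not expect serious obstacles here. The only delicate issue is the non-differentiability of ReLU at $0$, but it dissolves because both sides of the identity vanish there regardless of the subgradient choice, exactly as in the standard arguments. The role of the $\cn_{\text{no-share}}$ assumption is to guarantee that $v_j$ multiplies no other $\bw_{j'}$-term and vice versa; with weight sharing this clean factorization would fail and only the weaker layer-wise balancedness of Lemma~\ref{lem:from du sparse and cnn} could be recovered.
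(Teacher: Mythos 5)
Your proposal is correct and follows exactly the same route as the paper's proof: compute $\nabla_{\bw_j}\cl$ and $\partial\cl/\partial v_j$, differentiate $\|\bw_j\|^2$ and $v_j^2$ along the flow, and cancel via the identity $\sigma'(z)z=\sigma(z)$. Your explicit remarks on the ReLU corner case at $z=0$ and on where the no-sharing assumption enters are welcome clarifications but do not change the argument.
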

\begin{proof}
	We have
	\[
		\cl(\btheta)
		= \sum_{i \in [n]} \ell \left( y_i \Phi(\btheta; \bx_i)\right)
		=  \sum_{i \in [n]} \ell \left( y_i \sum_{l \in [k]}v_l \sigma(\bw_l^\top \bx_i^j) \right)~.
	\]
	Hence
	\begin{align*}
		\frac{d}{dt} \left(\norm{\bw_j}^2 \right)
		&= 2 \inner{\bw_j, \frac{d \bw_j}{dt} }
		= -2 \inner{\bw_j, \nabla_{\bw_j} \cl(\btheta)}
		\\
		&= -2  \sum_{i \in [n]} \ell' \left( y_i \sum_{l \in [k]}v_l \sigma(\bw_l^\top \bx_i^l) \right) \cdot y_i v_j \sigma'(\bw_j^\top \bx_i^j) \bw_j^\top \bx_i^j
		\\
		&= -2  \sum_{i \in [n]} \ell' \left( y_i \sum_{l \in [k]}v_l \sigma(\bw_l^\top \bx_i^l) \right) \cdot y_i v_j \sigma(\bw_j^\top \bx_i^j)~.
	\end{align*}
	Moreover,
	\begin{align*}
		\frac{d}{dt} \left(v_j^2 \right)
		&= 2 v_j \frac{d v_j}{dt} 
		= - 2 v_j \nabla_{v_j} \cl(\btheta)
		\\
		&= -2 v_j  \sum_{i \in [n]} \ell' \left( y_i \sum_{l \in [k]}v_l \sigma(\bw_l^\top \bx_i^l) \right) \cdot y_i  \sigma(\bw_j^\top \bx_i^j)~.
	\end{align*}
	Hence the lemma follows.
\end{proof}

Using the above lemma, we show the following:

\begin{lemma}
\label{lem:balanced tilde}
	Let $\Phi$ be a depth-$2$ linear or ReLU network in $\cn_{\text{no-share}}$, parameterized by $\btheta = [\bw_1,\ldots,\bw_k,\bv]$. 
	Consider minimizing any differentiable loss function (e.g., the exponential or the logistic loss) over a dataset using gradient flow starting from $\btheta(0)$.
	Assume that $\lim_{t \to \infty}\norm{\btheta(t)} = \infty$ and that $\btheta(t)$ converges in direction to $\tilde{\btheta} = [\tilde{\bw}_1,\ldots,\tilde{\bw}_k,\tilde{\bv}]$, i.e., $\tilde{\btheta} = \snorm{\tilde{\btheta}} \cdot \lim_{t \to \infty}\frac{\btheta(t)}{\norm{\btheta(t)}}$. 
	Then, for every $l \in [k]$ we have $\norm{\tilde{\bw}_l} = |\tilde{v}_l |$.
\end{lemma}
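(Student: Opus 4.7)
The plan is to combine the conservation law from Lemma~\ref{lem:extending du} with the directional convergence assumption, by dividing the conserved quantity by $\norm{\btheta(t)}^2$ and taking $t \to \infty$.

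First, I would apply Lemma~\ref{lem:extending du} to conclude that for every $l \in [k]$, the quantity $\norm{\bw_l(t)}^2 - v_l(t)^2$ is constant along the gradient flow trajectory. Denote this constant by $C_l := \norm{\bw_l(0)}^2 - v_l(0)^2$, which is some finite real number determined by the initialization.

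Next, I would translate directional convergence into componentwise convergence of the rescaled parameters. Since $\btheta(t)/\norm{\btheta(t)} \to \tilde\btheta/\snorm{\tilde\btheta}$, reading off the blocks corresponding to $\bw_l$ and $v_l$ gives $\bw_l(t)/\norm{\btheta(t)} \to \tilde{\bw}_l/\snorm{\tilde\btheta}$ and $v_l(t)/\norm{\btheta(t)} \to \tilde{v}_l/\snorm{\tilde\btheta}$. Taking norms/squares and using continuity, I get
\[
\frac{\norm{\bw_l(t)}^2}{\norm{\btheta(t)}^2} \xrightarrow[t\to\infty]{} \frac{\norm{\tilde{\bw}_l}^2}{\snorm{\tilde\btheta}^2}, \qquad \frac{v_l(t)^2}{\norm{\btheta(t)}^2} \xrightarrow[t\to\infty]{} \frac{\tilde{v}_l^2}{\snorm{\tilde\btheta}^2}.
\]

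Finally, dividing the conservation identity by $\norm{\btheta(t)}^2$ yields
\[
\frac{\norm{\bw_l(t)}^2 - v_l(t)^2}{\norm{\btheta(t)}^2} = \frac{C_l}{\norm{\btheta(t)}^2}.
\]
The right-hand side tends to $0$ by the assumption $\norm{\btheta(t)} \to \infty$, while the left-hand side tends to $(\norm{\tilde{\bw}_l}^2 - \tilde{v}_l^2)/\snorm{\tilde\btheta}^2$. Comparing limits and multiplying by $\snorm{\tilde\btheta}^2 > 0$ (which is positive as $\tilde\btheta$ is a nonzero direction) gives $\norm{\tilde{\bw}_l}^2 = \tilde{v}_l^2$, hence $\norm{\tilde{\bw}_l} = |\tilde{v}_l|$, as required.

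There is no real obstacle here; the only subtlety is to make sure that $\snorm{\tilde\btheta} > 0$ so that the limit identity can be rescaled, and that $C_l$ is finite so the vanishing of $C_l/\norm{\btheta(t)}^2$ is legitimate—both of which follow immediately from the hypotheses and the fact that $\btheta(0)$ is a fixed finite initialization.
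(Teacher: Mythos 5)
Your proof is correct and uses the same key ingredient as the paper (the conservation law from Lemma~\ref{lem:extending du}), but your execution is a bit cleaner: by dividing the full conserved difference $\norm{\bw_l(t)}^2 - v_l(t)^2 = C_l$ by $\norm{\btheta(t)}^2$ and taking limits directly, you avoid the case split the paper performs on whether $\norm{\bw_l(t)} \to \infty$. Both arrive at $\norm{\tilde{\bw}_l}^2 = \tilde{v}_l^2$ the same way in substance, so this is essentially the paper's argument streamlined.
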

\begin{proof}
	For every $l \in [k]$, let $\Delta_l = \norm{\bw_l(0)}^2 - v_l(0)^2$.
By Lemma~\ref{lem:extending du}, we have for every $l \in [k]$ and $t \geq 0$ that  $\norm{\bw_l(t)}^2 - v_l(t)^2 = \Delta_l$, namely, the differences between the square norms of the incoming and outgoing weights of each hidden neuron remain constant during the training. 
Hence, we have
\[
	|\tilde{v}_l| 
	= \snorm{\tilde{\btheta}} \cdot \lim_{t \to \infty} \frac{| v_l(t) |}{\norm{\btheta(t)}}
	=  \snorm{\tilde{\btheta}} \cdot \lim_{t \to \infty} \frac{\sqrt{ \norm{\bw_l(t)}^2 - \Delta_l }}{\norm{\btheta(t)}}~.
\]
Thus, if $\lim_{t \to \infty}\norm{\bw_l(t)} = \infty$, then we have $| \tilde{v}_l | =  \snorm{\tilde{\btheta}}  \cdot \lim_{t \to \infty} \frac{\norm{\bw_l(t)}}{\norm{\btheta(t)}}=\norm{\tilde{\bw}_l}$.

Assume now that $\norm{\bw_l(t)} \not \to \infty$. By the definition of $\tilde{\btheta}$ we have $\norm{\tilde{\bw}_l} = \snorm{\tilde{\btheta}}  \cdot \lim_{t \to \infty} \frac{\norm{\bw_l(t)}}{\norm{\btheta(t)}}$. Since $\lim_{t \to \infty} \frac{\norm{\bw_l(t)}}{\norm{\btheta(t)}}$ exists and $\lim_{t \to \infty}\norm{\btheta(t)} = \infty$, then we have $\lim_{t \to \infty} \frac{\norm{\bw_l(t)}}{\norm{\btheta(t)}} = 0$. Hence, $\lim_{t \to \infty} \frac{| v_l(t) |}{\norm{\btheta(t)}} =  \lim_{t \to \infty} \frac{\sqrt{ \norm{\bw_l(t)}^2 - \Delta_l}}{\norm{\btheta(t)}}=0$. 
Therefore  $\norm{\tilde{\bw}_l} = \tilde{v}_l = 0$.
\end{proof}

\subsection{Proof of Theorem~\ref{thm:deep positive linear}}
\label{app:proof of deep positive linear}

Suppose that the network $\Phi$ is parameterized by $\btheta = [W^{(1)},\ldots,W^{(m)}]$. 
By Theorem~\ref{thm:known KKT}, gradient flow converges in direction to a KKT point $\tilde{\btheta} = [\tilde{W}^{(1)},\ldots,\tilde{W}^{(m)}]$ of Problem~\ref{eq:optimization problem}.
For every $l \in [m]$ let $\Delta_l = \norm{W^{(l)}(0)}_F^2 - \norm{W^{(1)}(0)}_F^2$.
By Lemma~\ref{lem:from du fully connected}, we have for every $l \in [m]$ and $t \geq 0$ that 
\begin{align*}
	\norm{W^{(l)}(t)}_F^2 - \norm{W^{(1)}(t)}_F^2 
	&= \sum_{j=1}^{l-1} \norm{W^{(j+1)}(t)}_F^2 - \norm{W^{(j)}(t)}_F^2
	= \sum_{j=1}^{l-1} \norm{W^{(j+1)}(0)}_F^2 - \norm{W^{(j)}(0)}_F^2
	\\
	&= \norm{W^{(l)}(0)}_F^2 - \norm{W^{(1)}(0)}_F^2 
	= \Delta_l~.
\end{align*}
Hence, we have
\[
	\norm{\tilde{W}^{(l)}}_F 
	= \snorm{\tilde{\btheta}} \cdot \lim_{t \to \infty} \frac{\norm{W^{(l)}(t)}_F }{\norm{\btheta(t)}}
	=  \snorm{\tilde{\btheta}} \cdot \lim_{t \to \infty} \frac{\sqrt{\norm{W^{(1)}(t)}_F^2 + \Delta_l }}{\norm{\btheta(t)}}~.
\]
Since by Theorem~\ref{thm:known KKT} we have $\lim_{t \to \infty} \norm{\btheta(t)} = \infty$, then $\lim_{t \to \infty}\norm{W^{(1)}(t)}_F  = \infty$, and we have
\[
	\norm{\tilde{W}^{(l)}}_F 
	=  \snorm{\tilde{\btheta}} \cdot \lim_{t \to \infty} \frac{\norm{W^{(1)}(t)}_F}{\norm{\btheta(t)}}
	= \norm{\tilde{W}^{(1)}}_F
	:= C~.
\]

By \cite{ji2020directional} (Proposition~4.4), when gradient flow on a fully-connected linear network w.r.t. the exponential loss or the logistic loss converges to zero loss, then we have the following.
There are unit vectors $\bv_0,\ldots,\bv_m$ such that 
\begin{equation*}
\label{eq:linear align}
	\lim_{t \to \infty} \frac{W^{(l)}(t)}{\norm{W^{(l)}(t)}_F} = \bv_l \bv_{l-1}^\top
\end{equation*} 
for every $l \in [m]$. Moreover, we have $\bv_m = 1$, and $\bv_0 = \bu$ where 
\[
	\bu := \argmax_{\norm{\bu}=1} \min_{i \in [n]} y_i \bu^\top \bx_i
\] 
is the unique linear max margin predictor.

Note that we have
\begin{align*}
	\frac{\tilde{W}^{(l)}}{C}
	= \frac{\tilde{W}^{(l)}}{\norm{\tilde{W}^{(l)}}_F}
	= \frac{ \snorm{\tilde{\btheta}} \cdot \lim_{t \to \infty} \frac{W^{(l)}(t) }{\norm{\btheta(t)}}}{ \snorm{\tilde{\btheta}} \cdot \lim_{t \to \infty} \frac{\norm{W^{(l)}(t)}_F }{\norm{\btheta(t)}}}
	= \lim_{t \to \infty} \frac{W^{(l)}(t)}{\norm{W^{(l)}(t)}_F} 
	= \bv_l \bv_{l-1}^\top~.
\end{align*}
Thus, $\tilde{W}^{(l)} = C \bv_l \bv_{l-1}^\top$ for every $l \in [m]$.

Let $\tilde{\bu} = \tilde{W}^{(m)} \cdot \ldots \cdot  \tilde{W}^{(1)} = C^m \bu$. Since $\tilde{\btheta}$ is a KKT point of Problem~\ref{eq:optimization problem}, we have for every $l \in [m]$
\[
	\tilde{W}^{(l)} = \sum_{i \in [n]} \lambda_i y_i \frac{\partial \Phi(\tilde{\btheta}; \bx_i)}{\partial W^{(l)}}~,
\]
where $\lambda_i \geq 0$ for every $i$, and $\lambda_i=0$ if $y_i \Phi(\tilde{\btheta}; \bx_i) \neq 1$. Since $\tilde{W}^{(l)}$ are non-zero then there is $i \in [n]$ such that $1= y_i \Phi(\tilde{\btheta}; \bx_i) = y_i \tilde{\bu}^\top \bx_i = y_i C^m \bu^\top \bx_i$. 
Likewise, since $\tilde{\btheta}$ satisfies the constraints of Problem~\ref{eq:optimization problem}, then for every $i \in [n]$ we have $1 \leq y_i \Phi(\tilde{\btheta}; \bx_i) =  y_i C^m \bu^\top \bx_i$. 
Since, $\bu$ is a unit vector that maximized the margin, then we have 
\begin{equation}
\label{eq:tildeu minimal}
	\norm{\tilde{\bu}} = 
	C^m = \min \norm{\bu'} \text{ s.t. } y_i \bu'^\top \bx_i \geq 1 \text{ for all } i \in [n]~.
\end{equation}

Assume toward contradiction that there is $\btheta'$ with $\snorm{\btheta'} < \snorm{\tilde{\btheta}}$ that satisfies the constraints in Problem~\ref{eq:optimization problem}. Let $\bu' =  W'^{(m)} \cdot \ldots \cdot W'^{(1)}$. By Eq.~\ref{eq:tildeu minimal} we have $\snorm{\bu'} \geq \snorm{\tilde{\bu}} = C^m$. Moreover, we have $\norm{\bu'}  = \norm{W'^{(m)} \cdot \ldots \cdot W'^{(1)}} \leq \prod_{l \in [m]} \norm{W'^{(l)}}_F$ due to the submultiplicativity of the Frobenius norm. Hence $\prod_{l \in [m]} \norm{W'^{(l)}}_F \geq C^m$. The following lemma implies that 
\[
	\norm{\btheta'}^2 = \sum_{l \in [m]} \norm{W'^{(l)}}_F^2 \geq m \cdot C^2 =  \sum_{l \in [m]} \norm{\tilde{W}^{(l)}}_F^2 = \norm{\tilde{\btheta}}^2
\]
in contradiction to our assumption, and thus completes the proof.

\begin{lemma}
	Let $a_1,\ldots,a_m$ be real numbers such that $\prod_{j \in [m]} a_j \geq C^m$ for some $C \geq 0$. Then $\sum_{j \in [m]} a_j^2 \geq m \cdot C^2$.
\end{lemma}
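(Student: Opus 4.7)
The natural tool is the AM--GM inequality applied to the nonnegative numbers $a_1^2,\ldots,a_m^2$. The plan is to first convert the hypothesis $\prod_j a_j \geq C^m$ into a bound on $\prod_j a_j^2$, and then invoke AM--GM.

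First I would observe that since $C \geq 0$ and $\prod_j a_j \geq C^m \geq 0$, squaring preserves the inequality:
\begin{equation*}
    \prod_{j \in [m]} a_j^2 \;=\; \left(\prod_{j \in [m]} a_j\right)^{2} \;\geq\; (C^m)^2 \;=\; C^{2m}.
\end{equation*}
This step is what allows the argument to work for possibly negative $a_j$; without squaring we could not apply AM--GM directly, since AM--GM requires nonnegative inputs.

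Next I would apply the AM--GM inequality to the $m$ nonnegative numbers $a_1^2,\ldots,a_m^2$:
\begin{equation*}
    \frac{1}{m}\sum_{j \in [m]} a_j^2 \;\geq\; \left(\prod_{j \in [m]} a_j^2\right)^{1/m} \;\geq\; \left(C^{2m}\right)^{1/m} \;=\; C^{2}.
\end{equation*}
Multiplying through by $m$ yields $\sum_{j \in [m]} a_j^2 \geq m \cdot C^2$, as required.

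There is no real obstacle here; the only subtlety is making sure the argument covers the case where some of the $a_j$ might be negative, which is handled cleanly by passing through $a_j^2$ before invoking AM--GM. If $C = 0$ the statement is trivial since the left-hand side is a sum of squares.
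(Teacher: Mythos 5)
Your proof is correct, but it takes a genuinely different route from the paper. The paper argues via constrained optimization: it first reduces to the equality case $\prod_j a_j = C^m$ by scaling down some $a_j$, then applies Lagrange multipliers to the problem $\min \frac{1}{2}\sum_j a_j^2$ subject to $\prod_j a_j = C^m$, deduces $a_1^2 = \cdots = a_m^2$ at any critical point, and concludes $|a_j| = C$ for all $j$. Your approach instead squares the hypothesis (valid since both sides are nonnegative) to get $\prod_j a_j^2 \geq C^{2m}$ and then invokes AM--GM on the nonnegative numbers $a_1^2,\ldots,a_m^2$. Your argument is shorter and arguably tighter: the paper's Lagrange-multiplier step identifies critical points but does not explicitly verify that the infimum over the non-compact constraint set is actually attained, whereas AM--GM is a direct inequality with no such existence question. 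Both proofs handle possible negativity of the $a_j$ by passing to $a_j^2$ (the paper via $|a_j| = C$, you via squaring the product up front), so neither has a gap there.
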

\begin{proof}
	It suffices to prove the claim for the case where $\prod_{j \in [m]} a_j = C^m$. Indeed, if $\prod_{j \in [m]} a_j > C^m$ then we can replace some $a_j$ with an appropriate $a'_j$ such that $|a'_j| < |a_j|$ and we only decrease $\sum_{j \in [m]} a_j^2$.	
	Consider the following problem
	\[
		\min \frac{1}{2} \sum_{j \in [m]} a_j^2 \;\;\;\; \text{s.t. } \;\;\;\prod_{j \in [m]} a_j = C^m~.
	\]
	Using the Lagrange multipliers we obtain that there is some $\lambda \in \reals$ such that for every $l \in [m]$ we have $a_l = \lambda \cdot \prod_{j \neq l} a_j$. Thus, $a_l^2 = \lambda \cdot \prod_{j \in [m]} a_j$. It implies that $a_1^2 = \ldots = a_m^2$. Since $\prod_{j \in [m]} a_j = C^m$ then $|a_j|=C$ for every $j \in [m]$. Hence,  $\sum_{j \in [m]} a_j^2 = m C^2$.
\end{proof}

\subsection{Proof of Theorem~\ref{thm:depth 2 relu negative}}
\label{app:proof of depth 2 relu negative}

Consider an
initialization $\btheta(0)$ is such that $\bw_1(0)$ satisfies $\inner{\bw_1(0),\bx_1}>0$ and $\inner{\bw_1(0),\bx_2}>0$, and $\bw_2(0)$ satisfies $\inner{\bw_2(0),\bx_1}<0$ and $\inner{\bw_2(0),\bx_2}<0$. Moreover, assume that $v_1(0)>0$.

Note that for every $\btheta$ such that $\inner{\bw_2,\bx_1}<0$ and $\inner{\bw_2,\bx_2}<0$ we have
\begin{align*}
	\nabla_{\bw_2} \cl(\btheta) 
	&= \sum_{i=1}^2 \ell'(y_i \Phi(\btheta; \bx_i)) \cdot y_i \nabla_{\bw_2}\Phi(\btheta; \bx_i) 
	=  \sum_{i=1}^2 \ell'(y_i \Phi(\btheta; \bx_i)) \cdot y_i \nabla_{\bw_2} \left[ v_1 \sigma(\bw_1^\top \bx_i) + v_2 \sigma(\bw_2^\top \bx_i) \right]
	\\
	&=  \sum_{i=1}^2 \ell'(y_i \Phi(\btheta; \bx_i)) \cdot y_i  v_2 \sigma'(\bw_2^\top \bx_i) \bx_i
	= \zero~.
\end{align*}
and 
\begin{align*}
	\nabla_{v_2} \cl(\btheta) 
	&= \sum_{i=1}^2 \ell'(y_i \Phi(\btheta; \bx_i)) \cdot y_i \nabla_{v_2}\Phi(\btheta; \bx_i) 
	= \sum_{i=1}^2 \ell'(y_i \Phi(\btheta; \bx_i)) \cdot y_i \nabla_{v_2} \left[ v_1 \sigma(\bw_1^\top \bx_i) + v_2 \sigma(\bw_2^\top \bx_i) \right]
	\\
	&=  \sum_{i=1}^2 \ell'(y_i \Phi(\btheta; \bx_i)) \cdot y_i \sigma(\bw_2^\top \bx_i) 
	= 0~.
\end{align*}
Hence, $\bw_2$ and $v_2$ get stuck in their initial values.
Moreover, we have
\begin{align*}
	\nabla_{v_1} \cl(\btheta) 
	= \sum_{i=1}^2 \ell'(y_i \Phi(\btheta; \bx_i)) \cdot y_i \nabla_{v_1} \left[ v_1 \sigma(\bw_1^\top \bx_i) + v_2 \sigma(\bw_2^\top \bx_i) \right]
	=  \sum_{i=1}^2 \ell'(y_i \Phi(\btheta; \bx_i)) \cdot \sigma(\bw_1^\top \bx_i) 
	\leq 0~.
\end{align*}
Therefore, for every $t \geq 0$ we have $v_1(t) \geq v_1(0) > 0$.

We denote $\bw_1 = (w_1[1],w_1[2])$. Since $\inner{\bw_1(0),\bx_j}>0$ for $j \in \{1,2\}$ then $w_1[2](0) > 0$. Assume w.l.o.g. that $w_1[1](0) \geq 0$ (the case where $w_1[1](0) \leq 0$ is similar).
For every $\bw_1$ that satisfies $w_1[2] \geq 0$ and $0 \leq w_1[1] \leq w_1[1](0)$ we have $\inner{\bw_1,\bx_1}>\inner{\bw_1,\bx_2}>0$. Thus,
\begin{align*}
	\nabla_{\bw_1} \cl(\btheta) 
	&=  \sum_{i=1}^2 \ell'(y_i \Phi(\btheta; \bx_i)) \cdot y_i \nabla_{\bw_1} \left[ v_1 \sigma(\bw_1^\top \bx_i) + v_2 \sigma(\bw_2^\top \bx_i) \right]
	\\
	&=  \sum_{i=1}^2 \ell'(y_i (v_1 \sigma(\bw_1^\top \bx_i) + 0)) \cdot y_i  v_1 \sigma'(\bw_1^\top \bx_i) \bx_i
	\\
	&= \sum_{i=1}^2 \ell'(v_1 \bw_1^\top \bx_i) \cdot  v_1 \bx_i~.
\end{align*}
Since $\ell'$ is negative and monotonically increasing, and since $v_1 \bw_1^\top \bx_1 > v_1 \bw_1^\top \bx_2$, then $\frac{d w_1[1]}{dt} \leq 0$. 
Also, $\frac{d w_1[2]}{dt} > 0$. 
Moreover, if $w_1[1]=0$ then $v_1 \bw_1^\top \bx_1 = v_1 \bw_1^\top \bx_2$ and thus $\frac{d w_1[1]}{dt} = 0$.
Hence, for every $t$ we have $w_1[2](t) \geq w_1[2](0) > 0$ and $0 \leq w_1[1](t) \leq w_1[1](0)$.

If $\cl(\btheta) \geq 1$ then for some $i \in \{1,2\}$ we have $\ell(y_i \Phi(\btheta;\bx_i)) \geq \frac{1}{2}$ and hence $\ell'(y_i \Phi(\btheta;\bx_i)) \leq c$ for some constant $c<0$. Since we also have $v_1 \geq v_1(0) > 0$, we have
\[
	\frac{d w_1[2]}{dt} \geq -c \cdot v_1(0) \cdot \frac{1}{4}~.
\]
Therefore, if the initialization $\btheta(0)$ is such that $\cl(\btheta) \geq 1$ then $w_1[2](t)$ increases at rate at least $\frac{(-c) \cdot v_1(0)}{4}$ while $w_1[1](t)$ remains in $[0,w_1[1](0)]$. Note that for such $w_1[1]$  and $v_1 \geq v_1(0) > 0$, if $w_1[2]$ is sufficiently large then we have $v_1 \inner{\bw_1,\bx_i} \geq 1$ for $i \in \{1,2\}$.  Hence, there is some $t_0$ such that  $\cl(\btheta(t_0)) \leq 2 \ell(1) < 1$ for both the exponential loss and the logistic loss.

Therefore,
by Theorem~\ref{thm:known KKT} gradient flow converges in direction to a KKT point of Problem~\ref{eq:optimization problem}, and we have $
\lim_{t \to \infty}\cl(\btheta(t))=0$ and $\lim_{t \to \infty}\norm{\btheta(t)}=\infty$. It remains to show that it does not converge in direction to a local optimum of Problem~\ref{eq:optimization problem}.

Let $\bar{\btheta} = \lim_{t \to \infty} \frac{\btheta(t)}{\norm{\btheta(t)}}$. We denote $\bar{\btheta} = [\bar{\bw}_1,\bar{\bw}_2,\bar{v}_1,\bar{v}_2]$.
We show that $\bar{\bw}_1  = \frac{1}{\sqrt{2}} (0,1)^\top$, $\bar{v}_1 = \frac{1}{\sqrt{2}}$, $\bar{\bw}_2 = \zero$ and $\bar{v}_2 = 0$. 
By Lemma~\ref{lem:from du fully connected}, we have for every $t \geq 0$ that  $v_1(t)^2 - \norm{\bw_1(t)}^2 = v_1(0)^2 - \norm{\bw_1(0)}^2 := \Delta$. Since for every $t$ we have $\bw_2(t) = \bw_2(0)$ and $v_2(t)=v_2(0)$, and since $\lim_{t \to \infty}\norm{\btheta(t)} = \infty$ then we have $\lim_{t \to \infty}\norm{\bw_1(t)}=\infty$ and $\lim_{t \to \infty} | v_1(t) |=\infty$. 
Also, since  $\lim_{t \to \infty}\norm{\bw_1(t)}=\infty$ and $w_1[1](t) \in [0,w_1[1](0)]$ then $\lim_{t \to \infty} w_1[2](t) = \infty$.
Note that
\[
	\norm{\btheta(t)} 
	= \sqrt{ \norm{\bw_1(t)}^2 + v_1(t)^2 + \norm{\bw_2(0)}^2 + v_2(0)^2 }
	= \sqrt{ \Delta + 2\norm{\bw_1(t)}^2 + \norm{\bw_2(0)}^2 + v_2(0)^2 }~.
\]
Since $w_1[1](t) \in [0,w_1[1](0)]$ and $\norm{\btheta(t)} \to \infty$, we have
\[
	\bar{\bw}_1[1] 
	= \lim_{t \to \infty} \frac{w_1[1](t)}{\norm{\btheta(t)}}
	= 0~.
\]
Moreover,
\[
	\bar{\bw}_1[2] 
	= \lim_{t \to \infty} \frac{w_1[2](t)}{\norm{\btheta(t)}}
	= \lim_{t \to \infty} \sqrt{ \frac{(w_1[2](t))^2}{\Delta + 2 (w_1[1](t))^2 + 2 (w_1[2](t))^2 + \norm{\bw_2(0)}^2 + v_2(0)^2}}
	= \frac{1}{\sqrt{2}}~,
\]
and
\[
	\bar{\bw}_2 
	= \lim_{t \to \infty} \frac{\bw_2(t)}{\norm{\btheta(t)}}
	= \lim_{t \to \infty} \frac{\bw_2(0)}{\norm{\btheta(t)}}
	= \zero~.
\]
Finally, by Lemma~\ref{lem:balanced tilde} and since $v_1(t) > 0$, we have $\bar{v}_1 = \norm{\bar{\bw}_1} = \frac{1}{\sqrt{2}}$. By Lemma~\ref{lem:balanced tilde} we also have $| \bar{v}_2 | =  \norm{\bar{\bw}_2} = 0$.

Next, we show that $\bar{\btheta}$ does not point at the direction of a local optimum of Problem~\ref{eq:optimization problem}.
Let $\tilde{\btheta} = [\tilde{\bw}_1,\tilde{\bw}_2,\tilde{v}_1,\tilde{v}_2]$ be a KKT point of Problem~\ref{eq:optimization problem} that points at the direction of $\bar{\btheta}$. Such $\tilde{\btheta}$ exists since $\btheta(t)$ converges in direction to a KKT point. Thus, we have $\tilde{\bw}_2 = \zero$, $\tilde{v}_2 = 0$, $\tilde{\bw}_1 = \alpha (0,1)^\top$ and $\tilde{v}_1 = \alpha$ for some $\alpha>0$.
Since $\tilde{\btheta}$ satisfies the KKT conditions, we have
\[
	\tilde{\bw}_1
	= \sum_{i=1}^2 \lambda_i \nabla_{\bw_1} \left( y_i \Phi(\tilde{\btheta}; \bx_i) \right)
	= \sum_{i=1}^2 \lambda_i y_i \left(\tilde{v}_1 \sigma'(\tilde{\bw}_1^\top \bx_i) \bx_i \right)~,
\]
where $\lambda_i \geq 0$ and $\lambda_i=0$ if $y_i \Phi(\tilde{\btheta};\bx_i) \neq 1$. 
Note that the KKT condition should be w.r.t. the Clarke subdifferential, but since $\tilde{\bw}_1^\top \bx_i > 0$ for $i \in \{1,2\}$ then we use here the gradient.
Hence, there is $i \in \{1,2\}$ such that $y_i \Phi(\tilde{\btheta};\bx_i) = 1$. Thus,
\[
	1 
	= y_i \Phi(\tilde{\btheta};\bx_i) 
	= \tilde{v}_1 \sigma(\tilde{\bw}_1^\top \bx_i) + \tilde{v}_2 \sigma(\tilde{\bw}_2^\top \bx_i)
	= \alpha \cdot \frac{\alpha}{4} + 0
	=  \frac{\alpha^2}{4}~.
\]
Therefore, $\alpha=2$ and we have $\tilde{\bw}_1 = (0,2)^\top$ and $\tilde{v}_1 = 2$.

In order to show that $\tilde{\btheta}$ is not a local optimum, we show that for every $0<\epsilon'<1$ there exists some $\btheta'$ such that $\norm{\btheta' - \tilde{\btheta}} \leq \epsilon'$, $\btheta'$ satisfies $\Phi(\btheta'; \bx_i) \geq 1$ for every $i \in \{1,2\}$, and $\snorm{\btheta'} < \snorm{\tilde{\btheta}}$.
Let $\epsilon = \frac{\epsilon'^2}{9} < \frac{1}{2}$.
Let $\btheta'= [\bw'_1,\bw'_2,v'_1,v'_2]$ be such that $\bw'_1 = (\frac{\epsilon}{2},2-2\epsilon)^\top$, $\bw'_2 = (-\sqrt{2\epsilon},0)^\top$, $v'_1 = 2$ and $v'_2 = \sqrt{2\epsilon}$. Note that 
\begin{align*}
	\Phi(\btheta'; \bx_1)
	&= 2 \cdot \sigma\left( (\frac{\epsilon}{2},2-2\epsilon) (1,\frac{1}{4})^\top\right) + \sqrt{2\epsilon} \cdot \sigma\left( (-\sqrt{2\epsilon},0) (1,\frac{1}{4})^\top\right)
	\\
	&= 2 \cdot \sigma\left(  \frac{\epsilon}{2} + \frac{1}{2} - \frac{\epsilon}{2} \right) + \sqrt{2\epsilon} \cdot \sigma\left(-\sqrt{2\epsilon} \right)
	= 1~,
\end{align*}
and
\begin{align*}
	\Phi(\btheta'; \bx_2)
	&= 2 \cdot \sigma\left( (\frac{\epsilon}{2},2-2\epsilon) (-1,\frac{1}{4})^\top\right) + \sqrt{2\epsilon} \cdot \sigma\left( (-\sqrt{2\epsilon},0) (-1,\frac{1}{4})^\top\right)
	\\
	&= 2 \cdot \sigma \left( -\frac{\epsilon}{2} + \frac{1}{2} - \frac{\epsilon}{2} \right) +  \sqrt{2\epsilon} \cdot \sigma \left( \sqrt{2\epsilon} \right) 
	= 1-2\epsilon + 2\epsilon
	= 1~.
\end{align*}
We also have
\begin{align*}
	\norm{\btheta' - \tilde{\btheta}}^2
	 &= \norm{\bw'_1 - \tilde{\bw}_1}^2 + \norm{\bw'_2 - \tilde{\bw}_2}^2 + (v'_1 - \tilde{v}_1)^2 + (v'_2 - \tilde{v}_2)^2
	 \\
	 &= \left( \frac{\epsilon^2}{4} + 4 \epsilon^2 \right) + 2\epsilon + 0 + 2\epsilon
	 <9 \epsilon
	  = \epsilon'^2~.
\end{align*}
Finally, we have
\[
	\norm{\btheta'}^2 
	= \frac{\epsilon^2}{4} + 4 -8 \epsilon + 4 \epsilon^2 + 2 \epsilon + 4 + 2 \epsilon
	= 8 - 4 \epsilon + \frac{17 \epsilon^2}{4}
	< 8 - 4 \epsilon + \frac{17 \epsilon}{8}
	< 8 
	= \norm{\tilde{\btheta}}^2~.
\]
Thus, $\snorm{\btheta'} < \snorm{\tilde{\btheta}}$.

\subsection{Proof of Theorem~\ref{thm:depth 2 linear negative}}
\label{app:proof of depth 2 linear negative}

Let $\bx = (1,2)^\top$ and $y=1$. Let $\btheta(0)$ such that $\bw_1(0) = \bw_2(0) = (1,0)^\top$. 
Note that $\cl(\btheta(0)) = \ell(1) < 1$ for both linear and ReLU networks with the exponential loss or the logistic loss, and therefore by Theorem~\ref{thm:known KKT} gradient flow converges in direction to a KKT point $\tilde{\btheta}$ of Problem~\ref{eq:optimization problem}, and we have  $\lim_{t \to \infty}\cl(\btheta(t))=0$ and $\lim_{t \to \infty}\norm{\btheta(t)}=\infty$.
We denote $\bw_1 = (\bw_1[1],\bw_1[2])^\top$ and $\bw_2 = (\bw_2[1],\bw_2[2])^\top$.
Note that the initialization $\btheta(0)$ is such that the second hidden neuron has $0$ in both its incoming and outgoing weights. Hence, the gradient w.r.t. $\bw_1[2]$ and $\bw_2[2]$ is zero, and the second hidden neuron remains inactive during the training. Moreover, $\bw_1[1]$ and $\bw_2[1]$ are strictly increasing. Also, by Lemma~\ref{lem:extending du} we have for every $t \geq 0$ that $\bw_1[1](t)^2 = \bw_2[1](t)^2$. Overall, $\tilde{\btheta}$ is such that $\tilde{\bw}_1 = \tilde{\bw}_2 = (1,0)^\top$. Note that since the dataset is of size $1$, then every KKT point of Problem~\ref{eq:optimization problem} must label the input $\bx$ with exactly $1$.   

It remains to show that $\tilde{\btheta}$ is not local optimum. 
Let $0<\epsilon<1$, and let $\btheta' = [\bw'_1,\bw'_2]$ with $\bw'_1 = \bw'_2 = \left(\sqrt{1-\epsilon}, \sqrt{\frac{\epsilon}{2}}\right)^\top$. Note that $\btheta'$ satisfies the constraints of Problem~\ref{eq:optimization problem}, since $y \cdot \Phi(\btheta'; \bx) = 1-\epsilon + 2 \cdot \frac{\epsilon}{2} = 1$. Moreover, we have $\snorm{\tilde{\btheta}}^2 = 2$ and $\snorm{\btheta'}^2 = 2 \left(1-\epsilon + \frac{\epsilon}{2} \right) = 2-\epsilon$ and therefore $\snorm{\btheta'} < \snorm{\tilde{\btheta}}$.

\subsection{Proof of Theorem~\ref{thm:depth 2 linear}}
\label{app:proof of depth 2 linear}

\subsubsection{Proof of part 1}

We assume w.l.o.g. that the second layer is fully-connected, namely, all hidden neurons are connected to the output neuron, since otherwise we can ignore disconnected neurons.
For the network $\Phi$ we use the parameterization $\btheta = [\bw_1,\ldots,\bw_k,\bv]$ introduced in Section~\ref{sec:auxiliary}. Thus, we have  $\Phi(\btheta; \bx) = \sum_{l \in [k]} v_l \bw_l^\top \bx^l$.

By Theorem~\ref{thm:known KKT}, gradient flow converges in direction to $\tilde{\btheta} = [\tilde{\bw}_1,\ldots,\tilde{\bw}_k,\tilde{\bv}]$ which satisfies the KKT conditions of Problem~\ref{eq:optimization problem}. Thus, there are $\lambda_1,\ldots,\lambda_n$ such that for every $j \in [k]$ we have
\begin{equation}
\label{eq:depth 2 linear known kkt}
	\tilde{\bw}_j = \sum_{i \in [n]} \lambda_i \nabla_{\bw_j} \left( y_i \Phi(\tilde{\btheta}; \bx_i) \right) =  \sum_{i \in [n]} \lambda_i y_i \tilde{v}_j \bx_i^j~,
\end{equation}
and we have $\lambda_i \geq 0$ for all $i$, and $\lambda_i=0$ if $y_i \Phi(\tilde{\btheta}; \bx_i) = y_i \sum_{l \in [k]} \tilde{v}_l \tilde{\bw}_l^\top \bx_i^l \neq 1$.
By Theorem~\ref{thm:known KKT}, we also have $\lim_{t \to \infty} \norm{\btheta(t)} = \infty$. Hence, by Lemma~\ref{lem:balanced tilde} we have $\norm{\tilde{\bw}_j} = |\tilde{v}_j |$ for all $j \in [k]$.

Consider the following problem
\begin{equation}
\label{eq:depth 2 linear problem u}
\min \sum_{l \in [k]} \norm{\bu_l} \;\;\; \text{s.t. } \;\;\; \forall i \in [n] \;\; y_i \sum_{l \in [k]} \bu_l^\top \bx_i^l \geq 1~.
\end{equation}
For every $l \in [k]$ we denote $\tilde{\bu}_l = \tilde{v}_l \cdot \tilde{\bw}_l$. 
Since we assume that $\tilde{\bw}_l \neq \zero$ for every $l \in [k]$, and since $\norm{\tilde{\bw}_l} = |\tilde{v}_l |$, then $\tilde{\bu}_l \neq \zero$ for all $l \in [k]$.
Note that since 
$\tilde{\bw}_1,\ldots,\tilde{\bw}_k,\tilde{\bv}$ 
satisfy the constraints in Problem~\ref{eq:optimization problem}, then $\tilde{\bu}_1,\ldots,\tilde{\bu}_k$ satisfy the constraints in the above problem. In order to show that $\tilde{\bu}_1,\ldots,\tilde{\bu}_k$ satisfy the KKT condition of the problem, we need to prove that for every $j \in [k]$ we have
\begin{equation}
\label{eq:depth 2 linear kkt u}
	\frac{\tilde{\bu}_j}{\norm{\tilde{\bu}_j}} = \sum_{i \in [n]} \lambda'_i y_i \bx_i^j~
\end{equation}
for some  $\lambda'_i \geq 0$ such that $\lambda'_i = 0$ if $ y_i \sum_{l \in [k]} \tilde{\bu}_l^\top \bx_i^l \neq 1$.
From Eq.~\ref{eq:depth 2 linear known kkt} and since $\norm{\tilde{\bw}_l}=|\tilde{v}_l|$ for every $l \in [k]$, we have
\[
	\tilde{\bu}_j 
	= \tilde{v}_j \cdot \tilde{\bw}_j 
	= \tilde{v}_j \sum_{i \in [n]} \lambda_i y_i \tilde{v}_j \bx_i^j
	= \tilde{v}_j^2 \sum_{i \in [n]} \lambda_i y_i \bx_i^j
	= \norm{\tilde{v}_j \tilde{\bw}_j} \sum_{i \in [n]} \lambda_i y_i \bx_i^j
	= \norm{\tilde{\bu}_j} \sum_{i \in [n]} \lambda_i y_i \bx_i^j~.
\]
Note that we have $\lambda_i \geq 0$ for all $i$, and $\lambda_i=0$ if $y_i \sum_{l \in [k]} \tilde{\bu}_l^\top \bx_i^l  = y_i  \sum_{l \in [k]} \tilde{v}_l \tilde{\bw}_l^\top \bx_i^l \neq 1$.
Hence Eq.~\ref{eq:depth 2 linear kkt u} holds with $\lambda'_1,\ldots,\lambda'_n$ that satisfy the requirement. Since the objective in Problem~\ref{eq:depth 2 linear problem u} is convex and the constraints are affine functions, then its KKT condition is sufficient for global optimality. Namely, $\tilde{\bu}_1,\ldots,\tilde{\bu}_k$ are a global optimum for problem~\ref{eq:depth 2 linear problem u}.

We now deduce that $\tilde{\btheta}$ is a global optimum for Problem~\ref{eq:optimization problem}. Assume toward contradiction that there is a solution $\btheta'=[\bw'_1,\ldots,\bw'_k,\bv']$ for the constraints in Problem~\ref{eq:optimization problem} such that $\snorm{\btheta'}^2 < \snorm{\tilde{\btheta}}^2$. Let $\bu'_l = v'_l \bw'_l$. Note that the vectors $\bu'_l$ satisfy the constraints in Problem~\ref{eq:depth 2 linear problem u}. Moreover, we have
\[
	 \sum_{l \in [k]} \norm{\bu'_l}
	 = \sum_{l \in [k]} | v'_l | \cdot \norm{\bw'_l}
	 \leq \sum_{l \in [k]} \frac{1}{2}  \left( |v'_l |^2 + \norm{\bw'_l}^2 \right)
	 = \frac{1}{2} \norm{\btheta'}^2 
	 < \frac{1}{2} \norm{\tilde{\btheta}}^2
	 = \sum_{l \in [k]} \frac{1}{2}  \left( |\tilde{v}_l |^2 + \norm{\tilde{\bw}_l}^2 \right)~.
\] 
Since $\norm{\tilde{\bw}_l}=|\tilde{v}_l |$, the above equals
\[
	\sum_{l \in [k]} \norm{\tilde{\bw}_l}^2 
	= \sum_{l \in [k]}  |\tilde{v}_l | \cdot \norm{\tilde{\bw}_l} 
	= \sum_{l \in [k]} \norm{\tilde{\bu}_l}~,
\]
which contradicts the global optimality of $\tilde{\bu}_1,\ldots,\tilde{\bu}_k$.

\subsubsection{Proof of part 2}

Let $\{(\bx_i,y_i)\}_{i=1}^4$ be a dataset such that $y_i=1$ for all $i \in [4]$ and we have $\bx_1 = (0,1)^\top$, $\bx_2 = (1,0)^\top$, $\bx_3 = (0,-1)$ and $\bx_4 = (-1,0)$. Consider the initialization $\btheta(0)=[\bw_1(0),\bw_2(0),\bw_3(0),\bw_4(0),\bv(0)]$ such that $\bw_i(0) = 2 \bx_i$ and $v_i(0) = 2$ for every $i \in [4]$. Note that $\cl(\btheta(0)) = 4 \ell(4) < 1$ for both the exponential loss and the logistic loss, and therefore by Theorem~\ref{thm:known KKT} gradient flow converges in direction to a KKT point $\tilde{\btheta}$ of Problem~\ref{eq:optimization problem}, and we have $\lim_{t \to \infty} \cl(\btheta(t)) = 0$ and $\lim_{t \to \infty} \norm{\btheta(t)} = \infty$. 

We now show that for all $t \geq 0$ we have $\bw_i(t) = \alpha(t) \bx_i$ and $v_i(t) = \alpha(t)$ where $\alpha(t)>0$ and $\lim_{t \to \infty} \alpha(t) = \infty$.
Indeed, for such $\btheta(t)$, for every $j \in [4]$ we have
\begin{align*}
	-\frac{d \bw_j}{dt}
	&=\nabla_{\bw_j} \cl(\btheta)
	=  \sum_{i=1}^4 \ell'(y_i \Phi(\btheta; \bx_i)) \cdot y_i \nabla_{\bw_j}\Phi(\btheta; \bx_i) 
	=  \sum_{i=1}^4 \ell'\left( \sum_{l=1}^4 v_l \sigma(\bw_l^\top \bx_i)\right) \cdot  \left( v_j \sigma'(\bw_j^\top \bx_i) \bx_i \right)
	\\
	&= \ell'(\alpha^2) \cdot  \alpha \cdot  \sum_{i=1}^4   \sigma'(\bw_j^\top \bx_i) \bx_i 
	= \ell'(\alpha^2) \cdot \alpha \bx_j~,  
\end{align*}
and 
\begin{align*}
	-\frac{d v_j}{dt}
	&= \nabla_{v_j} \cl(\btheta)
	=  \sum_{i=1}^4 \ell'(y_i \Phi(\btheta; \bx_i)) \cdot y_i \nabla_{v_j}\Phi(\btheta; \bx_i) 
	=  \sum_{i=1}^4 \ell'\left( \sum_{l=1}^4 v_l \sigma(\bw_l^\top \bx_i)\right) \cdot  \sigma(\bw_j^\top \bx_i)
	\\
	&=  \ell'(\alpha^2) \cdot \sum_{i=1}^4 \sigma(\bw_j^\top \bx_i)
	= \ell'( \alpha^2 ) \cdot \alpha~.
\end{align*}
Moreover, since $\lim_{t \to \infty} \norm{\btheta(t)} = \infty$ then $\lim_{t \to \infty} \alpha(t) = \infty$.

Hence, the KKT point $\tilde{\btheta}$ is such that for every $j \in [4]$ the vector $\tilde{\bw}_j$ points at the direction $\bx_j$, and we have $\tilde{v}_j=\norm{\tilde{\bw}_j}$. Also, the vectors $\tilde{\bw}_1,\tilde{\bw}_2,\tilde{\bw}_3,\tilde{\bw}_4$ have equal norms. That is, $\tilde{\bw}_j = \tilde{\alpha} \bx_j$ and $\tilde{v}_j = \tilde{\alpha}$ for some $\tilde{\alpha}>0$.
Moreover, since it satisfies the KKT condition of Problem~\ref{eq:optimization problem}, then we have
\[
	\tilde{\bw}_j 
	= \sum_{i = 1}^4 \lambda_i y_i \nabla_{\bw_j} \Phi(\tilde{\btheta};\bx_i)~,
\]
where $\lambda_i \geq 0$ and $\lambda_i = 0$ if $y_i \Phi(\tilde{\btheta};\bx_i) \neq 1$. Hence, there is $i$ such that $y_i \Phi(\tilde{\btheta};\bx_i) = 1$.Therefore, $\tilde{\alpha}^2 = 1$. Thus, we conclude that for all $j \in [4]$ we have $\tilde{\bw}_j = \bx_j$ and $\tilde{v}_j=1$.
Note that $\tilde{\bw}_j \neq \zero$ for all $j \in [4]$ as required.

Next, we show that $\tilde{\btheta}$ is not a local optimum of Problem~\ref{eq:optimization problem}. We show that for every $0 <\epsilon < 1$ there exists some $\btheta'$ such that $\norm{\btheta' - \tilde{\btheta}} \leq \epsilon$, $\btheta'$ satisfies the constraints of Problem~\ref{eq:optimization problem}, and $\snorm{\btheta'} < \snorm{\tilde{\btheta}}$.
Let $\epsilon' = \frac{\epsilon}{2\sqrt{2}}$. Let $\btheta'$ be such that $v'_j = \tilde{v}_j = 1$ for all $j \in [4]$, and we have $\bw'_1 = (\epsilon',1-\epsilon')^\top$, $\bw'_2 = (1-\epsilon',-\epsilon')^\top$, $\bw'_3 = (-\epsilon',-1+\epsilon')^\top$ and $\bw'_4 = (-1+\epsilon',\epsilon')^\top$. It is easy to verify that $\btheta'$ satisfies the constraints. Indeed, we have $\Phi(\btheta';\bx_i) = \left( (1-\epsilon') + \epsilon'  + 0 + 0 \right) =  1$. Also, we have $\norm{\btheta' - \tilde{\btheta}} = \sqrt{4 \cdot 2\epsilon'^2} = 2\sqrt{2} \epsilon' = \epsilon$. Finally,  
\[
	\snorm{\btheta'}^2
	= 4 \cdot \left(\epsilon'^2 + (1-\epsilon')^2 \right) + 4
	= 8 + 8\epsilon' \left( \epsilon' - 1 \right)
	< 8 
	= \snorm{\tilde{\btheta}}^2~.
\]

\subsection{Proof of Theorem~\ref{thm:depth 2 relu positive}}
\label{app:proof of depth 2 relu positive}

We assume w.l.o.g. that the second layer is fully-connected, namely, all hidden neurons are connected to the output neuron, since otherwise we can ignore disconnected neurons.
For the network $\Phi$ we use the parameterization $\btheta = [\bw_1,\ldots,\bw_k,\bv]$ introduced in Section~\ref{sec:auxiliary}. Thus, we have  $\Phi(\btheta; \bx) = \sum_{l \in [k]} v_l \sigma(\bw_l^\top \bx^l)$.

We denote $\tilde{\btheta} = [\tilde{\bw}_1,\ldots,\tilde{\bw}_k,\tilde{\bv}]$.
Since $\tilde{\btheta}$ is a KKT point of Problem~\ref{eq:optimization problem}, then there are $\lambda_1,\ldots,\lambda_n$ such that for every $j \in [k]$ we have
\begin{equation}
\label{eq:depth 2 relu known kkt}
	\tilde{\bw}_j = \sum_{i \in [n]} \lambda_i \nabla_{\bw_j} \left( y_i \Phi(\tilde{\btheta}; \bx_i) \right) = \sum_{i \in [n]} \lambda_i y_i \tilde{v}_j \sigma'(\tilde{\bw}_j^\top \bx_i^j) \bx_i^j~,
\end{equation}
and we have $\lambda_i \geq 0$ for all $i$, and $\lambda_i=0$ if $y_i \Phi(\tilde{\btheta}; \bx_i) = y_i \sum_{l \in [k]} \tilde{v}_l \sigma(\tilde{\bw}_l^\top \bx_i^l) \neq 1$.
Note that the KKT condition should be w.r.t. the Clarke subdifferential, but since for all $i,j$ we have $\tilde{\bw}_j^\top \bx_i^j \neq 0$ by our assumption, then we can use here the gradient. 
By Theorem~\ref{thm:known KKT}, we also have $\lim_{t \to \infty} \norm{\btheta(t)} = \infty$. Hence, by Lemma~\ref{lem:balanced tilde} we have $\norm{\tilde{\bw}_j} = |\tilde{v}_j |$ for all $j \in [k]$.

For $i \in [n]$ and $j \in [k]$ let $A_{ij} = \onefunc(\tilde{\bw}_j^\top \bx_i^j \geq 0)$. Consider the following problem
\begin{equation}
\label{eq:depth 2 relu problem u}
\min \sum_{l \in [k]} \norm{\bu_l} \;\;\; \text{s.t. } \;\;\; \forall i \in [n] \;\; y_i \sum_{l \in [k]}A_{il} \bu_l^\top \bx_i^l \geq 1~.
\end{equation}
For every $l \in [k]$ let $\tilde{\bu}_l = \tilde{v}_l \cdot \tilde{\bw}_l$. 
Since we assume that the inputs to all neurons in the computations $\Phi(\tilde{\btheta};\bx_i)$ are non-zero, then we must have $\tilde{\bw}_l \neq \zero$ for every $l \in [k]$. Since we also have $\norm{\tilde{\bw}_l} = |\tilde{v}_l |$, then $\tilde{\bu}_l \neq \zero$ for all $l \in [k]$.
Note that since $\tilde{\bw}_1,\ldots,\tilde{\bw}_k,\tilde{\bv}$ 
satisfy the constraints in Probelm~\ref{eq:optimization problem}, then $\tilde{\bu}_1,\ldots,\tilde{\bu}_k$ satisfy the constraints in the above problem. 
Indeed, for every $i \in [n]$ we have
\begin{equation*}
\label{eq:depth 2 relu u satisfies constraints}
	 y_i \sum_{l \in [k]}A_{il} \tilde{\bu}_l^\top \bx_i^l 
	 =  y_i \sum_{l \in [k]} \onefunc(\tilde{\bw}_l^\top \bx_i^l \geq 0) \tilde{v}_l \tilde{\bw}_l^\top \bx_i^l 
	 =  y_i \sum_{l \in [k]} \tilde{v}_l \sigma(\tilde{\bw}_l^\top \bx_i^l) 
	 \geq 1~.
\end{equation*}
In order to show that $\tilde{\bu}_1,\ldots,\tilde{\bu}_k$ satisfy the KKT condition of Probelm~\ref{eq:depth 2 relu problem u}, we need to prove that for every $j \in [k]$ we have
\begin{equation}
\label{eq:depth 2 relu kkt u}
	\frac{\tilde{\bu}_j}{\norm{\tilde{\bu}_j}} = \sum_{i \in [n]} \lambda'_i y_i A_{ij} \bx_i^j~
\end{equation}
for some $\lambda'_1,\ldots,\lambda'_n$ such that  for all $i$ we have $\lambda'_i \geq 0$, and $\lambda'_i = 0$ if $ y_i \sum_{l \in [k]} A_{il} \tilde{\bu}_l^\top \bx_i^l \neq 1$.
From Eq.~\ref{eq:depth 2 relu known kkt} and since $\norm{\tilde{\bw}_l}=|\tilde{v}_l|$ for every $l \in [k]$, we have
\[
	\tilde{\bu}_j 
	= \tilde{v}_j \cdot \tilde{\bw}_j 
	= \tilde{v}_j \sum_{i \in [n]} \lambda_i y_i  \tilde{v}_j A_{ij} \bx_i^j
	= \tilde{v}_j^2 \sum_{i \in [n]} \lambda_i y_i A_{ij} \bx_i^j
	= \norm{\tilde{v}_j \tilde{\bw}_j} \sum_{i \in [n]} \lambda_i y_i A_{ij}  \bx_i^j
	= \norm{\tilde{\bu}_j} \sum_{i \in [n]} \lambda_i y_i A_{ij}  \bx_i^j~.
\]
Note that we have $\lambda_i \geq 0$ for all $i$, and $\lambda_i=0$ if 
\[
	y_i \sum_{l \in [k]} A_{il} \tilde{\bu}_l^\top \bx_i^l  
	= y_i \sum_{l \in [k]} \tilde{v}_l \onefunc(\tilde{\bw}_l^\top \bx_i^l \geq 0) \tilde{\bw}_l^\top \bx_i^l  
	= y_i \sum_{l \in [k]} \tilde{v}_l \sigma(\tilde{\bw}_l^\top \bx_i^l)  
	\neq 1~.
\]
Hence Eq.~\ref{eq:depth 2 relu kkt u} holds with $\lambda'_1,\ldots,\lambda'_n$ that satisfy the requirement. Since the objective in Problem~\ref{eq:depth 2 relu problem u} is convex and the constraints are affine functions, then its KKT condition is sufficient for global optimality. Namely, $\tilde{\bu}_1,\ldots,\tilde{\bu}_k$ are a global optimum for Problem~\ref{eq:depth 2 relu problem u}.

We now deduce that $\tilde{\btheta}$ is a local optimum for Problem~\ref{eq:optimization problem}. 
Since for every $i \in [n]$ and $l \in [k]$ we have $\tilde{\bw}_l^\top \bx_i^l \neq 0$, then there is $\epsilon>0$, such that for every $i,l$ and every $\bw'_l$ with $\norm{\bw'_l-\tilde{\bw}_l} \leq \epsilon$ we have 
$\onefunc(\tilde{\bw}_l^\top \bx_i^l \geq 0) = \onefunc(\bw'^\top_l \bx_i^l \geq 0)$.
Assume toward contradiction that there is a solution $\btheta'=[\bw'_1,\ldots,\bw'_k,\bv']$ for the constraints in Problem~\ref{eq:optimization problem} such that $\snorm{\btheta'-\tilde{\btheta}} \leq \epsilon$ and $\snorm{\btheta'}^2 < \snorm{\tilde{\btheta}}^2$. 
Note that we have $\norm{\bw'_l-\tilde{\bw}_l} \leq \epsilon$ for every $l \in [k]$.
We denote $\bu'_l = v'_l \bw'_l$. 
The vectors $\bu'_1,\ldots,\bu'_k$ satisfy the constraints in Problem~\ref{eq:depth 2 relu problem u}, since we have
\begin{align*}
	 y_i \sum_{l \in [k]}A_{il} \bu_l'^\top \bx_i^l 
	 &=  y_i \sum_{l \in [k]} \onefunc(\tilde{\bw}_l^\top \bx_i^l \geq 0) v'_l \bw'^\top_l \bx_i^l 
	 = y_i \sum_{l \in [k]} \onefunc(\bw'^\top_l \bx_i^l \geq 0) v'_l \bw'^\top_l \bx_i^l 
	 \\
	 &=  y_i \sum_{l \in [k]} v'_l \sigma(\bw'^\top_l \bx_i^l ) 
	 \geq 1~,
\end{align*}
where the last inequality is since $\btheta'$ satisfies the constraints in Probelm~\ref{eq:optimization problem}.
Moreover, we have
\[
	 \sum_{l \in [k]} \norm{\bu'_l}
	 = \sum_{l \in [k]} | v'_l | \cdot \norm{\bw'_l}
	 \leq \sum_{l \in [k]} \frac{1}{2}  \left( |v'_l |^2 + \norm{\bw'_l}^2 \right)
	 = \frac{1}{2} \norm{\btheta'}^2 
	 < \frac{1}{2} \norm{\tilde{\btheta}}^2
	 = \sum_{l \in [k]} \frac{1}{2}  \left( |\tilde{v}_l |^2 + \norm{\tilde{\bw}_l}^2 \right)~.
\] 
Since $\norm{\tilde{\bw}_l}=|\tilde{v}_l |$, the above equals
\[
	\sum_{l \in [k]} \norm{\tilde{\bw}_l}^2 
	= \sum_{l \in [k]}  |\tilde{v}_l | \cdot \norm{\tilde{\bw}_l} 
	= \sum_{l \in [k]} \norm{\tilde{\bu}_l}~,
\]
which contradicts the global optimality of $\tilde{\bu}_1,\ldots,\tilde{\bu}_k$.

It remains to show that $\tilde{\btheta}$ may not be a global optimum of Problem~\ref{eq:optimization problem}, even if the network $\Phi$ is fully connected.
The following lemma concludes the proof.

\begin{lemma}
\label{lem:ReLU not global}
	Let $\Phi$ be a depth-$2$ fully-connected ReLU network with input dimension $2$ and two hidden neurons.
	Consider minimizing either the exponential or the logistic loss using gradient flow.
	Then, there exists a dataset $\{(\bx_i,y_i)\}_{i=1}^n$ and an initialization $\btheta(0)$, such that gradient flow converges to zero loss, converges in direction to a KKT point $\tilde{\btheta} = [\tilde{\bw}_1,\tilde{\bw}_2,\tilde{\bv}]$ of Problem~\ref{eq:optimization problem} such that $\inner{\tilde{\bw}_j,\bx_i} \neq 0$ for all $j \in \{1,2\}$ and $i \in [n]$, and $\tilde{\btheta}$ is not a global optimum.
\end{lemma}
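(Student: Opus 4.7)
The plan is to construct an explicit symmetric example. Take the dataset $\{(\bx_1,1),(\bx_2,1)\}$ with $\bx_1=(1,\tfrac{1}{4})^\top$ and $\bx_2=(-1,\tfrac{1}{4})^\top$, and choose a symmetric initialization
$\bw_1(0)=\bw_2(0)=(0,\alpha_0)^\top$ and $v_1(0)=v_2(0)=\alpha_0$ with $\alpha_0$ large enough that $\cl(\btheta(0))<1$ (for instance $\alpha_0=2$, since then $\cl(\btheta(0))=2\ell(2)<1$ for both losses). This lets us apply Theorem~\ref{thm:known KKT} so that gradient flow converges to zero loss, $\lim_{t\to\infty}\|\btheta(t)\|=\infty$, and $\btheta(t)$ converges in direction to a KKT point $\tilde{\btheta}$.

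Next, exploit two symmetries preserved by gradient flow. First, the hidden-neuron swap $(\bw_1,v_1)\leftrightarrow(\bw_2,v_2)$ leaves $\cl$ invariant, so the symmetric subspace $\bw_1=\bw_2$, $v_1=v_2$ is invariant and is entered at $t=0$. Second, the reflection $\bx\mapsto(-x_1,x_2)$ swaps $\bx_1$ and $\bx_2$; since $y_1=y_2$ this leaves $\cl$ invariant under simultaneously flipping the first coordinate of each $\bw_j$, so the subspace where this coordinate is zero is invariant and is also entered at $t=0$. Consequently $\bw_1(t)=\bw_2(t)=(0,b(t))^\top$ and $v_1(t)=v_2(t)=v(t)$ for all $t$, with both hidden neurons active on both inputs (since $b(t)/4>0$ throughout, as will follow from $b(t)$ being increasing). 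Lemma~\ref{lem:extending du} then gives $b(t)=v(t)$ for all $t$, and since $\lim_{t\to\infty}\|\btheta(t)\|=\infty$ we get $v(t)\to\infty$.

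From these invariants $\tilde{\btheta}$ has the form $(0,s,0,s,s,s)$ for some $s>0$, and the KKT complementary slackness forces at least one margin constraint to be tight; by symmetry both constraints are tight, giving $\Phi(\tilde{\btheta};\bx_i)=2s\cdot\sigma(s/4)=s^2/2=1$, so $s=\sqrt{2}$. Hence $\tilde{\bw}_j=(0,\sqrt{2})^\top$, $\tilde v_j=\sqrt{2}$, $\snorm{\tilde{\btheta}}^2=8$, and the inputs to the hidden neurons satisfy $\tilde{\bw}_j^\top\bx_i=\sqrt{2}/4\neq 0$ for all $i,j$, as required.

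Finally, exhibit a strictly better feasible point. Take $\btheta^\ast=[\bw^\ast_1,\bw^\ast_2,\bv^\ast]$ with $\bw^\ast_1=(1,0)^\top$, $\bw^\ast_2=(-1,0)^\top$, $v^\ast_1=v^\ast_2=1$. A direct check gives $\Phi(\btheta^\ast;\bx_i)=\sigma(1)+\sigma(-1)=1$ for $i=1,2$, so $\btheta^\ast$ is feasible, while $\snorm{\btheta^\ast}^2=1+1+1+1=4<8=\snorm{\tilde{\btheta}}^2$. This shows $\tilde{\btheta}$ is not a global optimum, completing the proof. The main subtlety is the symmetry argument: one must verify that both the neuron-swap and the coordinate-reflection symmetries are preserved by gradient flow on the (non-smooth) ReLU objective, which follows because both hidden neurons remain strictly active on both inputs throughout training, so the dynamics are smooth on the symmetric subspace.
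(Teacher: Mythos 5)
Your proof is correct, but it follows a genuinely different (and cleaner) construction than the paper's. The paper uses a three-point dataset, adding $\bx_3 = (0,-1)^\top$ and an asymmetric initialization $\bw_1(0)=(0,3)$, $\bw_2(0)=(0,-2)$, $v_1(0)=3$, $v_2(0)=2$; the third point forces the second neuron to be active on $\bx_3$ and hence to stay away from zero, and the two neurons end up pointing in opposite directions $\pm(0,1)^\top$. You instead keep only the two points $\bx_1,\bx_2$ (which the paper already used in Theorem~\ref{thm:depth 2 relu negative}) and pick a symmetric initialization $\bw_1(0)=\bw_2(0)=(0,2)^\top$, $v_1(0)=v_2(0)=2$. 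The neuron-swap and coordinate-reflection invariances then force both neurons to march together along $(0,b)^\top$ with $b=v\to\infty$, yielding a redundant KKT point $\tilde{\btheta}=(0,\sqrt2,0,\sqrt2,\sqrt2,\sqrt2)$ with $\snorm{\tilde{\btheta}}^2=8$. Your competitor $\bw_1^\ast=(1,0)^\top$, $\bw_2^\ast=(-1,0)^\top$, $v^\ast_1=v^\ast_2=1$ dedicates one neuron to each input, is feasible with margin exactly $1$, and has norm squared $4<8$, so $\tilde{\btheta}$ is not a global optimum, and $\tilde{\bw}_j^\top\bx_i=\sqrt2/4\neq 0$ for all $i,j$ as required. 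The symmetry argument is fully justified because on the invariant subspace both pre-activations equal $b/4>0$, so the dynamics are smooth there and the equivariance of $\nabla\cl$ keeps the flow in the subspace; Lemma~\ref{lem:extending du} then gives $b(t)=v(t)$. What your construction buys is fewer data points and a shorter dynamics analysis via symmetry rather than an explicit activation-pattern case check; what the paper's version buys is a KKT point where the two neurons are genuinely distinct (pointing in opposite directions), avoiding the potentially ``degenerate'' appearance of two identical neurons, though this is not needed for the lemma's conclusion.
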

\begin{proof}
Let $\bx_1 = \left(1,\frac{1}{4}\right)^\top$, $\bx_2 = \left(-1,\frac{1}{4}\right)^\top$, $\bx_3 = (0,-1)$, $y_1=y_2=y_3=1$. Let $\{(\bx_1,y_1),(\bx_2,y_2),(\bx_3,y_3)\}$ be a dataset. Consider the initialization $\btheta(0)$ such that $\bw_1(0) = (0,3)$, $v_1(0) = 3$, $\bw_2(0) = (0,-2)$ and $v_2(0) = 2$. Note that $\cl(\btheta(0)) = 2 \ell \left(\frac{9}{4}\right) + \ell(4) < 1$ for both the exponential loss and the logistic loss, 
and therefore by Theorem~\ref{thm:known KKT} gradient flow converges in direction to a KKT point $\tilde{\btheta}$ of Problem~\ref{eq:optimization problem}. 

Note that for $\btheta$ such that $\bw_1 = \alpha \cdot (0,1)^\top$ and $\bw_2 = \beta \cdot (0,-1)^\top$ for some $\alpha,\beta > 0$, and $v_1,v_2 > 0$, we have
\begin{align*}
	\nabla_{\bw_1} \cl(\btheta) 
	&= \sum_{i=1}^3 \ell'(y_i \Phi(\btheta; \bx_i)) \cdot y_i \nabla_{\bw_1}\Phi(\btheta; \bx_i) 
	= \sum_{i=1}^3 \ell'\left(v_1 \sigma(\bw_1^\top \bx_i) + v_2 \sigma(\bw_2^\top \bx_i)\right) \cdot v_1 \sigma'(\bw_1^\top \bx_i) \bx_i 
	\\
	&= \sum_{i=1}^2 \ell'(v_1 \sigma(\bw_1^\top \bx_i) ) \cdot v_1 \bx_i 
	= v_1 \ell'\left(v_1 \frac{\alpha}{4} \right)  \sum_{i=1}^2 \bx_i~,
\end{align*}
and 
\begin{align*}
	\nabla_{v_1} \cl(\btheta)
	= \sum_{i=1}^3 \ell'(y_i \Phi(\btheta; \bx_i)) \cdot y_i \nabla_{v_1}\Phi(\btheta; \bx_i)
	= \sum_{i=1}^3 \ell'(y_i \Phi(\btheta; \bx_i)) \cdot \sigma(\bw_1^\top \bx_i)~.
\end{align*}
Hence, $-\nabla_{\bw_1} \cl(\btheta)$ points in the direction $(0,1)^\top$ and $-\nabla_{v_1} \cl(\btheta) > 0$. 
Moreover, we have
\begin{align*}
	\nabla_{\bw_2} \cl(\btheta) 
	&= \sum_{i=1}^3 \ell'(y_i \Phi(\btheta; \bx_i)) \cdot y_i \nabla_{\bw_2 }\Phi(\btheta; \bx_i) 
	= \sum_{i=1}^3 \ell'(v_1 \sigma(\bw_1^\top \bx_i) + v_2 \sigma(\bw_2^\top \bx_i)) \cdot v_2 \sigma'(\bw_2^\top \bx_i) \bx_i 
	\\
	&= \ell'(v_2 \sigma(\bw_2^\top \bx_3) ) \cdot v_2 \bx_3 
	= v_2 \ell'(v_2 \beta ) \bx_3~,
\end{align*}
and 
\begin{align*}
	\nabla_{v_2} \cl(\btheta)
	&= \sum_{i=1}^3 \ell'(y_i \Phi(\btheta; \bx_i)) \cdot y_i \nabla_{v_2}\Phi(\btheta; \bx_i) 
	= \sum_{i=1}^3 \ell'(v_1 \sigma(\bw_1^\top \bx_i) + v_2 \sigma(\bw_2^\top \bx_i)) \cdot \sigma(\bw_2^\top \bx_i) 
	\\
	&=  \ell'(v_2 \sigma(\bw_2^\top \bx_3)) \sigma(\bw_2^\top \bx_3)
	= \ell'(v_2 \beta) \cdot \beta~.
\end{align*}
Therefore, $-\nabla_{\bw_2} \cl(\btheta)$ points in the direction $(0,-1)^\top$ and $-\nabla_{v_2} \cl(\btheta) > 0$. 
Hence for every $t$ we have $\bw_1(t) = \alpha(t) \cdot (0,1)^\top$ for some $\alpha(t)>0$ and $v_1(t)>0$. Also, we have  $\bw_2(t) = \beta(t) \cdot (0,-1)^\top$ for some $\beta(t)>0$ and $v_2(t)>0$.
By Lemma~\ref{lem:from du fully connected}, we have for every $t \geq 0$ that  $\norm{\bw_1(t)}^2 - v_1(t)^2= \norm{\bw_1(0)}^2 - v_1(0)^2=0$ and $\norm{\bw_2(t)}^2 - v_2(t)^2= \norm{\bw_2(0)}^2 - v_2(0)^2=0$. Hence, we have $v_1(t) = \alpha(t)$ and $v_2(t) = \beta(t)$.
Therefore, we have $\tilde{\bw}_1 = \tilde{\alpha} \cdot (0,1)^\top$ and $\tilde{v}_1 = \tilde{\alpha}$ for some $\tilde{\alpha} \geq 0$. Likewise, we have $\tilde{\bw}_2 = \tilde{\beta} \cdot (0,-1)^\top$ and $\tilde{v}_2 = \tilde{\beta}$ for some $\tilde{\beta} \geq 0$. 
Since $\tilde{\btheta}$ satisfies the constraints in Probelm~\ref{eq:optimization problem}, then $\tilde{\alpha} \geq 2$ and $\tilde{\beta} \geq 1$. 
Note that $\inner{\tilde{\bw}_j,\bx_i} \neq 0$ for all $j \in \{1,2\}$ and $i \in \{1,2,3\}$.

We now show that there exists a solution $\btheta'$ to Problem~\ref{eq:optimization problem} with a smaller norm, and hence $\tilde{\btheta}$ is not a global optimum. Let $\btheta' = [\bw'_1,\bw'_2,\bv']$ such that 
$\bw'_1 = \frac{\bx_1}{\tilde{\alpha} \norm{\bx_1}}$, $v'_1 = \tilde{\alpha}$, $\bw'_2 = \frac{1}{\tilde{\beta}} \cdot \left( - \frac{5}{4}, -1 \right)$, and $v'_2 = \tilde{\beta}$. 
It is easy to verify that $\btheta'$ satisfies the constraints in Problem~\ref{eq:optimization problem}, and we have 
\[
	\snorm{\btheta'}^2 
	= \frac{1}{\tilde{\alpha}^2} +\tilde{\alpha}^2 + \frac{1}{\tilde{\beta}^2} \left( \frac{25}{16} + 1 \right) + \tilde{\beta}^2 
	< \frac{1}{4} +\tilde{\alpha}^2 + 1 \cdot  3 + \tilde{\beta}^2 
	< \tilde{\beta}^2  + \tilde{\alpha}^2 + \tilde{\alpha}^2  +  \tilde{\beta}^2 
	=  \snorm{\tilde{\btheta}}^2~.
\]
\end{proof}

\subsection{Proof of Theorem~\ref{thm:depth 2 cnn}}
\label{app:proof of depth 2 cnn}

Let $\bx = \left(4,\frac{1}{\sqrt{2}},-4, \frac{1}{\sqrt{2}} \right)^\top$ and $y = 1$. Let $\btheta(0) = [\bw(0),\bv(0)]$ where $\bw(0) = (0,1)^\top$ and $\bv(0) = 
\left( \frac{1}{\sqrt{2}}, \frac{1}{\sqrt{2}} \right)^\top$. Note that $\Psi(\btheta(0); \bx) = 1$ and hence $\cl(\btheta(0)) < 1$ for both the exponential loss and the logistic loss. Therefore, by Theorem~\ref{thm:known KKT} gradient flow converges in direction to a KKT point $\tilde{\btheta}$ of Problem~\ref{eq:optimization problem}, and we have $\lim_{t \to \infty} \cl(\btheta(t)) = 0$ and $\lim_{t \to \infty}\norm{\btheta(t)} = \infty$.

The symmetry of the input $\bx$ and the initialization $\btheta(0)$ implies that the direction of $\bw$ does not change during the training, and that we have $v_1(t)=v_2(t)>0$ for all $t \geq 0$. More formally, this claim follows from the following calculation. For $j \in \{1,2\}$ we have
\[
	\nabla_{v_j} \cl(\btheta) = 
	\ell'(y \Phi(\btheta; \bx) ) \cdot y \nabla_{v_j} \Phi(\btheta;\bx)
	=\ell'(y \Phi(\btheta; \bx) ) \cdot  \sigma(\bw^\top \bx^{(j)})~.
\]
Moreover,
\[
	\nabla_\bw \cl(\btheta) = 
	\ell'(y \Phi(\btheta; \bx)) \cdot y \nabla_{\bw} \Phi(\btheta;\bx) 
	=\ell'(y \Phi(\btheta; \bx)) \cdot  \left( v_1 \sigma'(\bw^\top \bx^{(1)}) \bx^{(1)} + v_2 \sigma'(\bw^\top \bx^{(2)}) \bx^{(2)}\right)~.
\]
Hence, if $v_1=v_2>0$ and $\bw$ points in the direction $(0,1)^\top$, then it is easy to verify that $\nabla_{v_1} \cl(\btheta)=\nabla_{v_1} \cl(\btheta)<0$ and that $\nabla_\bw \cl(\btheta)$ points in the direction of  $-(\bx^{(1)}+\bx^{(2)}) = -(0,\sqrt{2})^\top$.
Furthermore, by Lemma~\ref{lem:from du sparse and cnn}, for every $t \geq 0$ we have $\norm{\bw(t)}^2 - \norm{\bv(t)}^2 = \norm{\bw(0)}^2 - \norm{\bv(0)}^2 = 0$. 

Therefore, the KKT point $\tilde{\btheta}=[\tilde{\bw},\tilde{\bv}]$ is such that $\tilde{\bw}$ points at the direction $(0,1)^\top$, $\tilde{v}_1=\tilde{v}_2>0$, and $\norm{\tilde{\bw}}=\norm{\tilde{\bv}}$. Since $\tilde{\btheta}$ satisfies the KKT conditions of Problem~\ref{eq:optimization problem}, then we have
\[
	\tilde{\bw} 
	= \lambda \nabla_{\bw} \left( y \Phi(\tilde{\btheta};\bx) \right)~, 
\]
 where $\lambda \geq 0$ and $\lambda=0$ if $y \Phi(\tilde{\btheta}; \bx) \neq 1$. 
 Hence, we must have $y \Phi(\tilde{\btheta}; \bx) = 1$. Letting $z:=\tilde{v}_1=\tilde{v}_2$ and using $2z^2 =  \norm{\tilde{\bv}}^2 = \norm{\tilde{\bw}}^2 = \tilde{w}_2^2$, we have
 \begin{align*}
 	1 
	&= \tilde{v}_1 \sigma(\tilde{\bw}^\top \bx^{(1)}) + \tilde{v}_2 \sigma(\tilde{\bw}^\top \bx^{(2)})
	= z \tilde{\bw}^\top \bx^{(1)} + z \tilde{\bw}^\top \bx^{(2)}
	= z \tilde{w}^\top \left(  \bx^{(1)} +  \bx^{(2)} \right)
	= z\cdot \tilde{w}_2 \sqrt{2}
	\\
	&= \frac{ \tilde{w}_2} {\sqrt{2}} \cdot \tilde{w}_2 \sqrt{2}
	=  \tilde{w}_2^2~.
 \end{align*}
 Therefore, $\tilde{\bw} = (0,1)^\top$ and $\tilde{\bv} = \left( \frac{1}{\sqrt{2}}, \frac{1}{\sqrt{2}} \right)$.
 Note that we have $\inner{\tilde{\bw},\bx^{(1)}} \neq 0$ and $\inner{\tilde{\bw},\bx^{(2)}} \neq 0$. 
 
 It remains to show that $\tilde{\btheta}$ is not a local optimum of Problem~\ref{eq:optimization problem}. We show that for every $0<\epsilon'<1$ there exists some $\btheta'=[\bw',\bv']$ such that $\norm{\btheta' - \tilde{\btheta}} \leq \epsilon'$, $\btheta'$ satisfies the constrains in Problem~\ref{eq:optimization problem}, and $\snorm{\btheta'}<\snorm{\tilde{\btheta}}$.
Let $\epsilon = \frac{\epsilon'^2}{2} \in (0, 1/2)$, and let $\bw' = (\sqrt{\epsilon},1-\epsilon)^\top$ and $\bv' = \left( \frac{1}{\sqrt{2}} + \frac{\sqrt{\epsilon}}{2}, \frac{1}{\sqrt{2}} - \frac{\sqrt{\epsilon}}{2} \right)^\top$. Note that 
\begin{align*}
	\norm{\btheta'}^2
	&= \norm{\bw'}^2 + \norm{\bv'}^2
	= \epsilon + (1-\epsilon)^2 +  \left(\frac{1}{\sqrt{2}} + \frac{\sqrt{\epsilon}}{2}\right)^2 + \left(\frac{1}{\sqrt{2}} - \frac{\sqrt{\epsilon}}{2}\right)^2
	= \epsilon + 1 + \epsilon^2 -2\epsilon + 1 + \frac{\epsilon}{2} 
	\\
	&= 2 - \frac{\epsilon}{2} + \epsilon^2 
	<  2 - \frac{\epsilon}{2} + \frac{\epsilon}{2} 
	= 2 
	= \norm{\tilde{\bw}}^2 + \norm{\tilde{\bv}}^2
	= \norm{\tilde{\btheta}}^2~. 	
\end{align*}
Moreover,
\begin{align*}
	\norm{\btheta' - \tilde{\btheta}}^2
	= \epsilon + \epsilon^2 + \frac{\epsilon}{4} + \frac{\epsilon}{4}
	= \epsilon^2 + \frac{3 \epsilon}{2}
	=  \frac{\epsilon'^4}{4} +  \frac{3 \epsilon'^2}{4}
	< \frac{\epsilon'^2}{4} +  \frac{3 \epsilon'^2}{4}
	= \epsilon'^2~.
\end{align*}
Finally, we show that $\btheta'$ satisfies the constraints:
\begin{align*}
	\Phi(\btheta'; \bx)
	&= v'_1 \sigma(\bw'^\top \bx^{(1)}) +  v'_2 \sigma(\bw'^\top \bx^{(2)})
	\\
	&= \left( \frac{1}{\sqrt{2}} + \frac{\sqrt{\epsilon}}{2} \right) \left(4 \sqrt{\epsilon} + \frac{1}{\sqrt{2}} \cdot (1-\epsilon) \right) + 
	      \left( \frac{1}{\sqrt{2}} - \frac{\sqrt{\epsilon}}{2} \right) \left(-4 \sqrt{\epsilon} + \frac{1}{\sqrt{2}} \cdot (1-\epsilon) \right)
	\\
	&= \frac{1}{\sqrt{2}} \cdot (1-\epsilon) \left( \frac{1}{\sqrt{2}} + \frac{\sqrt{\epsilon}}{2} + \frac{1}{\sqrt{2}} - \frac{\sqrt{\epsilon}}{2}  \right) 
		+ 4 \sqrt{\epsilon} \left(  \frac{1}{\sqrt{2}} + \frac{\sqrt{\epsilon}}{2} - \frac{1}{\sqrt{2}} + \frac{\sqrt{\epsilon}}{2} \right) 
	\\
	&= 1-\epsilon + 4 \epsilon
	= 1 + 3 \epsilon
	\geq 1~.
\end{align*}

\subsection{Proof of Theorem~\ref{thm:deep negative}}
\label{app:proof of deep negative}

Let $\bx=(1,1)^\top$ and $y=1$. Consider the initialization $\btheta(0) = [\bw_1(0),\ldots,\bw_m(0)]$, where $\bw_j(0) = (1,1)^\top$ for every $j \in [m]$. Note that $\cl(\btheta(0)) = \ell(2) < 1$ for both linear and ReLU networks with the exponential loss or the logistic loss, and therefore by Theorem~\ref{thm:known KKT} gradient flow converges in direction to a KKT point $\tilde{\btheta}$ of Problem~\ref{eq:optimization problem}, and we have  $\lim_{t \to \infty}\cl(\btheta(t))=0$ and $\lim_{t \to \infty}\norm{\btheta(t)}=\infty$. It remains to show that it does not converge in direction to a local optimum of Problem~\ref{eq:optimization problem}.

From the symmetry of the network $\Phi$ and the initialization $\btheta(0)$, it follows that for all $t$ the network $\Phi(\btheta(t); \cdot)$ remains symmetric, namely, there are $\alpha_j(t)$ such that $\bw_j(t) = (\alpha_j(t),\alpha_j(t))$. Moreover, by Lemma~\ref{lem:from du sparse and cnn}, for every $t \geq 0$ and $j,l \in [m]$ we have $\alpha_j(t) = \alpha_l(t) := \alpha(t)$. Thus, gradient flow converges in direction to the KKT point $\tilde{\btheta} = [\tilde{\bw}_1,\ldots,\tilde{\bw}_m]$ such that $\tilde{\bw}_j = \left(2^{-1/m},2^{-1/m}\right)^\top$ for all $j \in [m]$. Note that since the dataset is of size $1$, then every KKT point of Problem~\ref{eq:optimization problem} must label the input $\bx$ with exactly $1$.

We now show that $\tilde{\btheta}$ is not a local optimum of Problem~\ref{eq:optimization problem}. 
The following arguments hold for both linear and ReLU networks.
Let $0 < \epsilon < \frac{1}{2}$. Let $\btheta' = [\bw'_1,\ldots,\bw'_m]$ such that for every $j \in [m]$ we have $\bw'_j = \left(\left(\frac{1+\epsilon}{2}\right)^{1/m}, \left(\frac{1-\epsilon}{2}\right)^{1/m} \right)^\top$. We have
\[
	y \cdot \Phi(\btheta'; \bx)
	= \left(\frac{1+\epsilon}{2}\right) + \left(\frac{1-\epsilon}{2}\right)
	= 1~.
\]
Hence, $\btheta'$ satisfies the constraints in Problem~\ref{eq:optimization problem}. We now show that for every sufficiently small $\epsilon>0$ we have $\snorm{\btheta'}^2 < \snorm{\tilde{\btheta}}^2$.
We need to show that
\[
	m \left(\frac{1+\epsilon}{2}\right)^{2/m} + m \left(\frac{1-\epsilon}{2}\right)^{2/m} < 2m \left(\frac{1}{2} \right)^{2/m}~.
\]
Therefore, it suffices to show that 
\[
	 \left(1+\epsilon\right)^{2/m} +  \left(1-\epsilon\right)^{2/m} < 2~.
\]
Let $g:\reals \to \reals$ such that $g(s) =  \left(1+s\right)^{2/m} +  \left(1-s\right)^{2/m}$. We have $g(0) = 2$. The derivatives of $g$ satisfy
\[
	g'(s)
	= \frac{2}{m}  \left(1+s\right)^{\frac{2}{m} - 1} -  \frac{2}{m}  \left(1 - s\right)^{\frac{2}{m} - 1}~,
\]
and
\[
	g''(s) 
	= \frac{2}{m} \left( \frac{2}{m} - 1 \right)  \left(1+s\right)^{\frac{2}{m} - 2} +  \frac{2}{m}  \left( \frac{2}{m} - 1 \right) \left(1 -s\right)^{\frac{2}{m} - 2}~.
\]
Since $m \geq 3$ we have $g'(0)=0$ and $g''(0) < 0$. Hence, $0$ is a local maximum of $g$. Therefore for every sufficiently small $\epsilon>0$ we have $g(\epsilon)<2$ and thus $\snorm{\btheta'}^2 < \snorm{\tilde{\btheta}}^2$.

Finally, note that the inputs to all neurons in the computation $\Phi(\tilde{\btheta}; \bx)$ are positive.

\subsection{Proof of Theorem~\ref{thm:positive each layer linear}}
\label{app:proof of positive each layer linear}

By Theorem~\ref{thm:known KKT} gradient flow converge in direction to a KKT point $\tilde{\btheta}=[\tilde{\bu}^{(l)}]_{l=1}^m$ of Problem~\ref{eq:optimization problem}. We now show that for every layer $l \in [m]$ the parameters vector $\tilde{\bu}^{(l)}$ is a global optimum of Problem~\ref{eq:optimization problem one layer} w.r.t. $\tilde{\btheta}$.

Since $\tilde{\btheta}$ is a KKT point of Problem~\ref{eq:optimization problem}, then there are $\lambda_1,\ldots,\lambda_n$ such that for every $l \in [m]$ we have
\begin{equation*}
\label{eq:deep linear known kkt}
	\tilde{\bu}^{(l)} 
	= \sum_{i \in [n]} \lambda_i  \frac{\partial \left( y_i \Phi(\tilde{\btheta};\bx_i) \right) }{\partial \bu^{(l)} }~,  
\end{equation*}
where $\lambda_i \geq 0$ for all $i$, and $\lambda_i=0$ if $y_i \Phi(\tilde{\btheta};\bx_i)  \neq 1$. 
Letting $\btheta'(\bu^{(l)}) =[\tilde{\bu}^{(1)},\ldots,\tilde{\bu}^{(l-1)},\bu^{(l)},\tilde{\bu}^{(l+1)},\ldots,\tilde{\bu}^{(m)}]$, the above equation can be written as
\[
	\tilde{\bu}^{(l)} 
	= \sum_{i \in [n]} \lambda_i  \frac{\partial \left( y_i \Phi(\btheta'(\tilde{\bu}^{(l)});\bx_i) \right) }{\partial \bu^{(l)} }~,
\]
where $\lambda_i \geq 0$ for all $i$, and $\lambda_i=0$ if $y_i \Phi(\btheta'(\tilde{\bu}^{(l)}); \bx_i) = y_i \Phi(\tilde{\btheta}; \bx_i)  \neq 1$. 
Moreover, if the constraints in Problem~\ref{eq:optimization problem} are satisfies in $\tilde{\btheta}$, then the constrains in Problem~\ref{eq:optimization problem one layer} are also satisfied for every $l \in [m]$ in $\tilde{\bu}^{(l)}$ w.r.t. $\tilde{\btheta}$. Hence, for every $l \in [m]$ the KKT conditions of Problem~\ref{eq:optimization problem one layer} w.r.t. $\tilde{\btheta}$ hold. Since the constraints in Problem~\ref{eq:optimization problem one layer} are affine and the objective is convex, then this KKT point is a global optimum.

\subsection{Proof of Theorem~\ref{thm:negative each layer relu}}
\label{app:proof of negative each layer relu}

Let $\{(\bx_i,y_i)\}_{i=1}^4$ be a dataset such that $y_i=1$ for all $i \in [4]$ and we have $\bx_1 = (0,1)^\top$, $\bx_2 = (1,0)^\top$, $\bx_3 = (0,-1)$ and $\bx_4 = (-1,0)$. In the proof of Theorem~\ref{thm:depth 2 linear} (part 2) we showed that for an appropriate initialization, for both the exponential loss and the logistic loss gradient flow converges to zero loss, and converges in direction to a KKT point $\tilde{\btheta}$ of Problem~\ref{eq:optimization problem}. Moreover, in the proof of Theorem~\ref{thm:depth 2 linear} we showed that the KKT point $\tilde{\btheta}$ is such that for all $j \in [4]$ we have $\tilde{\bw}_j = \bx_j$ and $\tilde{v}_j = 1$.

We show that $\tilde{\bw}_1,\tilde{\bw}_2,\tilde{\bw}_3,\tilde{\bw}_4$ is not a local optimum of Problem~\ref{eq:optimization problem one layer} w.r.t. $\tilde{\btheta}$. It suffices to prove that for every $0 <\epsilon < 1$ there exists some $\btheta'$ such that $v'_j = \tilde{v}_j$ for all $j \in [4]$, $\snorm{\btheta' - \tilde{\btheta}} \leq \epsilon$, $\btheta'$ satisfies the constraints, and $\snorm{\btheta'} < \snorm{\tilde{\btheta}}$. The existence of such $\btheta'$ is shown in the proof of Theorem~\ref{thm:depth 2 linear}. Hence, we conclude the proof of the theorem.

\subsection{Proof of Theorem~\ref{thm:positive each layer relu}}
\label{app:proof of positive each layer relu}

By Theorem~\ref{thm:known KKT} gradient flow converge in direction to a KKT point $\tilde{\btheta}=[\tilde{\bu}^{(l)}]_{l=1}^m$ of Problem~\ref{eq:optimization problem}. Let $l \in [m]$ and assume that for every $i \in [n]$ the inputs to all neurons in layers $l,\ldots,m-1$ in the computation $\Phi(\tilde{\btheta};
	\bx_i)$ are non-zero. We now show that the parameters vector $\tilde{\bu}^{(l)}$ is a local optimum of Problem~\ref{eq:optimization problem one layer} w.r.t. $\tilde{\btheta}$. 

For $i \in [n]$ and $k \in [m-1]$ we denote by $\bx_i^{(k)} \in \reals^{d_k}$ the output of the $k$-th layer in the computation $\Phi(\tilde{\btheta};\bx_i)$, and denote $\bx_i^{(0)}=\bx_i$. 
If $l \in [m-1]$ then we define the following notations.
We denote by $f_l: \reals^{d_{l}} \to \reals$ the function computed by layers $l+1,\ldots,m$ of $\Phi(\tilde{\btheta};\cdot)$. Thus, we have $\Phi(\tilde{\btheta};\bx_i) = f_l(\bx_i^{(l)}) =  f_l \circ \sigma \left(\tilde{W}^{(l)} \bx_i^{(l-1)}\right)$, where $\tilde{W}^{(l)}$ is the weight matrix that corresponds to $\tilde{\bu}^{(l)}$.
For $i \in [n]$ we denote by $h_i$ the function $\bu^{(l)} \mapsto f_l \circ \sigma (W^{(l)} \bx_i^{(l-1)})$ where $W^{(l)}$ is the weights matrix that corresponds to $\bu^{(l)}$. Thus, $\Phi(\tilde{\btheta};\bx_i) = h_i(\tilde{\bu}^{(l)})$.
If $l=m$ then we denote by $h_i$ the function $\bu^{(m)} \mapsto W^{(m)} \bx_i^{(m-1)}$, thus we also have $\Phi(\tilde{\btheta};\bx_i) = h_i(\tilde{\bu}^{(m)})$.

Since $\tilde{\btheta}$ is a KKT point of Problem~\ref{eq:optimization problem}, then there are $\lambda_1,\ldots,\lambda_n$ such that
\begin{equation*}
\label{eq:deep relu known kkt}
	\tilde{\bu}^{(l)} 
	= \sum_{i \in [n]} \lambda_i  \frac{\partial \left( y_i \Phi(\tilde{\btheta};\bx_i) \right) }{\partial \bu^{(l)} }
	= \sum_{i \in [n]} \lambda_i \frac{\partial}{\partial \bu^{(l)} } \left[ y_i \cdot  h_i(\tilde{\bu}^{(l)}) \right]~,
\end{equation*}
where $\lambda_i \geq 0$ for all $i$, and $\lambda_i=0$ if $ y_i \cdot h_i(\tilde{\bu}^{(l)})  \neq 1$.
Note that since the inputs to all neurons in layers $l,\ldots,m-1$ in the computation $\Phi(\tilde{\btheta}; \bx_i)$ are non-zero, then the function $h_i$ is differentiable at $\tilde{\bu}^{(l)}$. Therefore in the above KKT condition we use the derivative rather than the Clarke subdifferential.
Moreover, if the constraints in Problem~\ref{eq:optimization problem} are satisfies in $\tilde{\btheta}$, then the constrains in Problem~\ref{eq:optimization problem one layer} are also satisfied in $\tilde{\bu}^{(l)}$ w.r.t. $\tilde{\btheta}$. 
Hence, 
the KKT condition of Problem~\ref{eq:optimization problem one layer} w.r.t. $\tilde{\btheta}$ holds.

Also, note that since the inputs to all neurons in layers $l,\ldots,m-1$ in the computation $\Phi(\tilde{\btheta}; \bx_i)$ are non-zero, then the function $h_i$ is locally linear near $\tilde{\bu}^{(l)}$. We denote this linear function by $\tilde{h}_i$. Therefore, $\tilde{\bu}^{(l)}$ is a KKT point of the following problem
\[
	\min_{\bu^{(l)}} \frac{1}{2} \norm{\bu^{(l)}}^2 \;\;\;\; \text{s.t. } \;\;\; \forall i \in [n] \;\;\; y_i \tilde{h}_i(\bu^{(l)}) \geq 1~.
\]
Since the constrains here are affine and the objective is convex, then $\tilde{\bu}^{(l)}$ is a global optimum of the above problem. Thus, there is a small ball near $\tilde{\bu}^{(l)}$ where $\tilde{\bu}^{(l)}$ is the optimum of Problem~\ref{eq:optimization problem one layer} w.r.t. $\tilde{\btheta}$, namely, it is a local optimum. 

Finally, note that in the proof of Lemma~\ref{lem:ReLU not global} the parameters vector $\btheta'$ is obtained from $\tilde{\btheta}$ by changing only the first layer. Hence, in ReLU networks gradient flow might converge in direction to a KKT point of Problem~\ref{eq:optimization problem} which is not a global optimum of Problem~\ref{eq:optimization problem one layer}, even if all inputs to neurons are non-zero. 

\stam{
\subsection{Proof of Theorem~\ref{thm:non-homogeneous}}
\label{app:proof of non-homogeneous}

We have
\begin{align*}
	&\frac{d u}{dt} = -\frac{\partial \ell(y \Phi(\btheta; x))}{\partial u} = 	 -\frac{\partial \ell(vw+u)}{\partial u} = -\ell'(vw + u)~.
	\\
	&\frac{d w}{dt} = -\frac{\partial \ell(y \Phi(\btheta; x))}{\partial w} =  -\frac{\partial \ell(vw+u)}{\partial w} = -\ell'(vw + u) \cdot v~.
	\\
	&\frac{d v}{dt} = -\frac{\partial \ell(y \Phi(\btheta; x))}{\partial w} =  -\frac{\partial \ell(vw+u)}{\partial v} = -\ell'(vw + u) \cdot w~.	
\end{align*}
Since we also have $v(0)=w(0)$ then for every $t \geq 0$ we have $v(t)=w(t)$. 
Note that $\ell'(vw+u)<0$ and hence all parameters $u,w,v$ are strictly increasing. Since $v(0)=w(0) > 1$ then for  every $t \geq 0$ we have $v(t)=w(t) > 1$ and hence $\frac{du(t)}{dt} < \frac{dw(t)}{dt} = \frac{dv(t)}{dt}$. Therefore $u(t) \leq v(t)=w(t)$ for all $t$.

We now calculate the derivative of 
\[
	\bar{\gamma}(t) 
	= \frac{v}{\norm{\btheta}} \cdot \frac{w}{\norm{\btheta}} + \frac{u}{\norm{\btheta}}
	= \frac{vw}{v^2+w^2+u^2} + \frac{u}{\sqrt{v^2+w^2+u^2}}~.
\]
We have 
\begin{align*}
	\frac{d}{dt} \left[  \frac{vw}{v^2+w^2+u^2} \right]
	&= \frac{1}{(v^2+w^2+u^2)^2} \left[ 
		\left(-\ell'(vw + u) w^2 -\ell'(vw + u) v^2 \right)(v^2+w^2+u^2) \right.
		\\
	& \;\;\;\;\;\;\;\;\;\; \left. -vw\left(2v(-\ell'(vw + u) w) + 2w (-\ell'(vw + u) v) + 2u ( -\ell'(vw + u)) \right)
	\right]~,
\end{align*}
and by plugging in $w=v$ the above equals
\begin{align*}
	& \frac{1}{(2v^2+u^2)^2} \left[	\left(-2\ell'(v^2 + u) v^2 \right)(2v^2+u^2) + v^2 \ell'(v^2 + u) \left(4v^2 + 2u  \right) \right] 
	\\
	&= \frac{-2\ell'(v^2 + u) v^2}{(2v^2+u^2)^2} \left[  (2v^2+u^2) - \left(2v^2 + u  \right) \right]
	=  \frac{-2\ell'(v^2 + u) v^2  u ( u- 1)}{(2v^2+u^2)^2} ~.
\end{align*}
Next, we have 
\begin{align*}
	\frac{d}{dt} \left[  \frac{u}{\sqrt{v^2+w^2+u^2}} \right]
	&=  \frac{1}{v^2+w^2+u^2} \left[
		\left( -\ell'(vw + u) \right) \sqrt{v^2+w^2+u^2} 
		\right.
		\\
		& \;
		 \left.
		- u \frac{1}{2\sqrt{v^2+w^2+u^2}} \left(2v(-\ell'(vw + u) w) + 2w (-\ell'(vw + u) v) + 2u ( -\ell'(vw + u)) \right)
		\right]
	\\
	&= \frac{-\ell'(vw + u)}{v^2+w^2+u^2} \left[	\sqrt{v^2+w^2+u^2} - u \frac{1}{2\sqrt{v^2+w^2+u^2}} \left(4vw + 2u \right) \right]~,
\end{align*}
and by plugging in $w=v$ the above equals
\begin{align*}
	&\frac{-\ell'(v^2 + u)}{2v^2+u^2} \left[ \sqrt{2v^2+u^2} - u \frac{1}{2\sqrt{2v^2+u^2}} \left(4v^2 + 2u \right) \right]
	\\
	&=\frac{-\ell'(v^2 + u)}{2v^2+u^2} \left[  \frac{2v^2+u^2 - u\left(2v^2 + u \right)}{\sqrt{2v^2+u^2}}  \right] 
	= \frac{2\ell'(v^2 + u) v^2 (u-1)}{2v^2+u^2} \cdot \frac{1}{\sqrt{2v^2+u^2}}~.
\end{align*}
Overall, we have
\begin{align*}
	\frac{d \bar{\gamma}(\btheta)}{dt}
	&= \frac{-2\ell'(v^2 + u) v^2  u ( u- 1)}{(2v^2+u^2)^2} +  \frac{2\ell'(v^2 + u) v^2 (u-1)}{2v^2+u^2} \cdot \frac{1}{\sqrt{2v^2+u^2}}
	\\
	&= \frac{-2\ell'(v^2 + u) v^2 (u - 1)}{2v^2+u^2} \left[ \frac{u}{2v^2+u^2} - \frac{1}{\sqrt{2v^2+u^2}} \right]
	\\
	&= \frac{-2\ell'(v^2 + u) v^2 (u - 1)}{2v^2+u^2} \left[ \frac{u - \sqrt{2v^2+u^2} }{2v^2+u^2} \right]
	< 0~,
\end{align*}
where the inequality is since $u >1$ and $v > 0$.

We now show that $\lim_{t \to \infty} \cl(\btheta(t)) = 0$.
Assume that $\norm{\btheta} \not \to \infty$. Let $M>0$ be such that $vw+u \leq M$ for all $t$. Then, $-\ell'(vw+u) \geq C$ for some constant $C>0$. Thus, $\frac{du}{dt} \geq C$ for all $t$, which implies that $u \to \infty$ in contradiction to our assumption. Hence, we must have $\norm{\btheta} \to \infty$. Since $v=w\geq u \geq 1$ for all $t$, then we have $v \to \infty$ and $w \to \infty$. Therefore $\lim_{t \to \infty} \cl(\btheta(t)) = \lim_{t \to \infty} \ell(vw+u) = 0$.

It is easy to verify that $\bar{\gamma}(\btheta(0)) = \frac{4}{12} + \frac{2}{\sqrt{12}} > 0.9$. 
In order to obtain $\lim_{t \to \infty} \bar{\gamma}(\btheta(t)) = \frac{1}{2}$ we first show that $\lim_{t \to \infty}\frac{u}{v} = 0$. Let $M>4$ be some large constant. We show that there exists some $t'$ such that $\frac{v(t)}{u(t)} \geq M$ for all $t \geq t'$. Since $v \to \infty$, then there is some $t_1$ such that $v(t_1) = 2M$, and $\Delta>0$ such that $v(t_1+\Delta)=2M+M^3$. Since $u \leq v$ then $u(t_1) \leq 2M$. For every $t \geq t_1$ we have $\frac{dv(t)}{dt} = -\ell'(vw+u) \cdot v \geq -\ell'(vw+u) \cdot 2M$. Also, we have $\frac{du(t)}{dt} = -\ell'(vw+u) \leq \frac{1}{2M} \cdot \frac{dv(t)}{dt}$. Hence, we have
\begin{align*}
	u(t_1 + \Delta) 
	&= u(t_1) + \int_{t_1}^{t_1 + \Delta} \frac{du(t)}{dt} dt 
	\leq 2M +  \int_{t_1}^{t_1 + \Delta} \frac{1}{2M} \frac{dv(t)}{dt} dt 
	= 2M + \frac{1}{2M} \left(v(t_1+\Delta)-v(t_1) \right)
	\\
	&= 2M + \frac{1}{2M} \left(2M+M^3-2M \right)
	= 2M + \frac{M^2}{2}
	\leq M^2~.
\end{align*}
Thus,
\[
	\frac{v(t_1+\Delta)}{u(t_1 + \Delta)} \geq \frac{2M+M^3}{M^2} \geq M~.
\]
Denote $t' = t_1 + \Delta$. Note that for every $t \geq t'$ we have
\begin{align*}
	v(t) 
	&= v(t') + \int_{t'}^t \frac{dv}{dt} dt 
	= v(t') + \int_{t'}^t (-\ell'(vw+u)) w dt 
	= v(t') + \int_{t'}^t (-\ell'(vw+u)) v dt 
	\\
	&\geq v(t') + \int_{t'}^t (-\ell'(vw+u)) M^3 dt 
	= v(t') + M^3 \int_{t'}^t \frac{du}{dt}  dt 
	\\
	&\geq M \cdot u(t') + M^3 \left( u(t) - u(t') \right)
	\geq M \cdot u(t') + M \left( u(t) - u(t') \right)
	= M u(t)~.
\end{align*}

We have
\[
	\bar{\gamma}(\btheta(t)) 
	= \frac{vw}{v^2+w^2+u^2} + \frac{u}{\sqrt{v^2+w^2+u^2}} 
	=  \frac{v^2}{2v^2+u^2} + \frac{u}{\sqrt{2v^2+u^2}}~.
\]
Since $\lim_{t \to \infty} \frac{u}{v} = 0$ then $\lim_{t \to \infty}  \frac{u}{\sqrt{2v^2+u^2}} = 0$ and  $\lim_{t \to \infty} \frac{v^2}{2v^2+u^2} = \frac{1}{2}$. Hence,  $\lim_{t \to \infty} \bar{\gamma}(\btheta(t)) = \frac{1}{2}$ as required. 
Finally, since $\lim_{t \to \infty} \frac{u}{v} = 0$ and $w(t)=v(t)>0$ for all $t$, then $\frac{\btheta}{\norm{\btheta}}$ converges to $\left[\frac{1}{\sqrt{2}},\frac{1}{\sqrt{2}},0\right]$. Thus, $\btheta$ converges in direction.
}

\end{document}